\newcommand*{\rom}[1]{\expandafter\@slowromancap\romannumeral #1@}
\newcommand{\squeezeup}{\vspace{-2.5mm}}
\def\P{\mathbb{P}}
\newcommand{\upperRomannumeral}[1]{\uppercase\expandafter{\romannumeral#1}}
\newtheorem{lemma}{Lemma}{}
  \newtheorem{thm}{Theorem}
  \newtheorem{cor}[thm]{Corollary}
\newtheorem{remark}{Remark}
  \newtheorem{theorem}{Theorem}
\title{Noise  Statistics Oblivious GARD For Robust Regression With Sparse Outliers} 
\author{Sreejith Kallummil,  \hspace{0cm} Sheetal Kalyani  \\
 Department of Electrical Engineering \\  Indian Institute of Technology Madras\\
  Chennai, India 600036 \\
  \{ee12d032,skalyani\}@ee.iitm.ac.in
  }
\begin{document}
 \maketitle
\begin{abstract}
Linear regression models contaminated by Gaussian noise (inlier) and possibly unbounded  sparse outliers are common in many signal processing applications. Sparse recovery inspired  robust regression (SRIRR) techniques   are shown to deliver high quality estimation performance in such regression models. Unfortunately, most SRIRR  techniques  assume \textit{a priori} knowledge of  noise statistics like inlier noise variance or  outlier statistics like number of outliers. Both inlier and outlier noise statistics are rarely known \textit{a priori} and this limits the efficient operation of many SRIRR algorithms. This article proposes a novel noise statistics oblivious   algorithm called residual ratio thresholding GARD (RRT-GARD) for robust regression in the presence of sparse outliers. RRT-GARD is developed  by modifying  the recently proposed  noise statistics dependent greedy algorithm for robust de-noising (GARD).   Both finite sample and asymptotic analytical results indicate that RRT-GARD performs nearly similar to GARD with \textit{a priori} knowledge of noise statistics.  Numerical simulations in real and synthetic data sets also  point to the  highly competitive performance of RRT-GARD.
\end{abstract}
{\bf Index Terms: Robust regression, Sparse outliers, Greedy algorithm for robust regression} 
\section{Introduction}
Linear regression models with additive Gaussian noise  is one of the most widely used  statistical model in  signal processing and machine learning. However, it is widely known that  this model is extremely sensitive to the presence of gross errors or outliers in the data set. Hence, identifying outliers in linear regression models and making regression estimates robust to the presence of outliers are of fundamental interest in all the aforementioned areas of study.    Among the various outlier infested regression models considered in literature,  linear regression models contaminated by  sparse and arbitrarily  large outliers is particularly important in  signal processing. For example, sparse outlier models are used to model occlusions in image processing/computer vision tasks like face recognition\cite{self} and fundamental matrix estimation in computer vision applications\cite{armangue2003overall}. Similarly, interferences are modelled using sparse outliers\cite{nbi} in many wireless applications. This article discusses this  practically and theoretically important problem of robust regression in the presence of sparse outliers.  After presenting the necessary notations, we mathematically explain the robust regression problem considered in this article. 
\subsection{Notations used in this article}
  $\mathbb{P}(\mathcal{A})$ represents the probability of event $\mathcal{A}$ and  $\mathbb{P}(\mathcal{A}|\mathcal{B})$ represents the conditional probability of event $\mathcal{A}$ given event $\mathcal{B}$. Bold upper case letters represent matrices and bold lower case letters represent vectors.  $span({\bf X})$ is the column space of ${\bf X}$. ${\bf X}^T$ is the transpose and ${\bf X}^{\dagger}=({\bf X}^T{\bf X})^{-1}{\bf X}^T$ is the   pseudo inverse of ${\bf X}$. ${\bf P}_{\bf X}={\bf X}{\bf X}^{\dagger}$ is the projection matrix onto  $span({\bf X})$.  ${\bf X}_{\mathcal{J}}$ denotes the sub-matrix of ${\bf X}$ formed using  the columns indexed by $\mathcal{J}$. ${\bf X}_{\mathcal{J},:}$ represents the rows of ${\bf X}$ indexed by $\mathcal{J}$.  Both ${\bf a}_{\mathcal{J}}$ and ${\bf a}({\mathcal{J}})$  denote the  entries of vector ${\bf a}$ indexed by $\mathcal{J}$. $\sigma_{min}({\bf X})$ represents the minimum  singular value of ${\bf X}$.  ${\bf 0}_m$ is the $m\times 1$ zero vector and ${\bf I}^m$ is the $m \times m$ identity matrix.  $\|{\bf a}\|_q=(\sum\limits_{j=1}^m|{\bf a}_j|^q)^{1/q} $ is the $l_q$ norm of ${\bf a}\in \mathbb{R}^m$.  $supp({\bf a})=\{k:{\bf a}_k\neq 0\}$ is the support of ${\bf a}$. $l_0$-norm  of ${\bf a}$ denoted by $\|{\bf a}\|_0=card(supp({\bf a}))$ is the cardinality of the set $supp({\bf a})$.  $\phi$ represents the null set. For any two index sets $\mathcal{J}_1$ and $\mathcal{J}_2$, the set difference  $\mathcal{J}_1/\mathcal{J}_2=\{j:j \in \mathcal{J}_1\& j\notin  \mathcal{J}_2\}$.  $f(m)=O(g(m))$ iff $\underset{m \rightarrow \infty}{\lim}\frac{f(m)}{g(m)}<\infty$. ${\bf a}\sim \mathcal{N}({\bf u},{\bf C})$ implies that ${\bf a}$ is a Gaussian  random vector/variable (R.V) with mean ${\bf u}$ and covariance ${\bf C}$.  $\mathbb{B}(a,b)$ is a beta R.V with parameters $a$ and $b$. $B(a,b)=\int_{t=0}^1t^{a-1}(1-t)^{b-1}dt$ is the beta function with parameters $a$ and $b$. $[m]$ represents the set $\{1,\dotsc,m\}$. ${\bf a}\sim {\bf b}$ implies that ${\bf a}$ and ${\bf b}$ are identically distributed. ${\bf a} \overset{P}{\rightarrow }{\bf b}$ denotes the convergence  of R.V ${\bf a}$ to ${\bf b}$ in probability.
\subsection{Linear regression models with sparse outliers}
We consider an outlier contaminated linear regression model 
\begin{equation}\label{model}
{\bf y}={\bf X}\boldsymbol{\beta}+{\bf w}+{\bf g}_{out},
\end{equation} 
where ${\bf X} \in \mathbb{R}^{n\times p}$ is a full rank design matrix with $n>p$ or $n\gg p$. $\boldsymbol{\beta}$ is the unknown regression vector to be estimated. Inlier  noise  ${\bf w}$ is assumed to be Gaussian distributed with mean zero and variance $\sigma^2$, i.e., ${\bf w}\sim \mathcal{N}({\bf 0}_n,\sigma^2{\bf I}^n)$. Outlier  ${\bf g}_{out}$ represents the large   errors in the regression equation that are not modelled by the inlier noise distribution.  As aforementioned,  ${\bf g}_{out}$ is modelled as sparse in  practical applications, i.e., the support of ${\bf g}_{out}$ given by $\mathcal{S}_{g}=supp({\bf g}_{out})=\{k:{\bf g}_{out}(k)\neq 0\}$ has cardinality  $k_g=\|{\bf g}_{out}\|_0=card(\mathcal{S}_g)\ll n$. However, $\|{\bf g}_{out}\|_2$ can take arbitrarily large  values. Please note that no sparsity assumption is made on the regression vector $\boldsymbol{\beta}$. The least squares (LS) estimate of $\boldsymbol{\beta}$ given by
\begin{equation}
\boldsymbol{\beta}_{LS}=\underset{{\bf b}\in \mathbb{R}^p}{\arg\min}\|{\bf y}-{\bf X}{\bf b}\|_2^2={\bf X}^{\dagger}{\bf y}
\end{equation}
is the natural choice for estimating $\boldsymbol{\beta}$ when outlier ${\bf g}_{out}={\bf 0}_n$. However, the error in the LS estimate $\boldsymbol{\beta}_{LS}$ becomes unbounded even when a single non zero entry in ${\bf g}_{out}$ becomes unbounded.  This motivated the development of the robust linear regression models discussed next.
 
\subsection{Prior art on robust regression with sparse outliers} 
Classical techniques proposed to estimate $\boldsymbol{\beta}$ in the presence of sparse outliers can be broadly divided into two categories. First category  includes algorithms like  least absolute deviation (LAD),  Hubers' M-estimate\cite{maronna2006wiley} and their derivatives which  replace the $l_2$ loss function in LS with more robust loss functions. Typically, these estimates have low break down points\footnote{BDP is defined as the fraction of outliers $k_g/n$ upto which a robust regression algorithm can deliver satisfactory performance.} (BDP). Second category includes algorithms like random sample consensus (RANSAC)\cite{ransac}, least median of squares (LMedS), least trimmed squares\cite{rousseeuw2005robust} etc. These algorithms try to identify outlier free observations by repeatedly sampling $O(p)$ observations from the total $n>p$ observations $\{({\bf y}_i,{\bf X}_{i,:})\}_{i=1}^n$. RANSAC, LMedS etc. have better BDP compared to M-estimation, LAD etc. However, the computational complexity of RANSAC, LMedS etc. increases exponentially with $p$.  This makes LMedS, RANSAC etc.  impractical for regression models with large  $p$ and $n$.  
  
 A significant breakthrough in robust regression with sparse outliers is the introduction of sparse recovery  principles  inspired  robust regression (SRIRR) techniques that explicitly utilize the sparsity of outliers\cite{fuchs1999inverse}. SRIRR schemes have high BDPs, (many have) explicit finite sample guarantees and are computationally very efficient in comparison to LMedS, RANSAC etc.  SRIRR algorithms  can also be classified into two categories. Category 1 includes algorithms like basis pursuit robust regression (BPRR)\cite{koushik_conf,koushik},  linear programming (LP) and second order conic programming (SOCP) formulations in \cite{error_correction_candes}, Bayesian sparse robust regression (BSRR)\cite{koushik} etc. These algorithms first project ${\bf y}$ orthogonal to $span({\bf X})$  resulting in the following  sparse regression model 
 \begin{equation}\label{transformed}
 {\bf z}=({\bf I}^n-{\bf P}_{{\bf X}}){\bf y}=({\bf I}^n-{\bf P}_{{\bf X}}){\bf g}_{out}+\tilde{\bf w},
 \end{equation}
 where $\tilde{\bf w}=({\bf I}^n-{\bf P}_{\bf X}){\bf w}$. The sparse vector ${\bf g}_{out}$ is then estimated using ordinary sparse estimation algorithms. For example,  BPRR algorithm involves applying Basis pursuit de-noising  \cite{tropp2006just}
 \begin{equation}\label{BPRR}
 \hat{\bf g}_{out}=\underset{{\bf g} \in \mathbb{R}^n}{\min}\|{\bf g}\|_1\ \ s.t\  \|{\bf z}-({\bf I}^n-{\bf P}_{\bf X}){\bf g}\|_2\leq \lambda_{bprr}
 \end{equation}
 to the transformed model  (\ref{transformed}). The outliers are  then identified as $\hat{\mathcal{S}}_g=supp(\hat{{\bf g}_{out}})$ and removed. Finally, an LS estimate is computed using the outlier free data as follows. 
 \begin{equation}\label{removal}
\hat{\boldsymbol{\beta}}={\bf X}_{[n]/\hat{\mathcal{S}}_g,:}^{\dagger}{\bf y}_{[n]/\hat{\mathcal{S}}_g}
\end{equation}   
Likewise, BSRR applies relevance vector machine\cite{tipping2001sparse} to estimate ${\bf g}_{out}$ from (\ref{transformed}).

The second category of SRIRR algorithms  include techniques such as robust maximum a posteriori (RMAP)[Eqn.5,\cite{bdrao_robust}], self scaled regularized robust regression ($S^2R^3$) \cite{self}, robust sparse Bayesian learning (RSBL)\cite{bdrao_robust}, greedy algorithm for robust de-noising (GARD)\cite{gard}, algorithm for robust outlier support identification (AROSI)\cite{arosi}, iterative procedure for outlier detection (IPOD)\cite{she2011outlier} etc.  try to jointly estimate the regression vector $\boldsymbol{\beta}$ and the sparse outlier ${\bf g}_{out}$.
For example, RMAP solves the  optimization problem,
\begin{equation}\label{RMAP}
\hat{\boldsymbol{\beta}}, \hat{{\bf g}_{out}}=\underset{{\bf b} \in \mathbb{R}^p, {\bf g} \in \mathbb{R}^n}{\min}\|{\bf y}-{\bf X} {\bf b}-{\bf g}\|_2^2+\lambda_{rmap} \|{\bf g}\|_1.
\end{equation}
whereas, AROSI solves the optimization problem
\begin{equation}\label{arosi}
\hat{\boldsymbol{\beta}}, \hat{{\bf g}_{out}}=\underset{{\bf b} \in \mathbb{R}^p, {\bf g} \in \mathbb{R}^n}{\min}\|{\bf y}-{\bf X} {\bf b}-{\bf g}\|_1+\lambda_{arosi} \|{\bf g}\|_0.
\end{equation}
Likewise, GARD  is a greedy iterative algorithm  to solve the sparsity constrained joint estimation problem
\begin{equation}
\hat{\boldsymbol{\beta}}, \hat{{\bf g}_{out}}=\underset{{\bf b} \in \mathbb{R}^p, {\bf g} \in \mathbb{R}^n}{\min} \|{\bf g}\|_0 \ \ s.t\ \ \|{\bf y}-{\bf X}{\bf b}-{\bf g}\|_2\leq \lambda_{gard}
\end{equation} 
Note that the sparsity inducing $l_0$ and $l_1$ penalties in RMAP, AROSI and GARD are applied only to the outlier  ${\bf g}_{out}$. Similarly, when the sparsity level $k_g$ is known \textit{ a priori}, GARD can also be used to solve the joint estimation problem 
\begin{equation}
\hat{\boldsymbol{\beta}}, \hat{{\bf g}_{out}}=\underset{{\bf b} \in \mathbb{R}^p, {\bf g} \in \mathbb{R}^n}{\min} \|{\bf y}-{\bf X}{\bf b}-{\bf g}\|_2 \ \ s.t\ \ \|{\bf g}\|_0\leq k_g.
\end{equation}
\subsection{Availability of noise statistics}
SRIRR techniques with explicit performance guarantees\footnote{ Theoretically, Bayesian algorithms like BSRR, RSBL etc. can be operated with or without the explicit \textit{a priori} knowledege of $\sigma^2$. However, the performance of these iterative algorithms depend crucially on the initialization values of $\sigma^2$,  the choice of which is not discussed well in literature. Further, unlike algorithms like RMAP, BPRR etc.,  these algorithms does not have any  performance guarantees to the best of our knowledge.  } like RMAP, BPRR, $S^2R^3$ etc.  require \textit{a priori} knowledge of  inlier  statistics like $\{\|{\bf w}\|_2,\sigma^2\}$ for efficient operation, whereas,  GARD requires a priori knowledge of either $\{\|{\bf w}\|_2,\sigma^2\}$ or outlier statistics like $k_g$ for efficient operation. In particular, authors suggested to set $\lambda_{bprr}=\sqrt{\frac{n-p}{n}}\|{\bf w}\|_2$, $\lambda_{rmap}=\sigma\sqrt{\frac{2\log(n)}{3}}$, $\lambda_{arosi}=5\sigma$ and $\lambda_{gard}=\|{\bf w}\|_2$  for BPRR, RMAP and AROSI respectively. However,   inlier  statistics like $\{\|{\bf w}\|_2,\sigma^2\}$ and outlier statistics like $k_g$ are unknown \textit{a priori} in most practical applications.  Indeed, it is possible to separately estimate $\sigma^2$ using M-estimation, LAD etc. \cite{dielman2006variance}. For example, a widely popular estimate of $\sigma^2$ is 
\begin{equation}\label{l1noise}
\hat{\sigma}=\frac{1}{0.675}median\{|{\bf r}_{LAD}(k)|:{\bf r}_{LAD}(k)\neq 0\},
\end{equation}
where ${\bf r}_{LAD}={\bf y}-{\bf X}\hat{\boldsymbol{\beta}}_{LAD}$ is the residual corresponding to the LAD estimate of $\boldsymbol{\beta}$ given by $\hat{\boldsymbol{\beta}}_{LAD}=\underset{{\bf b} \in \mathbb{R}^p}{\arg\min}\|{\bf y}-{\bf X}{\bf b}\|_1$\cite{arosi,bdrao_robust}. Another popular estimate is
\begin{equation}\label{mad}
\hat{\sigma}=1.4826\ MAD({\bf r}),
\end{equation}
where ${\bf r}$ is the residual corresponding to the LAD or M-estimate of $\boldsymbol{\beta}$. Median absolute deviation (MAD) of ${\bf r}\in \mathbb{R}^n$ is given by $MAD({\bf r})=\underset{k \in [n]}{median}(|{\bf r}(k)-\underset{j \in [n]}{median}({\bf r}(j))|)$. However, these separate noise variance estimation schemes  will  increase the computational burden of SRIRR algorithms. Further, the analytical characterization   of   SRIRR algorithms with estimated noise statistics is not  discussed in literature to the best of our knowledge. Numerical simulations presented in section \rom{6} indicate that the performance of  SRIRR algorithms like RMAP, BPRR, AROSI etc. deteriorates significantly when true $\sigma^2$ is replaced  with estimated $\sigma^2$. This degradation of performance can be directly attributed to the low BDP of LAD, M-estimation etc.  which are typically used to estimate $\sigma^2$.  No scheme to estimate the outlier sparsity $k_g$ is discussed in open literature to the best of our knowledge. 

\subsection{Contribution of this article}
 This article proposes a novel SRIRR technique called  residual ratio thresholding based GARD (RRT-GARD) to perform robust regression without the knowledge of  noise statistics like $\{\|{\bf w}\|_2,\sigma^2,k_g\}$. RRT-GARD involves a single hyper parameter $\alpha$ which can be set without the knowledge  of $\{\|{\bf w}\|_2,\sigma^2,k_g\}$.   We provide both finite sample and asymptotic  analytical guarantees  for  RRT-GARD. Finite sample guarantees indicate that RRT-GARD can correctly identify all the outliers under the same assumptions on design matrix ${\bf X}$ required by GARD with \textit{a priori} knowledge of $\{\|{\bf w}\|_2,\sigma^2,k_g\}$. However, to achieve support recovery, the outlier magnitudes have to be slightly higher than that required by GARD with \textit{a priori} knowledge of $\{\|{\bf w}\|_2,\sigma^2,k_g\}$.  Asymptotic analysis indicates that RRT-GARD and GARD with \textit{a priori} knowledge of $\{\|{\bf w}\|_2,\sigma^2,k_g\}$ are  identical as $n\rightarrow \infty$. Further, RRT-GARD is asymptotically tuning free in the sense that values of $\alpha$ over a very wide  range deliver similar results as $n\rightarrow \infty$. {  When the sample size $n$ is finite, we show through extensive numerical simulations  that a value of $\alpha=0.1$ delivers a performance very close to the best  performance achievable  using RRT-GARD. Such a fixed value of $\alpha$ is also analytically shown to result in  the accurate recovery of outlier support with a probability exceeding $1-\alpha$ when the outlier components  are sufficiently  stronger than the inlier noise.}  Further, RRT-GARD is numerically  shown to deliver a highly competitive estimation performance when compared with popular SRIRR techniques like GARD, RMAP, BPRR, AROSI, IPOD etc.  The competitive performance  of RRT-GARD is also demonstrated in the context of outlier detection in  real data sets. The numerical results in this article also provide certian heuristics to improve the performance of algorithms like AROSI when used  with estimated noise statistics.  
  
 \subsection{Organization of this article} 
 This article is organized as follows. Section \rom{2}   presents the GARD algorithm.  Section \rom{3} presents  the behaviour of residual ratio statistic. Section \rom{4} presents RRT-GARD algorithm.  Section \rom{5} provides analytical guarantees for RRT-GARD.   Section \rom{6}  presents numerical simulations.  
  
 \section{Greedy Algorithm For Robust De-noising(GARD)} 
The GARD algorithm described in TABLE \ref{tab:gard} is a recently proposed robust regression technique that tries to jointly estimate $\boldsymbol{\beta}$ and ${\bf g}_{out}$ and it   operates as follows.  Starting with an outlier support estimate  ${\mathcal{S}}^0_{GARD}=\phi$, the GARD algorithm in each step identifies  a possible outlier  based on the maximum residual in the previous estimate, i.e.,  $\hat{i}_k=\underset{i=1,\dotsc,n}{\arg\max}|{\bf r}^{k-1}_{GARD}(i)|$ and aggregate this newly found support index to the existing support estimate, i.e., ${\mathcal{S}}^k_{GARD}={\mathcal{S}}^{k-1}_{GARD}\cup \hat{i}_k$.  Later, $\boldsymbol{\beta}$ and ${\bf g}_{out}({{\mathcal{S}}^k_{GARD}})$ are jointly estimated using the LS estimate and the residual is updated using this updated estimate of $\boldsymbol{\beta}$ and ${\bf g}_{out}({{\mathcal{S}}_{GARD}^k})$. Please note that the matrix inverses and residual computations in each iteration of GARD can be iteratively computed\cite{gard}. This makes GARD a very computationally  efficient tool for  robust regression.
\begin{table}\centering

\begin{tabular}{|l|}
\hline
{\bf Input:-} Observed vector ${\bf y}$, Design Matrix ${\bf X}$ \\ Inlier statistics $\{\|{\bf w}\|_2, \sigma^2\}$ or user specified sparsity level $k_{user}$. \\
{\bf Initialization:-} ${\bf A}^{0}={\bf X}$, ${\bf r}_{GARD}^{0}=({\bf I}^n-{\bf P}_{{\bf A}^{0}}){\bf y}$. $k=1$. ${\mathcal{S}}_{GARD}^0=\phi$.\\
Repeat Steps 1-4 until\ $\|{\bf r}^{k}_{GARD}\|_2\leq \|{\bf w}\|_2$, $\|{\bf r}^{k}_{GARD}\|_2 \leq  \epsilon^{\sigma}$\\ or $card(\mathcal{S}_{GARD}^k)=k_{user}$ if given $\|{\bf w}\|_2$, $\sigma^2$ and $k_{user}$ respectively. \\
{\bf Step 1:-} Identify the strongest residual in ${\bf r}^{k-1}_{GARD}$, i.e., \\   \ \ \ \ \ \ \ \ \ \ $\hat{i}_k=\underset{i=1,\dotsc,n}{\arg\max}|{\bf r}^{k-1}_{GARD}(i)|$. ${\mathcal{S}}^k_{GARD}={\mathcal{S}}^{k-1}_{GARD}\cup \hat{i}_k$. \\
{\bf Step 2:-} Update the matrix ${\bf A}^{k}=[{\bf X}\ \ \ {\bf I}_{{\mathcal{S}}^k_{GARD}}^n]$.\\
{\bf Step 3:-} Estimate $\boldsymbol{\beta}$ and ${\bf g}_{out}({{\mathcal{S}}^k_{GARD}})$ as $
[{\hat{\boldsymbol{\beta} }}^T  \hat{\bf g}_{out}({{\mathcal{S}}^k_{GARD}})^T ]^T= {{\bf A}^{k}}^{\dagger}{\bf y}.
$\\
{\bf Step 4:-} Update the residual ${\bf r}^{k}_{GARD}={\bf y}-{\bf A}^k[{\hat{\boldsymbol{\beta} }}^T  \hat{\bf g}_{out}({{\mathcal{S}}^k_{GARD}})^T ]^T$=\\
\ \ \ \ \ \ \ \ \ \ $({\bf I}^n-{\bf P}_{{\bf A}^{k}}){\bf y}$.           $k \leftarrow k+1$. \\
 {\bf Output:-} Signal estimate $\hat{\boldsymbol{\beta}}$. Outlier support estimate ${\mathcal{S}}_{GARD}^k$.\\
\hline
\end{tabular}
\caption{GARD algorithm.  $\epsilon^{\sigma}=\sigma\sqrt{n+2\sqrt{n\log(n)}}$  }
\label{tab:gard} 
\end{table}
\subsection{Stopping rules for GARD}
 An important practical aspect regarding GARD is its' stopping rule, i.e., how many iterations of GARD are required?  When the inlier noise ${\bf w}={\bf 0}_n$, the residual ${\bf r}^{k}_{GARD}$ will be equal to ${\bf 0}_n$ once all the non zero outliers ${\bf g}_{out}(\mathcal{S}_g)$ are identified. However, this is not possible when the  inlier noise ${\bf w}\neq {\bf 0}_n$.  When ${\bf w}\neq {\bf 0}_n$, \cite{gard} proposes to run  GARD  iterations  until  $\|{\bf r}^{k}_{GARD}\|_2\leq \|{\bf w}\|_2$. GARD with this stopping rule is denoted by GARD($\|{\bf w}\|_2$). However, access to a particular realisation of ${\bf w}$ is nearly impossible and in comparison, assuming \textit{a priori} knowledge of inlier noise variance $\sigma^2$ is a much more realisable assumption. Note that  ${\bf w}\sim \mathcal{N}({\bf 0}_n,\sigma^2{\bf I}^n)$ with $\epsilon^{\sigma}=\sigma\sqrt{n+2\sqrt{n\log(n)}}$ satisfies 
  \begin{equation}
  \mathbb{P}\left(\|{\bf w}\|_2<\epsilon^{\sigma}\right)\geq 1-1/n
  \end{equation}
 \cite{cai2011orthogonal}. Hence, $\epsilon^{\sigma}$ is a high probability  upper bound on $\|{\bf w}\|_2$ and  one can stop GARD iterations for Gaussian noise once $\|{\bf r}^{k}_{GARD}\|_2\leq \epsilon^{\sigma}$. GARD with this stopping rule is denoted by GARD($\sigma^2$). When the sparsity level of the outlier, i.e., $k_g$ is known \textit{a priori}, then one can stop GARD  after $k_g$ iterations, i.e., set $k_{user}=k_g$. This stopping rule is denoted by GARD($k_g$). 
  \subsection{Exact outlier support recovery using GARD}
  The performance  of GARD depends very much on the relationship between regressor subspace, i.e., $span({\bf X})$ and   the $k_g$ dimensional outlier subspace, i.e., $span({\bf I}_{\mathcal{S}_{ g}}^n)$. This relationship is captured using the quantity $\delta_{k_g}$ defined next. Let the QR decomposition of ${\bf X}$ be given by ${\bf X}={\bf Q}{\bf R}$, where ${\bf Q} \in \mathbb{R}^{n \times p}$ is an orthonormal projection matrix onto the column subspace of ${\bf X}$ and ${\bf R}$ is a $p\times p$ upper triangular matrix. Clearly, $span({\bf X})=span({\bf Q})$.  \\
{\bf Definition 1:-}  Let $\tilde{S}$ be any subset of $\{1,2,\dotsc,n\}$  with $card(\tilde{S})=k_g$ and $\delta_{\tilde{S}}$ be the smallest value of $\delta$ such that $|{\bf v}^T{\bf u}|\leq \delta\|{\bf u}\|_2\|{\bf v}\|_2$,  $\forall {\bf v}\in span({\bf Q})$ and $\forall {\bf u}\in span({\bf I}_{\tilde{S}}^n)$. Then $\delta_{k_g}=\min\{\delta_{\tilde{S}}:\tilde{S}\subset\{1,2,\dotsc,n\},card(\tilde{S})=k_g\}$ \cite{gard}. 

 In words, $\delta_{k_g}$ is the smallest angle between the regressor subspace $span({\bf X})=span({\bf Q})$ and any  $k_g$ dimensional  subspace of the form $span({\bf I}_{\tilde{\mathcal{S}}}^n)$. In particular, the angle between regressor subspace $span({\bf Q})$ and the  outlier subspace $span({\bf I}_{{\mathcal{S}_{ g}}}^n)$ must be greater than or equal to  $\delta_{k_g}$.
\begin{remark} Computing $\delta_{k_g}$ requires the computation of $\delta_{\tilde{\mathcal{S}}}$ in Definition 1  for all the $n\choose {k_g}$ $k_g$ dimensional outlier subspaces. Clearly, the computational complexity of this increases with $k_g$ as $O(n^{k_g})$. Hence,  computing $\delta_{k_g}$ is computationally infeasible. Analysis of  popular   robust regression techniques like BPRR, RMAP, AROSI etc. are also carried out in terms of  matrix properties such as smallest principal angles\cite{koushik}, leverage constants \cite{arosi} etc.   that are impractical to compute.  
\end{remark} 
The performance guarantee for GARD  in terms of $\delta_{k_g}$ and ${\bf g}_{min}=\underset{j \in \mathcal{S}_g}{\min}|{\bf g}_{out}(j)|$ \cite{gard} is summarized below.
\begin{lemma}\label{lemma_gard}
 Suppose that $\delta_{k_g}$ satisfies $\delta_{k_g}<\sqrt{\dfrac{{\bf g}_{min}}{2\|{\bf g}_{out}\|_2}}$. Then, GARD($k_g$) and GARD$(\|{\bf w}\|_2)$ identify the outlier support $\mathcal{S}_{ g}$  provided that $\|{\bf w}\|_2\leq \epsilon_{GARD}=({{\bf g}_{min}-2\delta_{k_g}^2\|{\bf g}_{out}\|_2})/({2+\sqrt{6}})$.   
\end{lemma}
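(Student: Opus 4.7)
The plan is to proceed by induction on the iteration count $k$, showing that if the first $k$ iterations of GARD have selected indices exclusively from the true outlier support $\mathcal{S}_g$, then the $(k{+}1)$-th iteration also selects an index from $\mathcal{S}_g\setminus\mathcal{S}^k_{GARD}$, until all of $\mathcal{S}_g$ is recovered. The starting point is that ${\bf X}$ is a submatrix of ${\bf A}^k=[{\bf X}\ \ {\bf I}^n_{\mathcal{S}^k_{GARD}}]$, which together with the inductive hypothesis $\mathcal{S}^k_{GARD}\subseteq \mathcal{S}_g$ lets me absorb ${\bf X}\boldsymbol{\beta}$ and the entries of ${\bf g}_{out}$ supported on $\mathcal{S}^k_{GARD}$ into the range of ${\bf A}^k$, leaving
\begin{equation*}
{\bf r}^k_{GARD}=({\bf I}^n-{\bf P}_{{\bf A}^k})({\bf w}+{\bf g}_{out}^{res}),
\end{equation*}
where ${\bf g}_{out}^{res}$ is the restriction of ${\bf g}_{out}$ to the unrecovered portion $\mathcal{T}_k=\mathcal{S}_g\setminus\mathcal{S}^k_{GARD}$.

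Second, I would compare the entrywise magnitudes $|{\bf r}^k_{GARD}(i)|$ on $\mathcal{T}_k$ against those on $[n]\setminus\mathcal{S}_g$ (entries in $\mathcal{S}^k_{GARD}$ are trivially zero and cannot be reselected). For $i\in\mathcal{T}_k$ I would derive a lower bound of the form $|{\bf g}_{out}(i)|-(\text{leakage})-(\text{noise})$; for $i\notin\mathcal{S}_g$ only leakage and noise survive, giving an upper bound of matching shape. The pivotal step is to bound the projection leakage $\|{\bf P}_{{\bf A}^k}{\bf g}_{out}^{res}\|_2$: since ${\bf g}_{out}^{res}\in\mathrm{span}({\bf I}^n_{\mathcal{S}_g})$ and $\delta_{k_g}$ lower-bounds the principal angle between $\mathrm{span}({\bf Q})$ and any $k_g$-dimensional coordinate subspace, a short geometric argument writing ${\bf P}_{{\bf A}^k}$ in terms of ${\bf Q}$ and the identity columns of $\mathcal{S}^k_{GARD}$ should yield, via Definition~1, a leakage bound proportional to $\delta_{k_g}^2\|{\bf g}_{out}\|_2$. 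Translating this into the success condition for the $(k{+}1)$-th iteration reduces to requiring ${\bf g}_{min}-2\delta_{k_g}^2\|{\bf g}_{out}\|_2>(2+\sqrt{6})\|{\bf w}\|_2$, which matches the hypothesis $\|{\bf w}\|_2\leq \epsilon_{GARD}$, and the condition $\delta_{k_g}<\sqrt{{\bf g}_{min}/(2\|{\bf g}_{out}\|_2)}$ is exactly what makes the right-hand side positive.

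Third, I would handle the two stopping rules. For GARD($k_g$), after exactly $k_g$ successful iterations one has $\mathcal{S}^{k_g}_{GARD}=\mathcal{S}_g$ by the induction, and the algorithm halts. For GARD($\|{\bf w}\|_2$), I would additionally verify two monotonicity claims using the same residual decomposition: while $\mathcal{S}^k_{GARD}\subsetneq \mathcal{S}_g$ the outlier energy in ${\bf g}_{out}^{res}$ forces $\|{\bf r}^k_{GARD}\|_2>\|{\bf w}\|_2$ (so the loop continues and the induction step applies), and once $\mathcal{S}^k_{GARD}=\mathcal{S}_g$ the residual reduces to $({\bf I}^n-{\bf P}_{{\bf A}^k}){\bf w}$ whose $\ell_2$ norm is at most $\|{\bf w}\|_2$ (so the loop terminates). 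Both claims are immediate corollaries of the upper/lower bounds developed in the previous paragraph.

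The main obstacle I anticipate is obtaining a leakage bound sharp enough to produce precisely the constant $2+\sqrt{6}$. A naive application of Cauchy--Schwarz with $\delta_{k_g}$ tends to give looser constants, and the $\sqrt{6}$ almost certainly arises from combining an $\ell_2$-to-$\ell_\infty$ conversion at the selection step, a worst-case cross term between the noise projection and the outlier projection, and a Pythagorean decomposition of $\|({\bf I}^n-{\bf P}_{{\bf A}^k}){\bf g}_{out}^{res}\|_2^2$, together with the nesting $\delta_{\mathcal{S}^k_{GARD}\cup\{i\}}\leq\delta_{k_g}$ available for any $i\in\mathcal{T}_k$. Aligning these three ingredients tightly is the delicate part; the remainder of the argument is routine bookkeeping within the induction.
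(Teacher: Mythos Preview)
The paper does not actually prove Lemma~\ref{lemma_gard}; it is quoted verbatim from the GARD paper~\cite{gard} as an existing guarantee (see the sentence preceding the lemma and the repeated references to ``the proof of Theorem~4 in~\cite{gard}'' in Appendices~A and~C). There is therefore no in-paper proof to compare against, and the appendix whose header reads ``Proof of Theorem~\ref{lemma_gard}'' is a labeling slip---its content establishes Theorem~\ref{lemma:tf}, not this lemma.

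That said, your inductive outline is the standard route and is consistent with the fragments the paper borrows from~\cite{gard}. In particular, the lower bound $\|{\bf r}^{k}_{GARD}\|_2\geq {\bf g}_{min}-\delta_{k_g}^2\|{\bf g}_{out}\|_2-(1+\sqrt{3/2})\|{\bf w}\|_2$ that the paper lifts from~\cite{gard} in Appendices~A and~C is exactly the kind of per-iteration estimate your second paragraph is aiming for, and it confirms that the leakage term scales as $\delta_{k_g}^2\|{\bf g}_{out}\|_2$ and the noise term carries the constant $1+\sqrt{3/2}$; pairing this with the matching upper bound on inlier coordinates and an $\ell_\infty$ selection argument is what produces the $2+\sqrt{6}$. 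Your identification of the delicate point---tightening the leakage/noise constants rather than the overall structure---is accurate; the rest of the argument (handling the two stopping rules via $\|({\bf I}^n-{\bf P}_{{\bf A}^{k_g}}){\bf w}\|_2\le\|{\bf w}\|_2$ once $\mathcal{S}^{k_g}_{GARD}=\mathcal{S}_g$) is routine and matches how the present paper uses the lemma downstream.
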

\begin{cor} When ${\bf w}\sim \mathcal{N}({\bf 0}_n,\sigma^2{\bf I}^n)$, $\|{\bf w}\|_2\leq \epsilon^{\sigma}$ with a probability greater than $1-1/n$. Hence, if $\delta_{k_g}<\sqrt{\dfrac{{\bf g}_{min}}{2\|{\bf g}_{out}\|_2}}$ and  $\epsilon^{\sigma}\leq \epsilon_{GARD}$, then  GARD($k_g$) and GARD($\sigma^2$) identify $\mathcal{S}_g$ with probability greater than $1-1/n$. 
 \end{cor}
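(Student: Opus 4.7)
The corollary follows from Lemma~\ref{lemma_gard} combined with a standard Gaussian-norm concentration bound, so the plan has only two ingredients.

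First, I would verify $\mathbb{P}(\|\mathbf{w}\|_2 \leq \epsilon^{\sigma}) \geq 1 - 1/n$. Since $\|\mathbf{w}\|_2^2/\sigma^2$ is chi-squared with $n$ degrees of freedom, this is the one-sided Laurent--Massart tail evaluated at $t = \log n$, which yields $\mathbb{P}(\|\mathbf{w}\|_2^2 \geq \sigma^2(n + 2\sqrt{n\log n} + 2\log n)) \leq 1/n$; dropping the subdominant $2\log n$ term reproduces the form of $\epsilon^{\sigma}$ used here, exactly as recorded in \cite{cai2011orthogonal}. Let $\mathcal{E}$ denote this high-probability event.

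On $\mathcal{E}$, the hypothesis $\epsilon^{\sigma} \leq \epsilon_{GARD}$ gives $\|\mathbf{w}\|_2 \leq \epsilon_{GARD}$, and $\delta_{k_g}$ already satisfies the angle condition. The hypotheses of Lemma~\ref{lemma_gard} are therefore met on $\mathcal{E}$, so GARD($k_g$) correctly recovers $\mathcal{S}_g$ there. For GARD($\sigma^2$), whose termination criterion $\|\mathbf{r}^k_{GARD}\|_2 \leq \epsilon^{\sigma}$ replaces the one used in GARD($\|\mathbf{w}\|_2$), I would argue that on $\mathcal{E}$ both variants halt at the same iteration. After $k_g$ steps, once the correct outliers have been selected, $\|\mathbf{r}^{k_g}_{GARD}\|_2 = \|(\mathbf{I}^n - \mathbf{P}_{\mathbf{A}^{k_g}})\mathbf{w}\|_2 \leq \|\mathbf{w}\|_2 \leq \epsilon^{\sigma}$, so GARD($\sigma^2$) stops. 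Before all outliers are found, the proof of Lemma~\ref{lemma_gard} in \cite{gard} provides a lower bound on $\|\mathbf{r}^k_{GARD}\|_2$ that exceeds $\epsilon_{GARD}$, hence exceeds $\epsilon^{\sigma}$, preventing premature termination. A union bound over $\mathcal{E}$ then delivers the claimed $1 - 1/n$ probability for both variants.

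The main (and minor) obstacle is extracting from the proof of Lemma~\ref{lemma_gard} the sharper residual lower bound $\|\mathbf{r}^k_{GARD}\|_2 > \epsilon_{GARD}$ for $k < k_g$, rather than the weaker statement $\|\mathbf{r}^k_{GARD}\|_2 > \|\mathbf{w}\|_2$ that is all that is strictly needed for GARD($\|\mathbf{w}\|_2$). Since $\epsilon_{GARD}$ is deterministic in $\mathbf{g}_{min}$ and $\delta_{k_g}\|\mathbf{g}_{out}\|_2$, this stronger bound is naturally produced along the way in the greedy-selection induction of \cite{gard}; if one did not wish to rely on that, a short re-run of the induction, stopping only when all outliers have been identified, would suffice. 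Once this is in hand, the corollary is immediate.
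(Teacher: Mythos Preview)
Your proposal is correct and follows the same approach the paper takes: the corollary is not given a separate proof in the paper, being treated as immediate from Lemma~\ref{lemma_gard} together with the concentration bound $\mathbb{P}(\|{\bf w}\|_2\leq \epsilon^{\sigma})\geq 1-1/n$ already quoted from \cite{cai2011orthogonal}. You go further than the paper by explicitly verifying that GARD($\sigma^2$) neither stops early nor late on the event $\mathcal{E}$; the paper silently relies on the residual lower bound $\|{\bf r}^k_{GARD}\|_2\geq {\bf g}_{min}-\delta_{k_g}^2\|{\bf g}_{out}\|_2-(\sqrt{3/2}+1)\|{\bf w}\|_2$ for $k<k_g$ from \cite{gard} (which it later invokes in Appendix~C), and your observation that this bound exceeds $\epsilon_{GARD}\geq \epsilon^{\sigma}$ is exactly what is needed to close that gap.
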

Lemma \ref{lemma_gard} and Corollary 1 state that GARD can identify the outliers correctly once  the outlier magnitudes are sufficiently higher than the inlier magnitudes  and the angle between outlier and regressor subspaces is sufficiently small (i.e., $\delta_{k_g}^2< {\bf g}_{min}/2\|{\bf g}_{out}\|_2$). 
\section{Properties of residual ratios}
{ As discussed in section \rom{2}, stopping rules for  GARD based on the  behaviour of residual norm $\|{\bf r}^{k}_{GARD}\|_2$ or outlier sparsity level are highly intuitive. However, these stopping rules require \textit{a priori} knowledge of inlier statistics $\{\sigma^2,\|{\bf w}\|_2\}$  or outlier sparsity $k_g$ which are rarely available. In this section, we analyse the properties of the  residual ratio statistic $RR(k)=\frac{\|{\bf r}^{k}_{GARD}\|_2}{\|{\bf r}^{k-1}_{GARD}\|_2}$ and establish its' usefulness in identifying the  outlier support $\mathcal{S}_g$ from the support sequence generated by GARD  without having any \textit{a priori} knowledge of noise   statistics $\{\sigma^2,\|{\bf w}\|_2,k_g\}$ or their estimates. Statistics based on residual ratios are not widely used in sparse recovery or robust regression literature yet.    In a recent related contribution, we successfully applied  residual ratio techniques   operationally similar to the  one discussed in this article  for sparse recovery in underdetermined linear regression models \cite{kallummil18a}. This residual ratio technique  \cite{kallummil18a} can be used to estimate sparse vectors $\boldsymbol{\beta}$  in an outlier free  regression model ${\bf y}={\bf X}\boldsymbol{\beta}+{\bf w}$ with finite sample guarantees even when ${\bf X}$ is not full rank and the statistics of ${\bf w}$  are unknown \textit{a priori}. This finite sample guarantees are applicable only when the noise ${\bf w}\sim \mathcal{N}({\bf 0}_n,\sigma^2{\bf I}^n)$.  This technique can be used instead of BPDN or relevance vector machine (used in BPRR and BSRR)  to estimate outlier support $\mathcal{S}_g$ from ${\bf z}=({\bf I}^n-{\bf P}_{{\bf X}}){\bf y}=({\bf I}^n-{\bf P}_{{\bf X}}){\bf g}_{out}+\tilde{\bf w}$ in (\ref{Projection}) as a part of projection based robust regression. However, it is impossible to derive any finite sample or asymptotic guarantees for \cite{kallummil18a} in this situation since the noise $\tilde{\bf w}$ in  (\ref{Projection}) is correlated with a rank deficient correlation matrix $\sigma^2({\bf I}^n-{\bf P}_{\bf X})$, whereas,  \cite{kallummil18a} expects the noise to be uncorrelated. Further, empirical evidences \cite{arosi} suggest that the  joint vector and outlier estimation approach used in RMAP, AROSI, GARD etc. are superior in performance compared to the projection based approaches like BPRR. The main contribution of this article is to transplant the operational philosophy in \cite{kallummil18a} developed for sparse vector estimation to  the different problem of joint regression vector and outlier estimation (the strategy employed in GARD) and develop finite and large sample guarantees using the results available for GARD.  }

We begin in our analysis of residual ratios by stating some of it's fundamental properties which are based on the properties of support sequences generated by GARD algorithm.
\begin{lemma}\label{lemma:gard}
\label{gard} The support estimate and residual sequences produced by GARD satisfy the following properties\cite{gard}.\\
A1). Support estimate ${\mathcal{S}}_{GARD}^k$ is monotonically increasing  in the sense that ${\mathcal{S}}_{GARD}^{k_1} \subset {\mathcal{S}}_{GARD}^{k_2}$ whenever $k_1<k_2$. \\
A2). The residual norm $\|{\bf r}^{k}_{GARD}\|_2$ decreases monotonically, i.e., $\|{\bf r}^{k_2}_{GARD}\|_2 \leq \|{\bf r}^{k_1}_{GARD}\|_2$ whenever $k_1< k_2$. 
\end{lemma}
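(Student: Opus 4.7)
The plan is to derive both claims directly from the greedy update rule in GARD and from elementary properties of orthogonal projections.

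For property A1, I would first observe that Step 1 of the algorithm imposes $\mathcal{S}^{k}_{GARD} = \mathcal{S}^{k-1}_{GARD} \cup \{\hat{i}_k\}$ by construction, which gives $\mathcal{S}^{k-1}_{GARD} \subseteq \mathcal{S}^{k}_{GARD}$; iterating from $k_1$ to $k_2$ then yields $\mathcal{S}^{k_1}_{GARD} \subseteq \mathcal{S}^{k_2}_{GARD}$. The only subtlety worth checking is that $\hat{i}_k$ is a genuinely \emph{new} index, i.e., $\hat{i}_k \notin \mathcal{S}^{k-1}_{GARD}$. For this I would use Step 4, which sets ${\bf r}^{k-1}_{GARD} = ({\bf I}^n - {\bf P}_{{\bf A}^{k-1}}){\bf y}$. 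Since the columns of ${\bf A}^{k-1} = [{\bf X}\ \ {\bf I}^n_{\mathcal{S}^{k-1}_{GARD}}]$ include the canonical basis vectors indexed by $\mathcal{S}^{k-1}_{GARD}$, the residual is orthogonal to each of these, so ${\bf r}^{k-1}_{GARD}(i) = 0$ for every $i \in \mathcal{S}^{k-1}_{GARD}$. Provided the residual has not already vanished, the maximizer $\hat{i}_k$ must therefore lie outside $\mathcal{S}^{k-1}_{GARD}$, giving strict containment.

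For property A2, I would argue by a nested-subspace comparison. Property A1 implies $span({\bf A}^{k-1}) \subseteq span({\bf A}^{k})$, since the column set of ${\bf A}^{k-1}$ is a subset of that of ${\bf A}^{k}$. Writing $V_1 = span({\bf A}^{k-1})$ and $V_2 = span({\bf A}^{k})$, nesting gives ${\bf P}_{V_2}{\bf P}_{V_1} = {\bf P}_{V_1}$, so ${\bf P}_{V_2} - {\bf P}_{V_1}$ is itself the orthogonal projection onto $V_2 \cap V_1^{\perp}$ and is therefore orthogonal to ${\bf I}^n - {\bf P}_{V_2}$. The Pythagorean identity then gives $\|({\bf I}^n - {\bf P}_{V_1}){\bf y}\|_2^2 = \|({\bf P}_{V_2} - {\bf P}_{V_1}){\bf y}\|_2^2 + \|({\bf I}^n - {\bf P}_{V_2}){\bf y}\|_2^2$, from which $\|{\bf r}^{k}_{GARD}\|_2 \leq \|{\bf r}^{k-1}_{GARD}\|_2$ is immediate. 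Iterating over consecutive iterations from $k_1$ to $k_2$ completes the proof.

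Neither step poses any real obstacle; the entire lemma is a direct bookkeeping consequence of the greedy update rule combined with standard properties of projections onto nested subspaces. The one point worth flagging explicitly is the residual-orthogonality argument ensuring that the selected index $\hat{i}_k$ falls outside the current support — this is what prevents the greedy scheme from stagnating and is what guarantees that the support actually grows from one iteration to the next.
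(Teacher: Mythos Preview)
Your argument is correct. Note, however, that the paper does not actually supply its own proof of this lemma: it simply states the two properties and attributes them to the original GARD reference \cite{gard}. So there is no in-paper proof to compare against. Your self-contained derivation --- reading A1 off the greedy update rule together with the residual-orthogonality observation that forces $\hat{i}_k \notin \mathcal{S}^{k-1}_{GARD}$, and obtaining A2 from the Pythagorean decomposition for projections onto nested subspaces --- is exactly the natural way to justify these claims and would be what one expects the cited reference to contain.
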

As a consequence of A2) of Lemma \ref{lemma:gard}, residual ratios are upper bounded by one, i.e.,  $RR(k)\leq 1$. Also given the non negativity of residual norms, one has $RR(k)\geq 0$. Consequently, residual ratio statistic is a bounded random variable taking values in $[0,1]$. Even though residual norms are non increasing, please note that residual ratio statistic does not exhibit any monotonic behaviour.

\subsection{Concept of minimal superset}
Consider operating the GARD algorithm  with $k_{user}= k_{max}$, where $k_{max}$ is a user defined value satisfying $k_{max}\gg k_g$. Let $\mathcal{S}_{GARD}^k$ and ${\bf r}_{GARD}^k$ for $k=1,\dotsc,k_{max}$ be the support estimate and residual  after the $k^{th}$ GARD iteration in TABLE \ref{tab:gard}. 
The concept of minimal superset is  important in the analysis of GARD support estimate sequence $\{{\mathcal{S}}_{GARD}^k\}_{k=1}^{k_{max}}$.\\
{\bf Definition 2:-} The minimal superset in the GARD support estimate sequence $\{{\mathcal{S}}^k_{GARD}\}_{k=1}^{k_{max}}$ is given by ${\mathcal{S}}^{k_{min}}_{GARD}$, where $k_{min}=\min\{k:\mathcal{S}_g\subseteq {\mathcal{S}}^k_{GARD}\}$. When the set $\{k:\mathcal{S}_g\subseteq {\mathcal{S}}^k_{GARD}\}=\phi$, it is assumed that $k_{min}=\infty$ and $\mathcal{S}^{k_{min}}_{GARD}=\phi$. 

In words, $k_{min}$ is the first time GARD support estimate ${\mathcal{S}}^k_{GARD}$ covers the outlier support $\mathcal{S}_g$.  Please note that $k_{min}$ is an unobservable R.V that depends on the data $\{{\bf y},{\bf X},{\bf w}\}$. Since, $\mathcal{S}_g \not\subseteq \mathcal{S}^k_{GARD}$ for $k<k_g$, the random variable $k_{min}$ satisfies $k_{min}\geq k_g$. Further, when $k_g\leq k_{min}<k_{max}$, the monotonicity of support estimate $\mathcal{S}^k_{GARD}$ implies that $\mathcal{S}_g\subset \mathcal{S}_{GARD}^k$ for $k_{min}<k\leq k_{max}$. Based on the value of $k_{min}$, the following three situations can happen.  In the following  running example suppose that $\mathcal{S}_g=\{1,2\}$ (i.e., $k_g=2$), $n=10$ and $k_{max}=4$.  

{\bf Case 1:-} $k_{min}=k_g$. The  outlier support $\mathcal{S}_g$ is present in the  sequence $\{\mathcal{S}^k_{GARD}\}_{k=1}^{k_{max}}$. For example, let $\mathcal{S}_{GARD}^1=\{1\},\mathcal{S}_{GARD}^2=\{1,2\},\mathcal{S}_{GARD}^3=\{1,2,3\}$ and $\mathcal{S}_{GARD}^4=\{1,3,2,7\}$. Here $k_{min}=k_g$ and $\mathcal{S}_{GARD}^{k_{min}}=\mathcal{S}_g$.   Lemma \ref{lemma_gard} implies that $k_{min}=k_g$ and $\mathcal{S}^{k_g}_{GARD}=\mathcal{S}_g$ if $\|{\bf w}\|_2\leq \epsilon_{GARD}$.

{\bf Case 2:-} $k_g<k_{min}\leq k_{max}$. In this case,  outlier support $\mathcal{S}_g$ is not present in $\{\mathcal{S}^k_{GARD}\}_{k=1}^{k_{max}}$. However, a superset of the  outlier support $\mathcal{S}_g$ is present in $\{\mathcal{S}^k_{GARD}\}_{k=1}^{k_{max}}$. For example, let  $\mathcal{S}_{GARD}^1=\{1\},\mathcal{S}_{GARD}^2=\{1,3\},\mathcal{S}_{GARD}^3=\{1,3,2\}$ and $\mathcal{S}_{GARD}^4=\{1,3,2,7\}$.   Here $k_{min}=3>k_g=2$ and $\mathcal{S}_{GARD}^{k_{min}}\supset \mathcal{S}_g$. 

{\bf Case 3:-} $k_{min}=\infty$. Neither the outlier support $\mathcal{S}_g$ nor a superset of $\mathcal{S}_g$ is present in the GARD solution path.  For example, let $\mathcal{S}_{GARD}^1=\{1\},\mathcal{S}_{GARD}^2=\{1,3\},\mathcal{S}_{GARD}^3=\{1,3,5\}$ and $\mathcal{S}_{GARD}^4=\{1,3,5,7\}$. Since no support estimate satisfies $\mathcal{S}_g \subseteq \mathcal{S}_{GARD}^k$, $k_{min}=\infty$.

\subsection{ Implications for estimation performance}
Minimal superset has the following impact on the GARD estimation performance. 
Since $\mathcal{S}_g\subseteq  \mathcal{S}_{GARD}^{k_{min}}$, ${\bf g}_{out}={\bf I}^n_{\mathcal{S}_{GARD}^{k_{min}}}{\bf g}_{out}(\mathcal{S}_{GARD}^{k_{min}})$. Hence  ${\bf y}$ can be written as  \begin{equation}
{\bf y}={\bf A}^{k_{min}}[\boldsymbol{\beta}^T \ {\bf g}_{out}(\mathcal{S}_{GARD}^{k_{min}})^T]^T+{\bf w}.
\end{equation}
 Consequently, the joint estimate  $[\hat{\boldsymbol{\beta}}^T \ \hat{{\bf g}_{out}}(\mathcal{S}_{GARD}^{k_{min}})^T]^T=({\bf A}^{k_{min}})^{\dagger}{\bf y}=[\boldsymbol{\beta}^T \ {\bf g}_{out}(\mathcal{S}^k_{GARD})^T]^T+({\bf A}^{k_{min}})^{\dagger}{\bf w}$
has error $\|\boldsymbol{\beta}-\hat{\boldsymbol{\beta}}\|_2$ independent of the outlier magnitudes.
Since, $\mathcal{S}_g\subset \mathcal{S}_{GARD}^k$, similar outlier free estimation performance can be delivered by support estimates $\mathcal{S}_{GARD}^k$ for $k\geq k_{min}$. 
 However, the estimation error due to the inlier noise, i.e., $\|({\bf A}^{k})^{\dagger}{\bf w}\|_2$ increases with increase in $k$. Similarly for $k<k_{min}$, the observation ${\bf y}$ can be written as  
\begin{equation}
 {\bf y}={\bf A}^{k}[\boldsymbol{\beta}^T \ {\bf g}_{out}(\mathcal{S}_{GARD}^{k})^T]^T+{\bf w}+{\bf I}^n_{\mathcal{S}_g/ \mathcal{S}_{GARD}^k}{\bf g}_{out}(\mathcal{S}_g/ \mathcal{S}_{GARD}^k).
 \end{equation} Hence the joint estimate $[\hat{\boldsymbol{\beta}}^T \ \hat{{\bf g}_{out}}(\mathcal{S}_{GARD}^{k})^T]^T=({\bf A}^{k})^{\dagger}{\bf y}=[\boldsymbol{\beta}^T \ {\bf g}_{out}(\mathcal{S}^k_{GARD})^T]^T+({\bf A}^{k})^{\dagger}{\bf w}+({\bf A}^{k})^{\dagger}{\bf I}^n_{\mathcal{S}_g/ \mathcal{S}_{GARD}^k}{\bf g}_{out}(\mathcal{S}_g/\mathcal{S}_{GARD}^k)$ 
 has error $\|\boldsymbol{\beta}-\hat{\boldsymbol{\beta}}\|_2$ influenced by outliers. 
 Hence, when the outliers are strong, among all the support estimates $\{\mathcal{S}^k_{GARD}\}_{k=1}^{k_{max}}$ produced by GARD, the joint estimate corresponding to $\mathcal{S}_{GARD}^{k_{min}}$ delivers the best estimation performance. Consequently, identifying $k_{min}$ from the support estimate sequence $\{\mathcal{S}^k_{GARD}\}_{k=1}^{k_{max}}$ can leads to high quality estimation performance. The behaviour of residual ratio statistic $RR(k)$ described next provides a noise statistics oblivious way to identify $k_{min}$.

\begin{figure*}[htb]
\begin{multicols}{2}

    \includegraphics[width=1\linewidth]{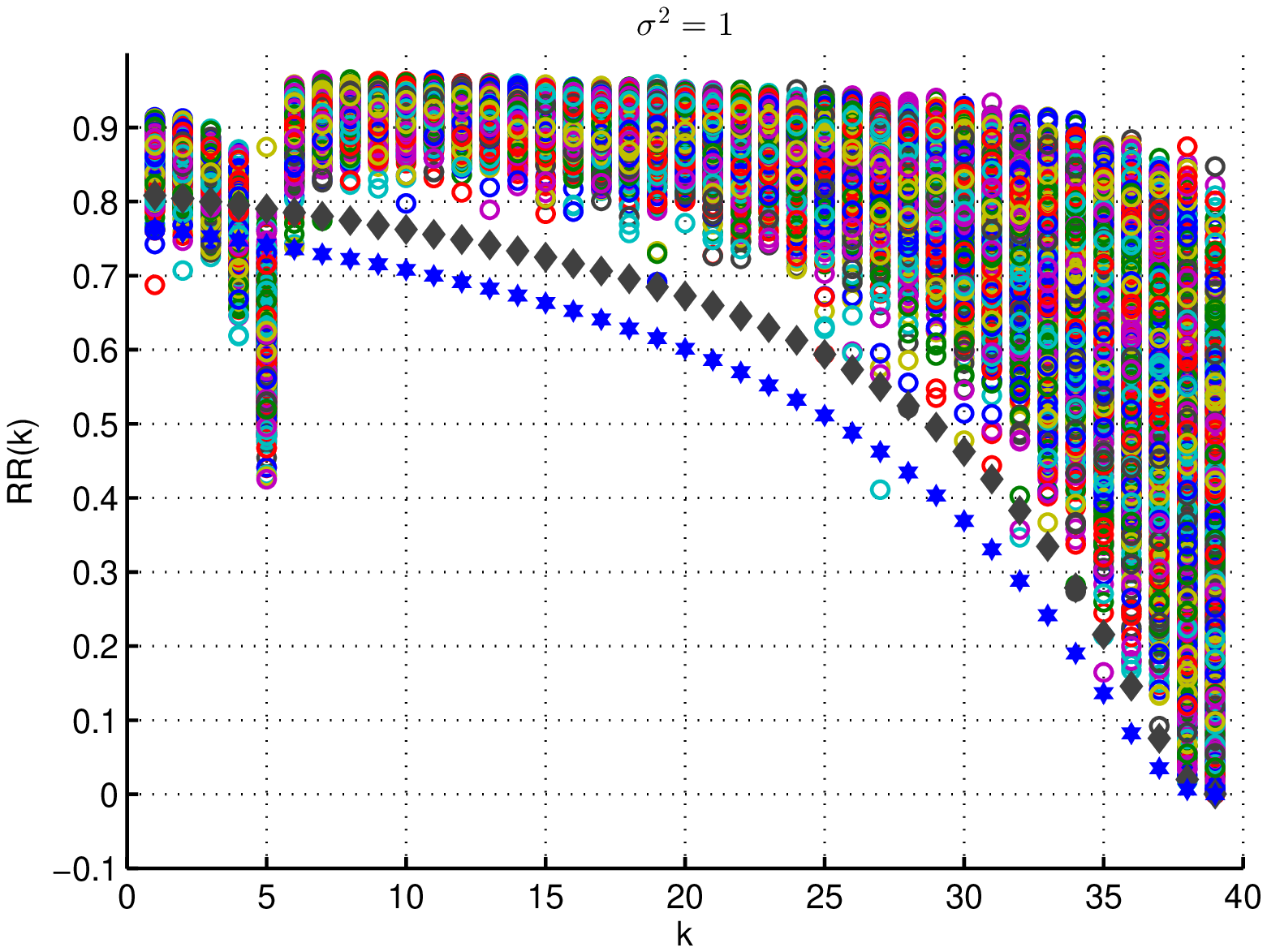} 
    \caption*{a). $\{RR(k)<\Gamma_{RRT}^{\alpha}(k),\forall\ k>k_{min}\}$ $0.5\%$ for ($\alpha=0.1$), $0.1\%$ for ($\alpha=0.01$) }
    
    \includegraphics[width=1\linewidth]{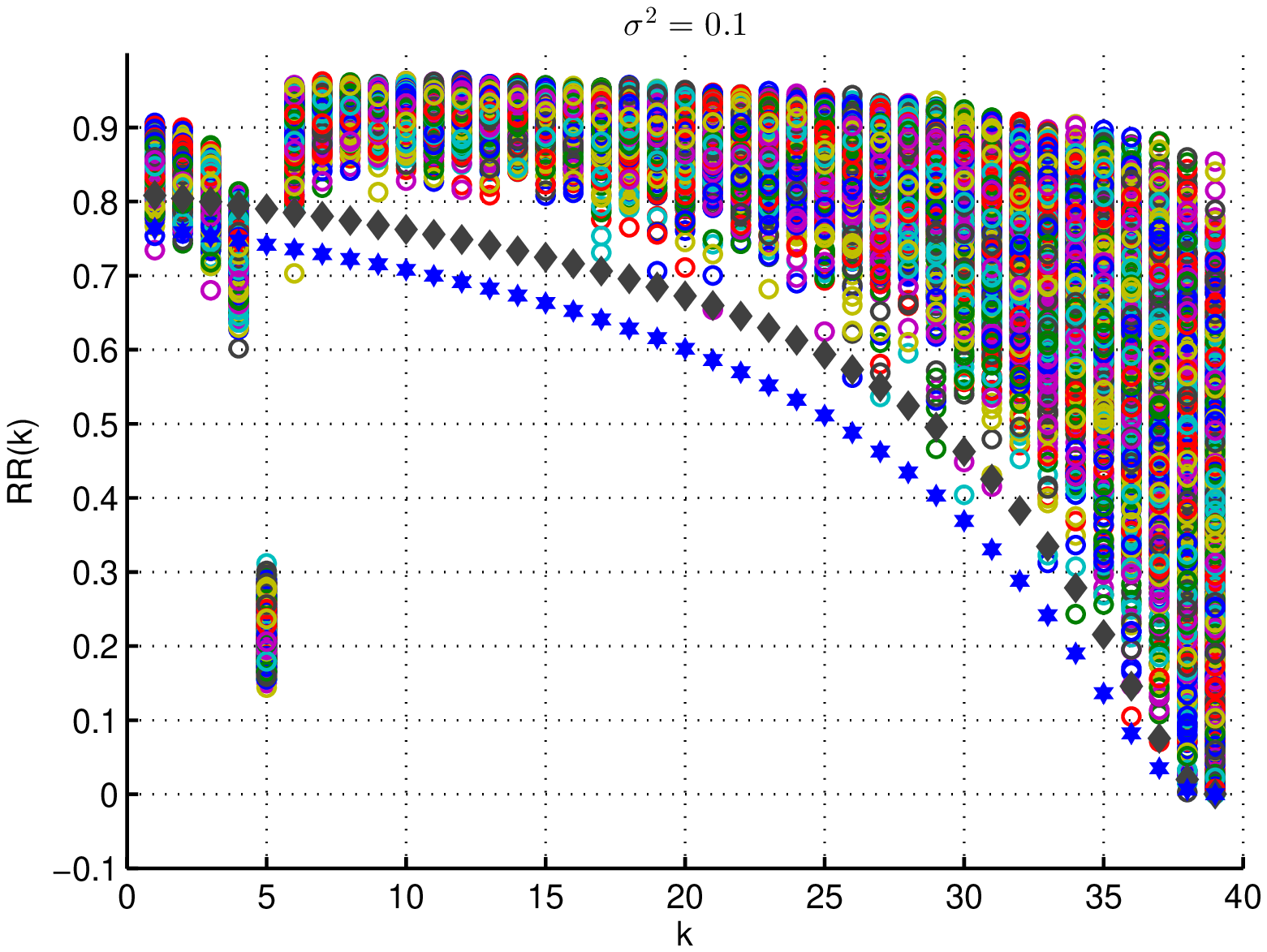}
    \caption*{a). $\{RR(k)<\Gamma_{RRT}^{\alpha}(k),\forall\ k>k_{min}\}$ $0.6\%$ for ($\alpha=0.1$), $0\%$ for ($\alpha=0.01$) }
   \end{multicols} 
   \caption{Behaviour of $RR(k)$ for the model described in Section \rom{3}.D. $\sigma^2=1$ (left) and $\sigma^2=0.1$ (right). Circles in Fig. 1 represents the values of $RR(k)$, diamond represents $\Gamma_{RRT}^{\alpha}$ with $\alpha=0.1$ and hexagon represents $\Gamma_{RRT}^{\alpha}$ with $\alpha=0.01$.  }
   \label{fig:evolution}
\end{figure*}
\subsection{Behaviour of residual ratio statistic $RR(k)$} 
We next analyse the behaviour of the  residual ratio statistic $RR(k)=\frac{\|{\bf r}^{k}_{GARD}\|_2}{\|{\bf r}^{k-1}_{GARD}\|_2}$ as $k$ increases from $k=1$ to  $k=k_{max}$.  Since the residual norms are decreasing according to Lemma \ref{lemma:gard}, $RR(k)$ satisfies  $0\leq   RR(k)\leq  1$. Theorem \ref{lemma:tf}  states the behaviour of $RR(k_{min})$ once the regularity conditions in Lemma \ref{lemma_gard} are satisfied.

\begin{theorem}\label{lemma:tf}
 Suppose that the matrix conditions in Lemma \ref{lemma_gard} are satisfied (i.e., $\delta_{k_g}<\sqrt{\dfrac{{\bf g}_{min}}{2\|{\bf g}_{out}\|_2}}$), then \\
 a). $\underset{\sigma^2\rightarrow 0}{\lim}\mathbb{P}(k_{min}=k_g)=1$. \\ 
 b). $RR(k_{min})\overset{P}{\rightarrow }0$ as $\sigma^2\rightarrow  0$. 
\end{theorem}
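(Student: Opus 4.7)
The plan is to deduce part (a) from Lemma \ref{lemma_gard} via a routine noise-concentration argument, and to establish part (b) by lower-bounding the $(k_g-1)$-th residual on the high-probability event produced by (a).

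For (a), Lemma \ref{lemma_gard} supplies the deterministic implication: whenever $\|\mathbf{w}\|_2 \leq \epsilon_{GARD}=(\mathbf{g}_{min}-2\delta_{k_g}^2\|\mathbf{g}_{out}\|_2)/(2+\sqrt{6})$, GARD with $k_{user}=k_g$ returns $\mathcal{S}_{GARD}^{k_g}=\mathcal{S}_g$. Because $|\mathcal{S}_{GARD}^k|=k$ and the support sequence is monotone by A1 of Lemma \ref{lemma:gard}, the equality $\mathcal{S}_{GARD}^{k_g}=\mathcal{S}_g$ is equivalent to $k_{min}=k_g$. The hypothesis $\delta_{k_g}^2<\mathbf{g}_{min}/(2\|\mathbf{g}_{out}\|_2)$ makes $\epsilon_{GARD}$ a strictly positive constant independent of $\sigma^2$; since $\|\mathbf{w}\|_2^2/\sigma^2\sim\chi^2_n$, Markov's inequality gives $\mathbb{P}(\|\mathbf{w}\|_2>\epsilon_{GARD})\leq n\sigma^2/\epsilon_{GARD}^2\to 0$, proving (a).

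For (b), I condition on the event $\mathcal{E}=\{k_{min}=k_g\}$, which has probability tending to $1$ by (a). On $\mathcal{E}$, $\mathcal{S}_{GARD}^{k_g}=\mathcal{S}_g$ and $\mathbf{X}\boldsymbol{\beta}+\mathbf{g}_{out}\in\mathrm{span}(\mathbf{A}^{k_g})$, so the numerator is $\mathbf{r}_{GARD}^{k_g}=(\mathbf{I}^n-\mathbf{P}_{\mathbf{A}^{k_g}})\mathbf{w}$ and $\|\mathbf{r}_{GARD}^{k_g}\|_2\leq\|\mathbf{w}\|_2\overset{P}{\to}0$. For the denominator, $\mathcal{S}_{GARD}^{k_g-1}$ is a size-$(k_g-1)$ subset of $\mathcal{S}_g$, so there is a unique missed index $j^{\ast}\in\mathcal{S}_g\setminus\mathcal{S}_{GARD}^{k_g-1}$. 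The portion of $\mathbf{g}_{out}$ supported on $\mathcal{S}_{GARD}^{k_g-1}$ lies in $\mathrm{span}(\mathbf{A}^{k_g-1})$ and is annihilated by $\mathbf{I}^n-\mathbf{P}_{\mathbf{A}^{k_g-1}}$, giving
\[
\mathbf{r}_{GARD}^{k_g-1}=\mathbf{g}_{out}(j^{\ast})(\mathbf{I}^n-\mathbf{P}_{\mathbf{A}^{k_g-1}})\mathbf{e}_{j^{\ast}}+(\mathbf{I}^n-\mathbf{P}_{\mathbf{A}^{k_g-1}})\mathbf{w},
\]
so the reverse triangle inequality yields
\[
\|\mathbf{r}_{GARD}^{k_g-1}\|_2\geq |\mathbf{g}_{out}(j^{\ast})|\,\|(\mathbf{I}^n-\mathbf{P}_{\mathbf{A}^{k_g-1}})\mathbf{e}_{j^{\ast}}\|_2-\|\mathbf{w}\|_2.
\]

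The crux is therefore a uniform lower bound $\|(\mathbf{I}^n-\mathbf{P}_{\mathbf{A}^{k_g-1}})\mathbf{e}_{j^{\ast}}\|_2\geq c>0$ that does not depend on the random outputs of GARD. Since $\mathcal{S}_{GARD}^{k_g-1}\cup\{j^{\ast}\}=\mathcal{S}_g$, the hypothesis on $\delta_{k_g}$ (which in particular enforces $\delta_{k_g}<1$) makes $[\mathbf{X}\ \mathbf{I}^n_{\mathcal{S}_g}]$ full column rank with smallest singular value bounded below by an explicit function of $\delta_{k_g}$, so $\mathbf{e}_{j^{\ast}}$ cannot lie in $\mathrm{span}(\mathbf{A}^{k_g-1})$; taking the worst case over the finitely many $(k_g-1)$-subsets of $\mathcal{S}_g$ supplies the desired deterministic $c$. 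Using $|\mathbf{g}_{out}(j^{\ast})|\geq\mathbf{g}_{min}$, on $\mathcal{E}\cap\{\|\mathbf{w}\|_2<c\mathbf{g}_{min}/2\}$,
\[
RR(k_{min})=\frac{\|\mathbf{r}_{GARD}^{k_g}\|_2}{\|\mathbf{r}_{GARD}^{k_g-1}\|_2}\leq\frac{\|\mathbf{w}\|_2}{c\,\mathbf{g}_{min}-\|\mathbf{w}\|_2}\overset{P}{\to}0,
\]
and since both events have probability tending to $1$, this gives $RR(k_{min})\overset{P}{\to}0$. The main obstacle is the uniform lower bound on $\|(\mathbf{I}^n-\mathbf{P}_{\mathbf{A}^{k_g-1}})\mathbf{e}_{j^{\ast}}\|_2$, which binds the deterministic geometry encoded by $\delta_{k_g}$ to the data-dependent trajectory of GARD; the clean workaround is a worst-case bound over the finite combinatorial set of candidate subsets, since no tight constant is required.
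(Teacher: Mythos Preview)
Your argument is correct, and the overall skeleton matches the paper: part (a) follows from Lemma \ref{lemma_gard} together with $\|{\bf w}\|_2\overset{P}{\rightarrow}0$; part (b) is obtained by working on the high-probability event $\{k_{min}=k_g\}$, bounding the numerator by $\|{\bf w}\|_2$, and lower-bounding the denominator $\|{\bf r}^{k_g-1}_{GARD}\|_2$.

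The difference lies in how the denominator is controlled. The paper does not isolate the single missed index $j^{\ast}$ and then take a worst case over the finitely many $(k_g-1)$-subsets of $\mathcal{S}_g$. Instead, it invokes directly an intermediate inequality from the proof of the GARD support-recovery result (Theorem~4 in \cite{gard}), namely that whenever $\|{\bf w}\|_2\leq \epsilon_{GARD}$,
\[
\|({\bf I}^n-{\bf P}_{{\bf A}^{k_g-1}})({\bf g}_{out}+{\bf w})\|_2 \;\geq\; {\bf g}_{min}-\delta_{k_g}^2\|{\bf g}_{out}\|_2-\Bigl(\sqrt{\tfrac{3}{2}}+1\Bigr)\|{\bf w}\|_2.
\]
This yields an \emph{explicit} lower bound in terms of $\delta_{k_g}$, ${\bf g}_{min}$, and $\|{\bf g}_{out}\|_2$, rather than an unspecified constant $c>0$ coming from a finite minimum. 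The paper then bounds $RR(k_{min})$ by the corresponding explicit ratio (handled via indicator functions for the two cases $\|{\bf w}\|_2\lessgtr\epsilon_{GARD}$). The payoff is that the very same explicit denominator bound is reused later in Theorem \ref{thm:rrt-gard} to derive the quantitative threshold $\epsilon_{RRT}$; your combinatorial worst-case argument, while perfectly adequate for the convergence-in-probability statement here, would not deliver that explicit constant. Conversely, your route is more self-contained, as it does not require reaching inside the proof of a result from \cite{gard}.
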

\begin{proof}
 Please see Appendix A for proof.
\end{proof}
Theorem \ref{lemma:tf} states  that when the matrix regularity conditions in Lemma \ref{lemma_gard} are satisfied, then with decreasing inlier variance $\sigma^2$ or equivalently with increasing difference between outlier and inlier powers, the residual ratio statistic  $RR(k_{min})$ takes progressively smaller and smaller values.    The following theorem characterizes  the behaviour of $RR(k)$ for $k\geq k_{min}$. 
\begin{theorem} \label{thm:Beta}
Let $F_{a,b}(x)$ be the cumulative distribution function (CDF) of $\mathbb{B}(a,b)$ R.V and $F^{-1}_{a,b}(x)$ be its' inverse CDF. Then, for all $0\leq \alpha\leq 1$  and for all $\sigma^2>0$, $\Gamma_{RRT}^{\alpha}(k)=\sqrt{F_{\frac{n-p-k}{2},0.5}^{-1}\left(\dfrac{\alpha}{k_{max}(n-k+1)}\right)}>0$ satisfies $\mathbb{P}\left(RR(k)>\Gamma_{RRT}^{\alpha}(k),\forall k\in\{k_{min}+1,k_{max}\}\right)\geq 1-\alpha$ . 
\end{theorem}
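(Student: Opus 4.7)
My plan is to exploit that on the event $\{k > k_{min}\}$ the residual reduces to a purely Gaussian quantity, so the squared residual ratio equals one minus a cosine-squared term that, on any fixed subspace of appropriate dimension, has a known Beta distribution. The $1-\alpha$ guarantee then follows from two nested union bounds: one over the $n-k+1$ candidate indices for $\hat{i}_k$, and one over the $k_{max}$ iteration indices.

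First, on $\{k > k_{min}\}$ the monotonicity of the GARD support estimate (Lemma \ref{lemma:gard}) gives $\mathcal{S}_g \subseteq \mathcal{S}^{k-1}_{GARD}$, whence ${\bf g}_{out}$ lies in $span({\bf A}^{k-1})$ and ${\bf r}^{k-1}_{GARD} = ({\bf I}^n - {\bf P}_{{\bf A}^{k-1}}){\bf w}$. The rank-one projector update upon appending ${\bf e}_{\hat{i}_k}$ then gives $RR(k)^2 = 1 - \cos^2(\theta_{\hat{i}_k})$, where $\theta_i$ is the angle in $V_{k-1}^\perp$ (with $V_{k-1} = span({\bf A}^{k-1})$) between ${\bf r}^{k-1}_{GARD}$ and $({\bf I}^n - {\bf P}_{V_{k-1}}){\bf e}_i$. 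Since $\hat{i}_k$ is one of at most $n-k+1$ indices in $[n]\setminus \mathcal{S}^{k-1}_{GARD}$, one has $RR(k)^2 \geq \min_i \sin^2(\theta_i)$, reducing the problem to bounding the tail of this minimum.

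The key distributional fact is that for any fixed $V \supseteq span({\bf X}) \cup span({\bf I}^n_{\mathcal{S}_g})$ with $\dim V = p+k-1$ and fixed $i$ outside the corresponding support, $\sin^2(\theta_i) \sim \mathbb{B}((n-p-k)/2, 1/2)$. This follows from the isotropy of the Gaussian $({\bf I}^n - {\bf P}_V){\bf w}$ on $V^\perp$ of dimension $n-p-k+1$, together with the decomposition of its squared norm into independent $\chi^2_1$ and $\chi^2_{n-p-k}$ components along and perpendicular to $({\bf I}^n - {\bf P}_V){\bf e}_i$. The definition of $\Gamma^\alpha_{RRT}(k)$ inverts this Beta CDF precisely so the per-pair failure probability equals $\alpha/(k_{max}(n-k+1))$; the inner union bound over the $n-k+1$ candidate indices gives per-$k$ failure probability at most $\alpha/k_{max}$, and the outer sum over $k \in \{k_{min}+1, \ldots, k_{max}\}$ delivers the claimed $\alpha$ bound.

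The principal obstacle is that $V_{k-1}$ itself depends on ${\bf w}$ through the GARD selection history, so the Beta claim strictly speaking holds only conditionally on a fixed subspace rather than unconditionally. I would handle this by observing that the per-$(k,i)$ Beta tail depends on $V$ only through $\dim V = p+k-1$, so the inner bound transfers uniformly across the realized support paths; combining this with the disjointness of the events $\{\mathcal{S}^{k-1}_{GARD} = \mathcal{J}\}$ indexed by different admissible $\mathcal{J}$'s and the rotational invariance of ${\bf w} \sim \mathcal{N}({\bf 0}_n, \sigma^2 {\bf I}^n)$ within each $V^\perp$ closes the gap without incurring a combinatorial blow-up over the $\binom{n-k_g}{k-1-k_g}$ possible support sets.
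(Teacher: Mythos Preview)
Your proposal is correct and follows essentially the same approach as the paper: both condition on the realized GARD support path so that the projection matrices become deterministic, invoke the $\mathbb{B}((n-p-k)/2,1/2)$ law for the squared ratio (your $\sin^2(\theta_i)$ is exactly the paper's $Z_k^l=\|({\bf I}^n-{\bf P}_{{\bf A}^{k-1,l}}){\bf w}\|_2^2/\|({\bf I}^n-{\bf P}_{{\bf A}^{k-1}}){\bf w}\|_2^2$), apply the same two nested union bounds over the $n-k+1$ candidate indices and then over $k$, and remove the conditioning via the law of total probability over admissible support sets. The paper additionally conditions explicitly on the value of $k_{min}$ and handles the vacuous cases $k_{min}\in\{k_{max},\infty\}$ separately, but this is only an organizational difference.
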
 
\begin{proof} Please see Appendix B for proof. 
\end{proof}
{Theorem \ref{thm:Beta} can be understood as follows. Consider two sequences, \textit{viz.} the random sequence $\{RR(k)\}_{k=1}^{k_{max}}$  and the deterministic sequence $\{\Gamma_{RRT}^{\alpha}(k)\}_{k=1}^{k_{max}}$  which is dependent only on the matrix dimensions $(n,p)$. Then Theorem \ref{thm:Beta} states that the portion of the random sequence $\{RR(k)\}_{k=1}^{k_{max}}$  for $k>k_{min}$  will be   lower bounded  by the corresponding portion of the  deterministic sequence $\{\Gamma_{RRT}^{\alpha}(k)\}_{k=1}^{k_{max}}$ with a probability greater than $1-\alpha$. Please note that $k_{min}$ is itself a random variable. Also please note that Theorem \ref{thm:Beta} hold true for all values of $\sigma^2>0$. In contrast, Theorem \ref{lemma:tf} is true only when  $\sigma^2\rightarrow 0$. Also unlike Theorem \ref{lemma:tf}, Theorem \ref{thm:Beta} is valid even when the regularity conditions in Lemma \ref{lemma_gard} are not satisfied. }
\begin{lemma}\label{lemma:properties}The following properties of the function $\Gamma_{RRT}^{\alpha}(k)$ follow directly from the properties of inverse CDF and the definition of Beta distribution.\\
1). $\Gamma_{RRT}^{\alpha}(k)$ is a monotonically increasing function of $\alpha$ for $0\leq \alpha\leq k_{max}(n-k+1)$. In particular, $\Gamma_{RRT}^{\alpha}(k)=0$ for $\alpha=0$ and $\Gamma_{RRT}^{\alpha}(k)=1$ for $\alpha=k_{max}(n-k+1)$.\\
2). Since $\mathbb{B}(a,b)$ distribution is defined only for $a>0$ and $b>0$, Theorem \ref{thm:Beta} is valid only if $k_{max}\leq n-p-1$.
\end{lemma}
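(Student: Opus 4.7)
The plan is to establish both claims as direct consequences of elementary facts about beta distributions and their inverse CDFs; there is no genuine obstacle here, and the bulk of the work is bookkeeping about which argument ranges keep $\Gamma_{RRT}^{\alpha}(k)$ well-defined. I will treat $k$ as fixed throughout and view $\Gamma_{RRT}^{\alpha}(k)$ as a function of $\alpha$, written as the composition of three scalar maps: (i) the linear rescaling $\alpha\mapsto \alpha/(k_{max}(n-k+1))$, (ii) the inverse CDF $F^{-1}_{(n-p-k)/2,\,0.5}$, and (iii) the square root.

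For claim 1, I would first argue that each of the three maps in the composition is monotonically nondecreasing on the relevant interval: the linear rescaling is trivially increasing with positive slope $1/(k_{max}(n-k+1))$; the inverse CDF of any continuous distribution with a strictly increasing CDF on its support is itself strictly increasing on $[0,1]$, and the Beta distribution has this property whenever its parameters are positive; and the square root is strictly increasing on $[0,\infty)$. Composition of monotonically increasing maps is monotonically increasing, giving the first assertion. The boundary evaluations then follow by tracing the values: when $\alpha=0$ the inner argument is $0$, and since any beta distribution is supported in $[0,1]$ with $F(0)=0$, I have $F^{-1}(0)=0$ and hence $\Gamma_{RRT}^{0}(k)=\sqrt{0}=0$; when $\alpha=k_{max}(n-k+1)$ the inner argument is $1$, so $F^{-1}(1)=1$ and $\Gamma_{RRT}^{\alpha}(k)=\sqrt{1}=1$.

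For claim 2, I would invoke the definition of the Beta function $B(a,b)=\int_0^1 t^{a-1}(1-t)^{b-1}\,dt$ given in the paper's notation: this integral converges (and hence the $\mathbb{B}(a,b)$ density is well-defined) if and only if both $a>0$ and $b>0$. Applying this to the parameters $a=(n-p-k)/2$ and $b=0.5$ that appear in $\Gamma_{RRT}^{\alpha}(k)$, the condition $b>0$ is automatic, while $a>0$ is equivalent to $k<n-p$. For Theorem~\ref{thm:Beta} to be meaningful at every iteration index $k$ that GARD may reach, namely every $k\in\{1,\dots,k_{max}\}$, this constraint must hold at the largest such $k$, which forces $k_{max}<n-p$, i.e.\ $k_{max}\leq n-p-1$. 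This is the stated condition, completing the proof.
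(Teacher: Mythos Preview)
Your proposal is correct and follows exactly the approach the paper indicates: the paper does not give a separate proof but simply asserts that the properties ``follow directly from the properties of inverse CDF and the definition of Beta distribution,'' and your argument is precisely the careful unpacking of that claim via the monotonicity of the composition $\alpha\mapsto \alpha/(k_{max}(n-k+1))\mapsto F^{-1}_{(n-p-k)/2,0.5}(\cdot)\mapsto \sqrt{\cdot}$ together with the positivity constraint on the Beta parameters. There is nothing to add.
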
 
\subsection{Numerical validation of Theorem \ref{lemma:tf} and Theorem \ref{thm:Beta}} 
We consider a design matrix  ${\bf X} \in \mathbb{R}^{50 \times 10}$ such that ${\bf X}_{i,j}\sim \mathcal{N}(0,1/n)$ and inlier noise  ${\bf w}\sim \mathcal{N}({\bf 0}_n,\sigma^2{\bf I}^n)$.  Outlier ${\bf g}_{out}$ has $k_g=5$ non zero entries. All the $k_g$ non-zero entries of ${\bf g}_{out}$ are fixed at 10. We  fix $k_{max}={n-p-1}$ which is the maximum value of $k$ upto which Theorem \ref{thm:Beta} hold true.  Fig. \ref{fig:evolution} presents $1000$ realisations of the sequence $\{RR(k)\}_{k=1}^{k_{max}}$ for two different values of $\sigma^2$. When $\sigma^2=1$, we have observed that $k_{min}=k_g=5$ in $999$ realizations out of the $1000$ realizations, whereas, $k_{min}=k_g=5$ in  all the $1000$ realizations when $\sigma^2=0.1$. As one can see from Fig. \ref{fig:evolution}, $RR(k_{min})$ decreases with decreasing $\sigma^2$ as claimed in Theorem \ref{lemma:tf}. Further, it is evident in  Fig. \ref{fig:evolution} that $RR(k)>\Gamma_{RRT}^{\alpha}(k)$ for all $ k>k_{min}$ in most of the realizations.  In both cases, the empirical evaluations of the probability of $\{RR(k)\geq \Gamma_{RRT}^{\alpha}(k),\forall k\geq k_{min}\}$ also agree with the $1-\alpha$ bound derived in Theorem \ref{thm:Beta}.  
\section{Residual ratio threshold based GARD}
The proposed RRT-GARD algorithm is based on the following observation. From Theorem  \ref{lemma:tf} and Fig. \ref{fig:evolution}, one can see that with decreasing   $\sigma^2$, $RR(k_{min})$ decreases to zero. This implies that with decreasing   $\sigma^2$, $RR(k_{min})$ is more likely to be smaller than $\Gamma_{RRT}^{\alpha}(k_{min})$.  At the same time, by Theorem \ref{thm:Beta}, $RR(k)$ for $k>k_{min}$ is lower bounded by $\Gamma_{RRT}^{\alpha}(k)$ which is independent of  $\sigma^2$. Hence, with decreasing $\sigma^2$,  the last index $k$ such that $RR(k)<\Gamma_{RRT}^{\alpha}(k)$ would correspond to $k_{min}$ with a high probability (for smaller values of $\alpha$).  Hence,  finding the last index $k$ such that $RR(k)$ is lower than $\Gamma_{RRT}^{\alpha}(k)$  can provide a very reliable and noise statistics oblivious way of identifying $k_{min}$. This observation motivates the RRT-GARD algorithm presented in TABLE \ref{tab:rrt-gard} which tries to identify $k_{min}$ using the last index $k$ such that $RR(k)$ is smaller than $\Gamma_{RRT}^{\alpha}(k)$. The efficacy of the RRT-GARD is visible in Fig. \ref{fig:evolution} itself.  When $\sigma^2=1$, the last index where $RR(k)<\Gamma_{RRT}^{\alpha}(k)$ corresponded to $k_{min}$ $99\%$ of time for $\alpha=0.1$ and $90\%$ of time for $\alpha=0.01$. For $\sigma^2=0.1$, the corresponding numbers are $99.4\%$ of the time for $\alpha=0.1$ and $100\%$ of time for $\alpha=0.01$. 
\begin{table}\centering
\begin{tabular}{|l|}
\hline 
{\bf Input:-}  Observed vector ${\bf y}$, design matrix ${\bf X}$, RRT parameter $\alpha$.  \\
{\bf Step 1:-} Run GARD with $k_{user}=k_{max}$. \\
{\bf Step 2:-} Estimate $k_{min}$ as $k_{RRT}=\max\{k:RR(k)\leq \Gamma_{RRT}^{\alpha}(k)\}$.\\ 
{\bf Step 3:-} Estimate $\boldsymbol{\beta}$ and ${\bf g}_{out}(\mathcal{S}^{k_{RRT}}_{GARD})$:\\
\ \ \ \ \ \ \ \ \ \ \ \ \ \ \ \  $[{\hat{\boldsymbol{\beta}} }^T  \hat{{\bf g}_{out}(\mathcal{S}^{k_{RRT}}_{GARD})}^T ]^T= {{\bf A}^{{k}_{RRT}}}^{\dagger}{\bf y}.$\\
{\bf Output:-} Signal estimate $\hat{\boldsymbol{\beta}}$. Outlier support estimate $\mathcal{S}_{RRT}=\mathcal{S}_{GARD}^{k_{RRT}}$\\ 
\hline 
\end{tabular}
\caption{Residual Ratio Threshold GARD: RRT-GARD}

\label{tab:rrt-gard}
\end{table}
\begin{table*}[htb]
\centering
\begin{tabular}{|l|l|l|l|l|l|}
\hline
Algorithm& Complexity order & \multicolumn{2}{c|}{ Noise Variance Estimation} & \multicolumn{2}{c|}{Overall Complexity}   \\ \hline
\multicolumn{2}{|c|}{}& LAD & M-est &LAD & M-est \\ \hline 
\hline
GARD($\sigma^2$) (when $k_g\ll n$) & $O\left(k_g^3+np^2\right)$& $O(n^3)$ & $O(np^2)$ &  $O(n^3)$ & $O(np^2+k_g^3)$ \\
\hline
GARD($\sigma^2$) (when $k_g=O(n)$) & $O\left(n^3\right)$ &$O(n^3)$ & $O(np^2)$ &$O(n^3)$ &$O(n^3)$\\ \hline
RRT-GARD& $O(n^3)$ &-&-&\multicolumn{2}{c|}{$O(n^3)$} \\ \hline
RMAP \cite{bdrao_robust,gard}& $O(n^3)$  &$O(n^3)$& $O\left(np^2\right)$&$O\left(n^3\right)$&$O\left(n^3\right)$\\ \hline
BPRR\cite{koushik_conf}& $O(n^3)$ &$O(n^3)$& $O(np^2)$&$O\left(n^3\right)$&$O\left(n^3\right)$  \\ \hline
M-est\cite{gard}& $O(np^2)$ &-&-& \multicolumn{2}{c|}{ $O(np^2)$}\\  \hline
AROSI\cite{arosi}& $O(n^3)$ & - & $O(np^2)$ &$O(n^3)$ & $O(n^3)$ \\ \hline
\end{tabular}
\caption{Complexity order of robust regression techniques. $p \ll n$. LAD based $\sigma^2$ estimation can be incorporated into AROSI. Hence no additional complexity is involved in AROSI with   LAD based $\sigma^2$ estimation. }
\label{tab:complexity}
\end{table*}
\begin{remark} When the set $\{k:RR(k)<\Gamma_{RRT}^{\alpha}(k)\}$ in Step 2 of RRT-GARD is empty, it is an indicator of the fact that $\sigma^2$ is high which in turn implies that the inlier and outlier powers are comparable.  In such situations,  we increase the value of $\alpha$ such that the  set $\{k:RR(k)<\Gamma_{RRT}^{\alpha}(k)\}$ is non empty. Mathematically, we set $\alpha$ to $\alpha_{new}$ where
\begin{equation}
\alpha_{new}=\underset{a \geq \alpha}{\min}\{\{k:RR(k)\leq \Gamma_{RRT}^{a}(k)\}\neq \phi\}
\end{equation}
Since $a=k_{max}n$ gives $\Gamma_{RRT}^{a}(1)=1$ (by Lemma \ref{lemma:properties}) and $RR(1)\leq 1$ always, it is true that $\alpha\leq \alpha_{new}\leq k_{max}n$ exists.  
\end{remark}
  
\begin{remark}{\bf Choice of $k_{max}$:- }
For the successful operation of RRT-GARD, i.e., to estimate $k_{min}$ and hence ${\mathcal{S}}^{k_{min}}_{GARD} $ accurately, it is required that $k_{max}\geq k_{min}$. However, $k_{min}$ being a R.V is difficult to be known \textit{a priori}. Indeed, when $\sigma^2$ is small, it is true that $k_{min}=k_g$  when $\delta_{k_g}<\sqrt{\frac{{\bf g}_{min}}{\|{\bf g}_{out}\|_2}}$. However,  nothing is assumed to be known about $k_g$ too. Hence, we set $k_{max}=n-p-1$, the maximum value of $k$ upto which $\Gamma_{RRT}^{\alpha}(k)$ is defined. Since matrices involved in GARD will become rank deficient at the $n-p+1$th iteration, $n-p$ is the maximum  number of iterations possible for GARD. Hence $k_{max}=n-p-1$ practically involves running GARD upto its' maximum possible sparsity level. Please note that this choice of $k_{max}$ is independent of the outlier and inlier statistics. 
\end{remark}
 \begin{remark} $k_{max}$ is a predefined data independent quantity. However, situations may arise such that the GARD iterations in  TABLE \ref{tab:rrt-gard} be stopped at an intermediate iteration $\tilde{k}_{max}<k_{max}$ due to the rank deficiency of ${\bf A}^{k}=[{\bf X},\ {\bf I}^n_{\mathcal{S}^k_{GARD}}]$. In those situations, we set $RR(k)$ for $\tilde{k}_{max}<k\leq k_{max}$ to one. Since $\Gamma_{RRT}^{\alpha}(k)<1$, substituting $RR(k)=1$ for $k>\tilde{k}_{max}$ will not alter the outcome of RRT-GARD as long as $k_{min}\leq \tilde{k}_{max}$.  All the theoretical guarantees derived for RRT-GARD will also remain true as long as $\tilde{k}_{max}\geq k_{min}$. Note that when $\mathcal{S}_g\not\subseteq \mathcal{S}_{GARD}^{\tilde{k}_{max}}$, all support estimates produced by GARD will be adversely  affected by outliers.
\end{remark}
\subsection{Computational Complexity of the RRT-GARD}
The computational  complexity order of RRT-GARD and some popular  robust regression methods are given in TABLE \ref{tab:complexity}. For algorithms requiring \textit{a priori} knowledge of $\{\sigma^2$, $\|{\bf w}\|_2\}$ etc., we compute the overall complexity order after including the complexity of estimating $\sigma^2$  using (\ref{l1noise}) or (\ref{mad}).   GARD with $k_g$ iterations has complexity $O\left(p^3+k_g^3/3+(n+3k_g)p^2+3k_gnp\right)$\cite{gard}. RRT-GARD involves $n-p-1$ iterations of GARD. Hence, the complexity of RRT-GARD is of the order $O(n^3+p^3)$. Thus, when the number of outliers is very small, i.e., $k_g\ll n$, then the complexity of RRT-GARD is  higher than the complexity of GARD itself. However, when the number of outliers $k_g=O(n)$,  both RRT-GARD and GARD have similar complexity order. Further, once we include the $O(n^3)$ complexity of LAD based $\sigma^2$ estimation,  GARD and RRT-GARD have same overall  complexity order. When $k_g$ is low and M-estimation based $\sigma^2$ estimate is used, GARD has significantly lower complexity than RRT-GARD. However,  the performance of GARD with M-estimation based $\sigma^2$ estimate is very poor.  Also note that the complexity order of RRT-GARD  is comparable to popular SRIRR techniques like BPRR, RMAP, AROSI etc.   M-estimation is also oblivious to inlier statistics. However, the performance of M-estimation is much inferior  compared to RRT-GARD. Hence, in spite of its'  lower complexity \textit{viz. a viz.} RRT-GARD, M-estimation has limited utility. 
\section{Theoretical analysis of RRT-GARD}
In this section, we analytically compare the proposed  RRT-GARD algorithm and GARD($\sigma^2$) in terms of  exact outlier support recovery. The sufficient condition for outlier support recovery using RRT-GARD is given in Theorem \ref{thm:rrt-gard}.
\begin{theorem}\label{thm:rrt-gard}  Suppose that $\delta_{k_g}$ satisfies $\delta_{k_g}<\sqrt{\dfrac{{\bf g}_{min}}{2\|{\bf g}_{out}\|_2}}$ and inlier noise ${\bf w}\sim \mathcal{N}({\bf 0}_n,\sigma^2{\bf I}^n)$.  Then RRT-GARD identifies the  outlier support $\mathcal{S}_{g}$ with probability at least $(1-\alpha-1/n)$  if $\epsilon^{\sigma}<\min(\epsilon_{GARD},\epsilon_{RRT})$. Here $\epsilon_{RRT}=({{\bf g}_{min}-\delta_{k_g}^2\|{\bf g}_{out}\|_2})/({\frac{1}{\Gamma_{RRT}^{\alpha}(k_g)}+1+\sqrt{\frac{3}{2}}})$.  
\end{theorem}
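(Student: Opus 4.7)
The plan is to combine the two probabilistic ingredients already in hand: (i) Corollary~1, which governs the behaviour of GARD on the ``good'' sample path where $\|{\bf w}\|_2\leq \epsilon^{\sigma}$, and (ii) Theorem~\ref{thm:Beta}, which controls the tail of the residual-ratio sequence past the minimal-superset index $k_{min}$. The bridge between them is a \emph{deterministic} argument that the residual ratio at iteration $k_g$ itself lies below the threshold $\Gamma_{RRT}^{\alpha}(k_g)$. Put together, this will identify $k_{min}=k_g$ as the last index where $RR(k)\leq \Gamma_{RRT}^{\alpha}(k)$, so that RRT-GARD picks precisely $\mathcal{S}_g$.

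\textbf{Probabilistic skeleton.} Let $\mathcal{E}_1=\{\|{\bf w}\|_2\leq \epsilon^{\sigma}\}$ and $\mathcal{E}_2=\{RR(k)>\Gamma_{RRT}^{\alpha}(k)\text{ for all }k_{min}<k\leq k_{max}\}$. The concentration bound recalled in Section~\rom{2} gives $\mathbb{P}(\mathcal{E}_1)\geq 1-1/n$, and Theorem~\ref{thm:Beta} gives $\mathbb{P}(\mathcal{E}_2)\geq 1-\alpha$, so the union bound produces $\mathbb{P}(\mathcal{E}_1\cap\mathcal{E}_2)\geq 1-\alpha-1/n$. All remaining work will be carried out on this intersection. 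On $\mathcal{E}_1$, the hypothesis $\epsilon^{\sigma}\leq \epsilon_{GARD}$ together with $\delta_{k_g}<\sqrt{{\bf g}_{min}/(2\|{\bf g}_{out}\|_2)}$ lets us invoke Lemma~\ref{lemma_gard}/Corollary~1, so $\mathcal{S}^{k_g}_{GARD}=\mathcal{S}_g$, meaning $k_{min}=k_g$ and $\mathcal{S}^{k_g-1}_{GARD}\subset\mathcal{S}_g$ misses exactly one index, call it $i^{\star}$.

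\textbf{Deterministic ratio bound at $k=k_g$.} The numerator is easy: since $\mathrm{range}({\bf A}^{k_g})$ contains both ${\bf X}\boldsymbol{\beta}$ and ${\bf g}_{out}={\bf I}^n_{\mathcal{S}_g}{\bf g}_{out}(\mathcal{S}_g)$, one has ${\bf r}^{k_g}_{GARD}=({\bf I}^n-{\bf P}_{{\bf A}^{k_g}}){\bf w}$ and therefore $\|{\bf r}^{k_g}_{GARD}\|_2\leq \|{\bf w}\|_2\leq \epsilon^{\sigma}$. For the denominator, write ${\bf r}^{k_g-1}_{GARD}=({\bf I}^n-{\bf P}_{{\bf A}^{k_g-1}})\bigl({\bf g}_{out}(i^\star){\bf I}^n_{\{i^\star\}}+{\bf w}\bigr)$ and apply the reverse triangle inequality. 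The noise part contributes at most $\epsilon^{\sigma}$, while the signal part is bounded below by $|{\bf g}_{out}(i^\star)|\,\bigl\|({\bf I}^n-{\bf P}_{{\bf A}^{k_g-1}}){\bf I}^n_{\{i^\star\}}\bigr\|_2$, and the geometric quantity $\delta_{k_g}$ controls how much $e_{i^\star}$ can collapse under this projection. Mimicking the $\delta_{k_g}$-style decomposition used in the proof of Lemma~\ref{lemma_gard} in \cite{gard} (but keeping only the residual estimate, not the greedy-selection comparison, which is why the constants shrink from $(2,\sqrt{6})$ to $(1,\sqrt{3/2})$), one obtains the target lower bound
\begin{equation*}
\|{\bf r}^{k_g-1}_{GARD}\|_2\;\geq\;{\bf g}_{min}-\delta_{k_g}^2\|{\bf g}_{out}\|_2-\bigl(1+\sqrt{3/2}\bigr)\epsilon^{\sigma}.
\end{equation*}
Dividing the two bounds and imposing $RR(k_g)\leq \Gamma_{RRT}^{\alpha}(k_g)$ is equivalent, after clearing denominators, to $\epsilon^{\sigma}\leq \epsilon_{RRT}$, which is the hypothesis.

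\textbf{Closing the argument and the main obstacle.} On $\mathcal{E}_1\cap\mathcal{E}_2$, the previous step places $k_g$ in $\{k:RR(k)\leq \Gamma_{RRT}^{\alpha}(k)\}$, while $\mathcal{E}_2$ combined with $k_{min}=k_g$ excludes every $k>k_g$ from that set, so the maximum defining $k_{RRT}$ equals $k_g$ and hence $\mathcal{S}_{RRT}=\mathcal{S}^{k_g}_{GARD}=\mathcal{S}_g$. The main technical hurdle is pinning down the lower bound on $\|{\bf r}^{k_g-1}_{GARD}\|_2$ with precisely the constants $(1,\sqrt{3/2})$ and numerator ${\bf g}_{min}-\delta_{k_g}^2\|{\bf g}_{out}\|_2$; everything else is either a union bound, an invocation of Lemma~\ref{lemma_gard}, or elementary algebra. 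I expect this to go through by isolating the contribution of $e_{i^\star}$ after partialling out the already-identified correct support indices (which are orthogonal projections) and bounding the leakage of $e_{i^\star}$ into $\mathrm{span}({\bf Q})$ by $\delta_{k_g}\|{\bf g}_{out}\|_2/{\bf g}_{min}$, exactly as in \cite{gard}.
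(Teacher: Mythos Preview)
Your proposal is correct and follows essentially the same route as the paper's proof: intersect the noise-concentration event with the Theorem~\ref{thm:Beta} event, invoke Lemma~\ref{lemma_gard} on the former to get $k_{min}=k_g$, and then use the residual lower bound $\|{\bf r}^{k_g-1}_{GARD}\|_2\geq {\bf g}_{min}-\delta_{k_g}^2\|{\bf g}_{out}\|_2-(1+\sqrt{3/2})\|{\bf w}\|_2$ cited from \cite{gard} to force $RR(k_g)\leq\Gamma_{RRT}^{\alpha}(k_g)$ under $\epsilon^\sigma\leq\epsilon_{RRT}$. The paper labels the three events $\mathcal{A}_1,\mathcal{A}_2,\mathcal{A}_3$ rather than your $\mathcal{E}_1,\mathcal{E}_2$ and simply quotes the denominator bound from \cite{gard} instead of re-deriving it, but the union-bound arithmetic and the role of each ingredient are identical.
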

\begin{proof}
Please see Appendix C.
\end{proof}

 \begin{figure*}[htb]
\begin{multicols}{3}

    \includegraphics[width=\linewidth]{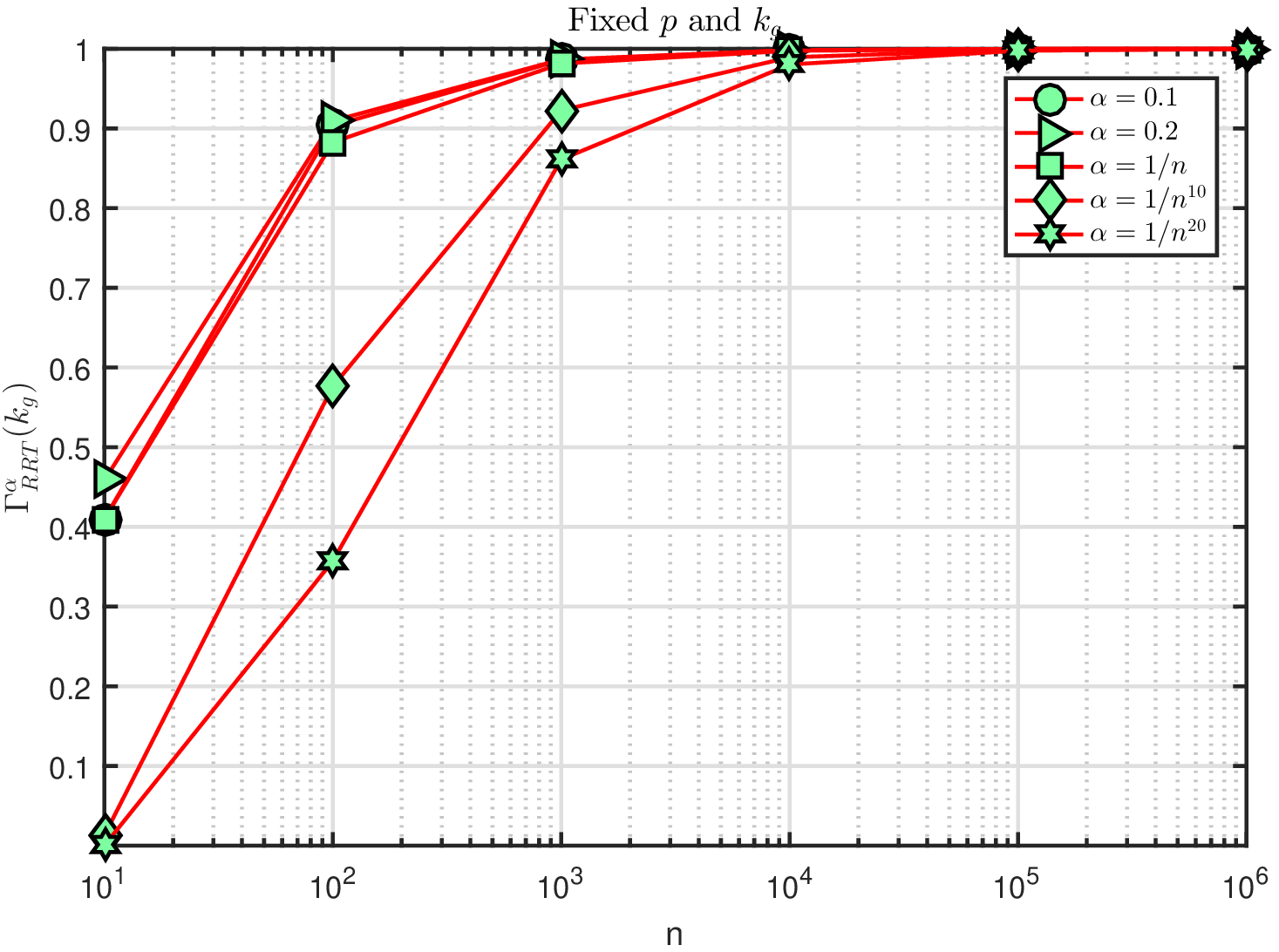} 
    \caption*{a). $k_g=2$, $p=2$.   }
    
    \includegraphics[width=\linewidth]{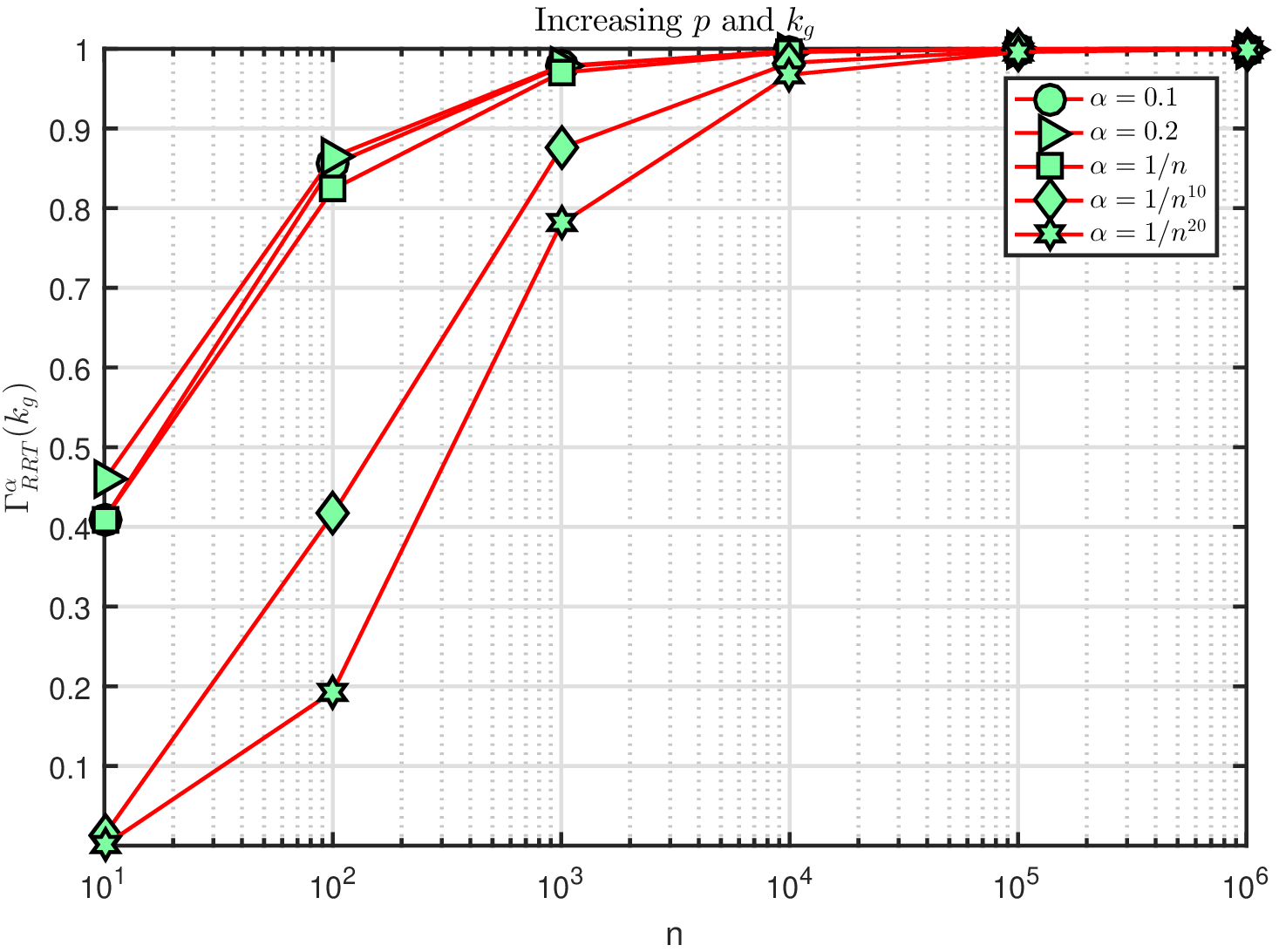}
    \caption*{b). $k_g=0.2n$, $p=0.2n$.   }
    \includegraphics[width=\linewidth]{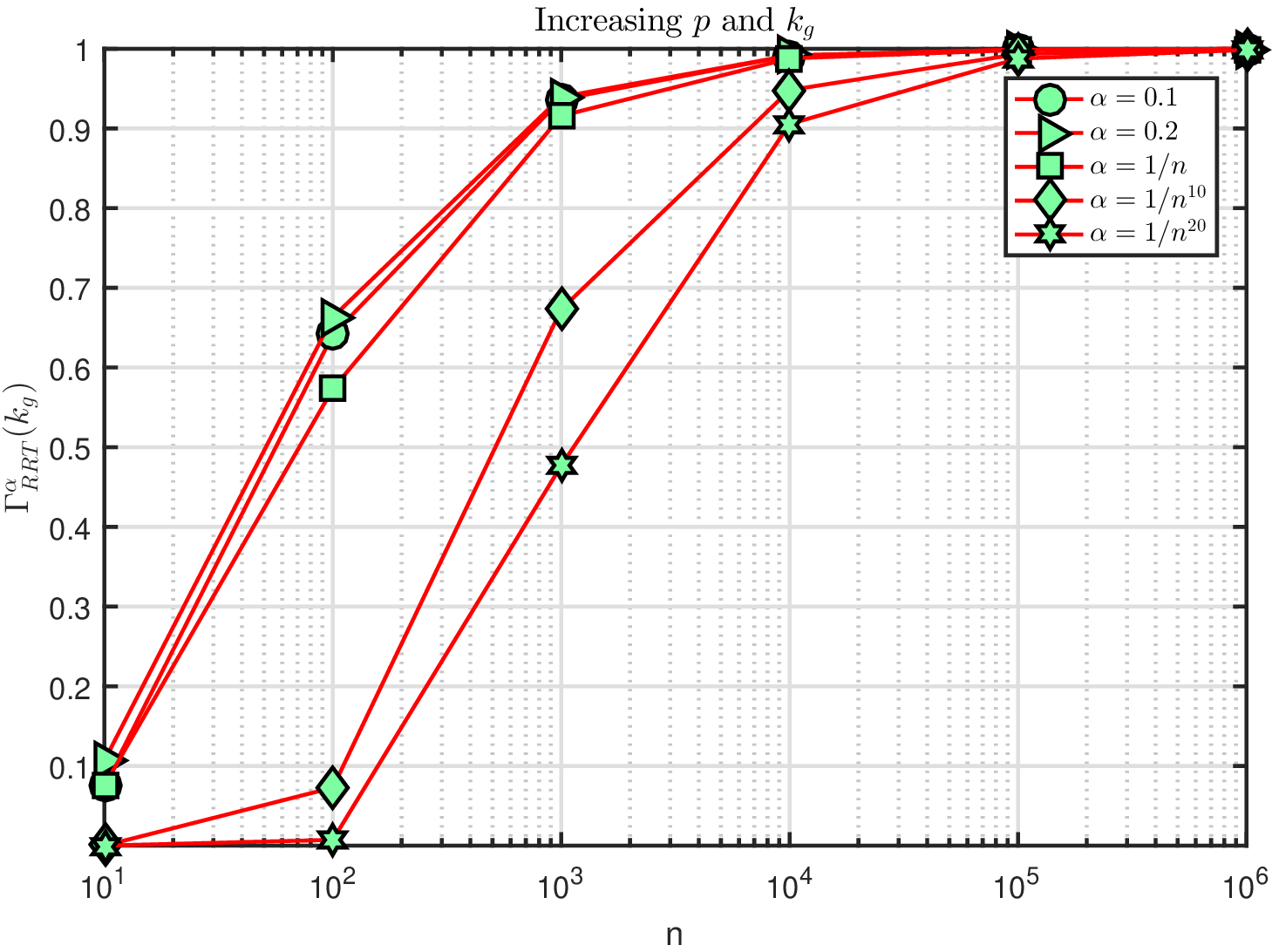}
    \caption*{c). $k_g=0.4n$, $p=0.4n$.   }
   \end{multicols} 
   \caption{Verifying Theorem \ref{thm:asymptotic}.  All choices  of $\alpha$ in a), b) and c) satisfy $\alpha_{lim}=0.$ a) has $d_{lim}=0$, b) has $d_{lim}=0.4$ and c) has $d_{lim}=0.8$.}
   \label{fig:asymptotic}
\end{figure*}
The performance guarantees for RRT-GARD in Theorem \ref{thm:rrt-gard} and GARD($\sigma^2$) in Corollary 1 can be compared in terms of three properties, \textit{viz.} matrix conditions, success probability and  outlier to inlier norm ratio (OINR) which is defined as the minimum value of ${\bf g}_{min}/\epsilon^{\sigma}$ required for the successful outlier detection. Smaller the value of OINR, the more capable an algorithm is in terms of outlier support recovery. 
Theorem \ref{thm:rrt-gard} implies that RRT-GARD can identify all the outliers under the same conditions on the design matrix ${\bf X}$ required by  GARD($\sigma^2$). The success probability of RRT-GARD is smaller than GARD($\sigma^2$) by a factor $\alpha$.  Further, the OINR  of  GARD($\sigma^2$) given by $OINR_{GARD}={\bf g}_{min}/\epsilon_{GARD}$ is smaller than the OINR of RRT-GARD given by $OINR_{RRT}={\bf g}_{min}/\min(\epsilon_{RRT},\epsilon_{GARD})$, i.e.., GARD($\sigma^2$) can  correctly identify outliers of smaller magnitude than RRT-GARD.  Reiterating,  RRT-GARD  unlike GARD($\sigma^2$) is oblivious to $\sigma^2$ and this slight performance loss is the price paid for not knowing $\sigma^2$ \textit{a priori}.  Note that $\epsilon_{RRT}$ can be bounded as follows. 
\begin{equation}
\epsilon_{RRT}\geq \dfrac{{\bf g}_{min}-2\delta_{k_g}^2\|{\bf g}\|_2}{\dfrac{1}{\Gamma_{RRT}^{\alpha}(k_g)}+1+\sqrt{\dfrac{3}{2}}}=\dfrac{(4+2\sqrt{6})\epsilon_{GARD}}{2+\sqrt{6}+\dfrac{2}{\Gamma_{RRT}^{\alpha}(k_g)}}
\end{equation}  
Hence the extra OINR required by RRT-GARD quantified by $OINR_{extra}={OINR_{RRT}}/{OINR_{GARD}}$ satisfies
 \begin{equation}\label{OINR_extra}
  1\leq OINR_{extra}\leq \max\left(1,\frac{2+\sqrt{6}+\frac{2}{\Gamma_{RRT}^{\alpha}(k_g)}}{4+2\sqrt{6}}\right).
 \end{equation}
 By Lemma \ref{lemma:properties},   $\Gamma_{RRT}^{\alpha}(k_g)$  monotonically increases from zero to one as $\alpha$ increases from $0$ to $k_{max}(n-k_g+1)$.  Hence,  $OINR_{extra}$ in (\ref{OINR_extra}) monotonically decreases from infinity for $\Gamma_{RRT}^{\alpha}(k_g)=0$ (i.e., $\alpha=0$) to one for $\Gamma_{RRT}^{\alpha}(k_g)=1$ (i.e., $\alpha=k_{max}(n-k_g+1)$).   Hence, a value of $\Gamma_{RRT}^{\alpha}(k_g)$ close to one  is favourable in terms of $OINR_{extra}$. This requires setting the value of $\alpha$ to a high value which will reduce the probability of outlier support recovery given by $1-\alpha-1/n$.  However, when the sample size $n$ increases to $\infty$, it is possible to achieve both $\alpha\rightarrow 0$ and $\Gamma_{RRT}^{\alpha}(k_g)\rightarrow 1$ simultaneously. This behaviour of RRT-GARD is discussed next. 
\subsection{Asymptotic behaviour of RRT-GARD} 
 In this section, we discuss the behaviour of RRT-GARD and $OINR_{extra}$ as sample size $n\rightarrow \infty$. The asymptotic behaviour of RRT-GARD depends crucially on the behaviour of $\Gamma_{RRT}^{\alpha}(k_g)$ as $n \rightarrow \infty$ which is discussed in the following theorem.
 \begin{theorem}
\label{thm:asymptotic} Let $d_{lim}=\underset{n \rightarrow \infty}{\lim}\dfrac{p+k_g}{n}$ and $\alpha_{lim}=\underset{n \rightarrow \infty}{\lim}\dfrac{\log(\alpha)}{n}$. $\Gamma_{RRT}^{\alpha}(k_g)=\sqrt{F_{\frac{n-p-k_g}{2},0.5}^{-1}\left(\frac{\alpha}{k_{max}(n-k_g+1)}\right)}$  with $k_{max}=n-p-1$ satisfies the following  limits. \\
a). $\underset{n \rightarrow \infty}{\lim}\Gamma^{\alpha}_{RRT}(k_g)=1$ if  $0\leq d_{lim}<1$ and $\alpha_{lim}=0$. \\
b). $0<\underset{n \rightarrow \infty}{\lim}\Gamma^{\alpha}_{RRT}(k_g)=e^{\frac{\alpha_{lim}}{(1-d_{lim})}}<1$  if  $0\leq d_{lim}<1$ and $-\infty<\alpha_{lim}<0$. \\
c). $\underset{n \rightarrow \infty}{\lim}\Gamma^{\alpha}_{RRT}(k_g)=0$ if  $0\leq d_{lim}<1$ and $\alpha_{lim}=-\infty$.
\end{theorem}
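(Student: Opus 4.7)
The plan is to reduce Theorem~\ref{thm:asymptotic} to a statement about the asymptotic behavior of the inverse of the incomplete Beta CDF. Writing $a_n = (n-p-k_g)/2$ and $q_n = \alpha/\bigl((n-p-1)(n-k_g+1)\bigr)$, we have $\Gamma^{\alpha}_{RRT}(k_g) = \sqrt{x_n}$ with $x_n := F^{-1}_{a_n,1/2}(q_n)$. Under $d_{lim}<1$, $a_n \sim n(1-d_{lim})/2 \to \infty$ and $q_n \to 0$, so the task is to derive a sharp large-$a$ expansion of $F_{a,1/2}(x)$, invert it to obtain the first-order behavior $\log x_n \sim \log q_n / a_n$, and then read off the three cases from the value of $\alpha_{lim}$.

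A single integration by parts on $\int_0^x t^{a-1}(1-t)^{-1/2}\,dt$, differentiating $(1-t)^{-1/2}$ and integrating $t^{a-1}$, produces the leading term $x^a(1-x)^{-1/2}/a$ with a remainder whose ratio to the leading term is $O\bigl(1/[a(1-x)]\bigr)$. Combining this with the Stirling estimate $B(a,1/2) = \Gamma(a)\Gamma(1/2)/\Gamma(a+1/2) \sim \sqrt{\pi/a}$ gives the one-term expansion $F_{a,1/2}(x) \sim x^a(1-x)^{-1/2}/\sqrt{\pi a}$. Equating this to $q_n$ at $x = x_n$ and taking logarithms yields $a_n \log x_n = \log q_n + \tfrac{1}{2}\log(\pi a_n) - \tfrac{1}{2}\log(1-x_n) + O\bigl(1/[a_n(1-x_n)]\bigr)$; dividing by $a_n$, all of the correction terms tend to $0$ as long as $1-x_n$ does not decay exponentially in $n$.

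The final calculation is then direct: since $\log q_n = \log\alpha - 2\log n + O(1)$ and $a_n \sim n(1-d_{lim})/2$, one has $\log q_n/a_n = \tfrac{2}{1-d_{lim}} \cdot \log\alpha/n + O(\log n/n) \to 2\alpha_{lim}/(1-d_{lim})$, where the last step uses $\log n / n \to 0$. Hence $x_n \to e^{2\alpha_{lim}/(1-d_{lim})}$, equal to $1$ when $\alpha_{lim}=0$, a number in $(0,1)$ when $\alpha_{lim}\in(-\infty,0)$, and $0$ when $\alpha_{lim}=-\infty$; taking square roots recovers the three stated limits. The main obstacle is a delicate uniformity issue in case (a), where $x_n \to 1$ and the leading-order expansion could break if $1-x_n$ decayed exponentially. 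This is resolved by a bootstrap: the leading identity $\log x_n \sim \log q_n/a_n$ forces $-\log x_n = O(\log n / n)$, so $1-x_n = O(\log n / n)$ is only polynomially small, which in turn makes the correction $\log(1-x_n)/(2a_n) = O(\log n / n) = o(1)$ and validates the approximation a posteriori.
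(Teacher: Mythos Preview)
Your argument is correct and lands on the same first-order relation the paper uses, namely $\log x_n \approx (\log q_n)/a_n \to 2\alpha_{lim}/(1-d_{lim})$, from which all three cases follow. The execution differs in a useful way: the paper simply quotes a known series for $F^{-1}_{a,b}(z)$ near $z=0$ (its leading term is exactly your $(a_n q_n B(a_n,1/2))^{1/a_n}$) and checks that the higher-order coefficients are $O(1/a_n)$, whereas you derive the forward asymptotic $F_{a,1/2}(x)\sim x^a(1-x)^{-1/2}/\sqrt{\pi a}$ from a single integration by parts and then invert. Your route is more self-contained and, importantly, confronts the uniformity issue in case~(a) head-on via the bootstrap, something the paper's proof leaves implicit in its appeal to the quoted series. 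Two minor points: the sign in front of $\tfrac12\log(1-x_n)$ should be $+$ rather than $-$ (harmless, since the term is $o(1)$ after division by $a_n$), and your bootstrap bound ``$-\log x_n=O(\log n/n)$'' is only literally true when $\alpha$ is polynomially small; in general one gets $-\log x_n\sim|\log q_n|/a_n$, hence $a_n(1-x_n)\sim|\log q_n|\to\infty$, which is exactly what you need to kill both the $O\!\bigl(1/[a_n(1-x_n)]\bigr)$ remainder and the $\log(1-x_n)/a_n$ correction.
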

\begin{proof}
Please see Appendix D.
\end{proof}
Please note that the maximum number of outliers any algorithm can tolerate is $n-p$, i.e., $k_g$ should satisfy $\frac{p+k_g}{n}<1$ for all $n$. Hence, the condition $0\leq d_{lim}<1$  will be trivially met in all practical scenarios. Theorem \ref{thm:asymptotic} implies that when $\alpha$ is a constant or a function of $n$ that decreases  to zero with increasing $n$  at a rate slower than $a^{-n}$ for some $a>1$, (i.e., $\underset{n \rightarrow \infty}{\lim}\log(\alpha)/n=0$), then it is possible to achieve a value of $\Gamma_{RRT}^{\alpha}(k_g)$ arbitrarily close  to one as $n\rightarrow \infty$.  Choices of $\alpha$ that satisfy $\underset{n \rightarrow \infty}{\lim}\log(\alpha)/n=0$ other than  $\alpha=\text{constant}$ include $\alpha=1/\log(n)$, $\alpha=1/n^c$ for some $c>0$ etc. However, if one decreases $\alpha$ to zero at a rate $a^{-n}$ for some $a>1$ (i.e., $-\infty<\underset{n \rightarrow \infty}{\lim}\log(\alpha)/n<0$) , then it is impossible to achieve a value of $\Gamma_{RRT}^{\alpha}(k_g)$ closer to one. When $\alpha$ is reduced to zero at a rate faster than $a^{-n}$ for some $a>1$ (say $a^{-n^2}$), then $\Gamma_{RRT}^{\alpha}(k_g)$  converges to zero as $n\rightarrow \infty$.  Theorem \ref{thm:asymptotic} is numerically validated in Fig. \ref{fig:asymptotic} where  it is clear that with increasing $n$, $\Gamma_{RRT}^{\alpha}(k_g)$  converges to one when  $d_{lim}=0$, $d_{lim}=0.4$ and $d_{lim}=0.8$.
Theorem \ref{thm:consistency} presented next is a direct consequence of Theorem \ref{thm:asymptotic}.

\begin{theorem} \label{thm:consistency}Consider a  situation where problem dimensions $(n,p,k_g)$ increase to $\infty$ satisfying the conditions in Lemma \ref{lemma_gard}, i.e., $\delta_{k_g}<\sqrt{\dfrac{{\bf g}_{min}}{2\|{\bf g}_{out}\|_2}}$ and $\epsilon^{\sigma}\leq \epsilon_{GARD}$. Then the following statements are true. \\
1). GARD($\sigma^2$) correctly identifies the outlier support as $n\rightarrow \infty$, i.e., $\underset{n\rightarrow \infty}{\lim}\mathbb{P}({\mathcal{S}}_{GARD}=\mathcal{S}_g)=1$. \\
2). RRT-GARD with  $\alpha$ satisfying $\alpha\rightarrow 0$ and $\log(\alpha)/n\rightarrow 0$ as $n\rightarrow \infty$ also correctly identifies the outlier support as $n\rightarrow \infty$, i.e., $\underset{n\rightarrow \infty}{\lim}\mathbb{P}({\mathcal{S}}_{RRT}=\mathcal{S}_g)=1$. 
\end{theorem}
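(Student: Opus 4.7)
Part 1 is almost immediate from Corollary 1: under the stated matrix and noise hypotheses, GARD($\sigma^2$) recovers $\mathcal{S}_g$ with probability at least $1-1/n$, so $\lim_{n\to\infty}\mathbb{P}(\mathcal{S}_{GARD}=\mathcal{S}_g)\geq \lim_{n\to\infty}(1-1/n)=1$. The substantive work lies in Part 2, where the plan is to couple Theorem \ref{thm:rrt-gard} with Theorem \ref{thm:asymptotic}, showing that the sufficient condition for the RRT-GARD finite-sample guarantee is eventually met and that the attendant probability tends to one.

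More precisely, I would argue that for all $n$ beyond some threshold one has $\epsilon_{RRT}>\epsilon_{GARD}$, so that the hypothesis $\epsilon^{\sigma}\leq \epsilon_{GARD}$ given in the statement upgrades to $\epsilon^{\sigma}<\min(\epsilon_{GARD},\epsilon_{RRT})$ (at least up to a boundary case), which is exactly what Theorem \ref{thm:rrt-gard} needs. Under the choice $\alpha\to 0$ with $\log(\alpha)/n\to 0$, part (a) of Theorem \ref{thm:asymptotic} gives $\Gamma_{RRT}^{\alpha}(k_g)\to 1$. Hence the denominator of $\epsilon_{RRT}$, namely $1/\Gamma_{RRT}^{\alpha}(k_g)+1+\sqrt{3/2}$, converges to $2+\sqrt{3/2}$, which is strictly less than the denominator $2+\sqrt{6}$ of $\epsilon_{GARD}$ since $\sqrt{3/2}=\sqrt{6}/2$. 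Simultaneously, the matrix condition $\delta_{k_g}^2\|{\bf g}_{out}\|_2<{\bf g}_{min}/2$ makes the numerator ${\bf g}_{min}-\delta_{k_g}^2\|{\bf g}_{out}\|_2$ of $\epsilon_{RRT}$ strictly exceed the numerator ${\bf g}_{min}-2\delta_{k_g}^2\|{\bf g}_{out}\|_2$ of $\epsilon_{GARD}$. The two inequalities combine to yield $\epsilon_{RRT}>\epsilon_{GARD}$ for $n$ large, so Theorem \ref{thm:rrt-gard} applies and delivers $\mathbb{P}(\mathcal{S}_{RRT}=\mathcal{S}_g)\geq 1-\alpha-1/n$; sending $n\to\infty$ drives both $\alpha$ and $1/n$ to zero, giving the claimed limit.

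The main delicate point is the applicability of Theorem \ref{thm:asymptotic}(a), which additionally requires $0\leq d_{lim}<1$ for $d_{lim}=\lim_{n\to\infty}(p+k_g)/n$. Part 2 of Lemma \ref{lemma:properties} forces $k_g\leq n-p-1$, so $(p+k_g)/n<1$ for every finite $n$, yet this by itself does not rule out $d_{lim}=1$ in the limit; I would therefore carry $d_{lim}<1$ as an implicit side assumption on the problem sequence, which is the only regime in which consistent outlier support recovery is possible for any algorithm, so no real generality is lost. A minor secondary issue is that Theorem \ref{thm:rrt-gard} uses a strict inequality whereas the hypothesis only gives $\epsilon^{\sigma}\leq\epsilon_{GARD}$; the strict gap $\epsilon_{RRT}>\epsilon_{GARD}$ established above resolves this everywhere except on the single boundary $\epsilon^{\sigma}=\epsilon_{GARD}$, which is easily absorbed by a routine perturbation argument before passing to the limit.
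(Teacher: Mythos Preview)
Your proposal is correct and follows essentially the same route as the paper: Part 1 via Corollary 1, and Part 2 by invoking Theorem \ref{thm:asymptotic}(a) to drive $\Gamma_{RRT}^{\alpha}(k_g)\to 1$, whence $\min(\epsilon_{RRT},\epsilon_{GARD})$ reduces to $\epsilon_{GARD}$ and the bound $1-\alpha-1/n$ from Theorem \ref{thm:rrt-gard} tends to one. Your explicit numerator/denominator comparison showing $\epsilon_{RRT}>\epsilon_{GARD}$ eventually is in fact slightly more careful than the paper's own argument, which simply asserts $\min(\epsilon_{RRT},\epsilon_{GARD})\to\epsilon_{GARD}$ without spelling out the strict ordering; your remarks on the implicit side assumption $d_{lim}<1$ and the boundary case $\epsilon^{\sigma}=\epsilon_{GARD}$ likewise address points the paper leaves tacit.
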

\begin{proof}  Statement 1) follows from  Corollary  1 which states that   $\mathbb{P}({\mathcal{S}}_{GARD}=\mathcal{S}_g)\geq 1-1/n$ when $\delta_{k_g}<\sqrt{\dfrac{{\bf g}_{min}}{2\|{\bf g}_{out}\|_2}}$ and $\epsilon^{\sigma}\leq \epsilon_{GARD}$. By Theorem \ref{thm:asymptotic},  $\log(\alpha)/n\rightarrow 0$ implies that $\Gamma_{RRT}^{\alpha}(k_g)\rightarrow 1$ as $n\rightarrow \infty$ which in turn implies that  $OINR_{extra}\rightarrow  1$ and $\min(\epsilon_{RRT},\epsilon_{GARD})\rightarrow \epsilon_{GARD}$.  This along with $\alpha\rightarrow 0$ as  $n\rightarrow \infty$ implies that the probability bound $\mathbb{P}({\mathcal{S}}_{RRT}=\mathcal{S}_g)\geq 1-1/n-\alpha$ in Theorem \ref{thm:rrt-gard} converges to one.  This proves  statement 2).
\end{proof}
\begin{cor} From the proof of Theorem \ref{thm:consistency}, one can see that  as $n\rightarrow \infty$, the success probability of RRT-GARD given by $\mathbb{P}({\mathcal{S}}_{RRT}=\mathcal{S}_g)\geq 1-1/n-\alpha$ approximately equals the success probability of $GARD(\sigma^2)$ given by $\mathbb{P}({\mathcal{S}}_{GARD}=\mathcal{S}_g)\geq 1-1/n$. Further, the OINR of GARD($\sigma^2$) and RRT-GARD are also approximately same, i.e,  $OINR_{extra}\approx 1$. Hence, both GARD($\sigma^2$)  and RRT-GARD behave similarly in terms of outlier support recovery as $n\rightarrow \infty$, i.e., they are asymptotically equivalent.     
\end{cor}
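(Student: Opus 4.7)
The plan is to view this corollary as essentially a bookkeeping consequence of Theorem \ref{thm:consistency} together with the quantitative bounds from Theorem \ref{thm:rrt-gard} and Theorem \ref{thm:asymptotic}. I would organize the argument around the two claims separately, \emph{viz.} the agreement of success probabilities and the agreement of outlier-to-inlier norm ratios, and then conclude asymptotic equivalence.

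First I would handle the OINR claim. Recall the sandwich $1\leq OINR_{extra}\leq \max\bigl(1,\tfrac{2+\sqrt 6+2/\Gamma_{RRT}^{\alpha}(k_g)}{4+2\sqrt 6}\bigr)$ derived from (\ref{OINR_extra}). Since the hypothesis $\log(\alpha)/n\to 0$ together with $0\leq d_{lim}<1$ (which, as remarked after Theorem \ref{thm:asymptotic}, is automatic since any algorithm needs $k_g+p<n$) falls into case (a) of Theorem \ref{thm:asymptotic}, we obtain $\Gamma_{RRT}^{\alpha}(k_g)\to 1$. Substituting this into the upper bound above makes it collapse to $\max(1,1)=1$, squeezing $OINR_{extra}\to 1$. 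In particular, $\epsilon_{RRT}\to\epsilon_{GARD}$ (up to a vanishing correction), so $\min(\epsilon_{RRT},\epsilon_{GARD})\to\epsilon_{GARD}$, which means the sufficient condition for RRT-GARD in Theorem \ref{thm:rrt-gard} asymptotically coincides with the sufficient condition for GARD($\sigma^2$) in Corollary 1.

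Next I would address the success probability. From Theorem \ref{thm:rrt-gard}, $\mathbb{P}(\mathcal{S}_{RRT}=\mathcal{S}_g)\geq 1-\alpha-1/n$, while Corollary 1 gives $\mathbb{P}(\mathcal{S}_{GARD}=\mathcal{S}_g)\geq 1-1/n$. Under the standing hypothesis $\alpha\to 0$, the gap between the two lower bounds is exactly $\alpha$, which vanishes; hence the two guarantees differ only by a $o(1)$ term. Combining with the OINR conclusion from the previous paragraph, the two algorithms share the same sample condition, the same matrix condition $\delta_{k_g}<\sqrt{{\bf g}_{min}/(2\|{\bf g}_{out}\|_2)}$ inherited from Lemma \ref{lemma_gard}, and the same $1-o(1)$ probability of exact outlier support recovery, which is the asymptotic equivalence asserted in the corollary.

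There is no real obstacle here beyond verifying that the regime $\log(\alpha)/n\to 0$, $\alpha\to 0$ is consistent with part (a) of Theorem \ref{thm:asymptotic}; the rest is a direct substitution into bounds already established. The only mild subtlety worth flagging in the write-up is that the word \emph{approximately} in the corollary must be understood in the limiting sense (the gaps $\alpha$ in probability and $OINR_{extra}-1$ in norm ratio are both $o(1)$ under the stated scaling of $\alpha$), rather than as an exact equality at finite $n$.
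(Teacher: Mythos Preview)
Your proposal is correct and follows essentially the same route as the paper: the corollary is not given a separate proof, but is read off directly from the proof of Theorem \ref{thm:consistency}, where $\log(\alpha)/n\to 0$ and Theorem \ref{thm:asymptotic}(a) yield $\Gamma_{RRT}^{\alpha}(k_g)\to 1$, forcing $OINR_{extra}\to 1$ and $\min(\epsilon_{RRT},\epsilon_{GARD})\to\epsilon_{GARD}$, while $\alpha\to 0$ erases the gap between the two probability bounds. Your write-up simply makes these steps more explicit and adds the useful caveat that ``approximately'' is to be read as $o(1)$; nothing further is needed.
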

\begin{cor} Theorem \ref{thm:consistency} implies that all choices of $\alpha$ satisfying $\underset{n\rightarrow \infty}{\lim}\alpha=0$ and   $\underset{n\rightarrow \infty}{\lim}\log(\alpha)/n=0$ deliver $\mathbb{P}({\mathcal{S}}_{RRT}=\mathcal{S}_g)\approx 1$ as $n\rightarrow \infty$.  These constraints are satisfied by a very wide range of adaptations like $\alpha=1/n$ and $\alpha=1/n^{10}$.  Hence,    RRT-GARD is asymptotically tuning free as long as $\alpha$ belongs to  this very broad class of functions.
\end{cor}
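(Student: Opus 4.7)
The plan is to observe that this corollary is essentially an interpretation of statement 2) of Theorem \ref{thm:consistency} rather than an independent result, so the proof reduces to checking that the concrete families of sequences $\alpha(n)$ listed in the statement satisfy the two asymptotic hypotheses $\underset{n\rightarrow\infty}{\lim}\alpha=0$ and $\underset{n\rightarrow\infty}{\lim}\log(\alpha)/n=0$ that Theorem \ref{thm:consistency} requires, and then to translate the conclusion into the language of tuning-freeness.

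First I would verify the two limit conditions for $\alpha=1/n^c$ with an arbitrary constant $c>0$, which subsumes both $\alpha=1/n$ and $\alpha=1/n^{10}$. The first limit $\alpha=n^{-c}\rightarrow 0$ is immediate, while the second reduces to $\log(\alpha)/n=-c\log(n)/n$, which tends to zero by the elementary fact that $\log(n)/n\rightarrow 0$. A parallel verification handles other natural candidates such as $\alpha=1/\log(n)$ or any sequence decaying at a sub-exponential rate. More conceptually, the condition $\log(\alpha)/n\rightarrow 0$ is exactly the statement that $\alpha$ decays strictly slower than $a^{-n}$ for every $a>1$, so every polynomial or poly-logarithmic schedule qualifies.

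Next I would invoke statement 2) of Theorem \ref{thm:consistency} for each such $\alpha(n)$ to conclude $\underset{n\rightarrow\infty}{\lim}\mathbb{P}(\mathcal{S}_{RRT}=\mathcal{S}_g)=1$. Since the probability bound $1-1/n-\alpha$ provided in Theorem \ref{thm:rrt-gard} converges to one for every sequence in the admissible class, the limiting performance is the same regardless of which specific sequence is chosen. In particular, practitioners who vary $\alpha$ over orders of magnitude inside this class (say between $1/n$ and $1/n^{10}$) obtain asymptotically indistinguishable support recovery guarantees, which is precisely the notion of asymptotic tuning-freeness asserted.

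There is no real obstacle here: the entire argument is a direct consequence of Theorem \ref{thm:consistency} combined with routine logarithm arithmetic, and the only content beyond Theorem \ref{thm:consistency} is the explicit observation that the admissible class $\{\alpha(n):\alpha\to 0,\ \log(\alpha)/n\to 0\}$ is broad enough to include essentially every natural schedule a user might try. The mild subtlety worth mentioning is that the class excludes super-exponentially decaying schedules such as $\alpha=a^{-n^2}$, for which Theorem \ref{thm:asymptotic}(c) would force $\Gamma_{RRT}^{\alpha}(k_g)\rightarrow 0$ and hence $OINR_{extra}\rightarrow\infty$; this boundary is what gives the tuning-freeness claim a well-defined scope rather than being vacuous.
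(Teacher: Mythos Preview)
Your proposal is correct and matches the paper's approach: the corollary is stated without a separate proof in the paper, precisely because it is an immediate interpretation of statement 2) of Theorem \ref{thm:consistency}, and your verification that polynomial schedules $\alpha=1/n^c$ satisfy the two limit hypotheses is exactly the routine check the paper leaves implicit.
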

\begin{cor}{  Please note that Theorem \ref{thm:consistency} does not implies that RRT-GARD can recover the outlier support with probability  tending towards one asymptotically for all sampling regimes satisfying $d_{lim}<1$.  This is because of the fact that GARD itself can recover outlier support with such accuracy only  when the sampling regime  satisfies  the regularity conditions in Lemma \ref{lemma_gard}. However, the $(n,p,k_g)$ regime where  these  regularity conditions are satisfied is not explicitly charecterized  in open literature to the best of our knowledge. Since no algorithm can correct outliers when $k_g>n-p$, this  not yet charecterized sampling regime where the regularity conditions in Lemma \ref{lemma_gard} is satisfied should also satisfy $d_{lim}<1$.   Hence, Theorem \ref{thm:consistency} states in all sampling regimes where GARD can deliver asymptotically correct outlier support recovery, RRT-GARD can also  deliver the same. }
\end{cor}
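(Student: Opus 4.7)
The plan is to treat this corollary as a logical clarification of the scope of Theorem \ref{thm:consistency} rather than as an independent proposition, and to make the argument by carefully tracking which hypotheses are active. First I would emphasize that both conclusions of Theorem \ref{thm:consistency} are stated \emph{under} the standing hypothesis of Lemma \ref{lemma_gard}, namely $\delta_{k_g}<\sqrt{{\bf g}_{min}/(2\|{\bf g}_{out}\|_2)}$ and $\epsilon^{\sigma}\leq \epsilon_{GARD}$. Therefore nothing in Theorem \ref{thm:consistency} asserts recovery in a regime broader than the one in which GARD itself is guaranteed to work. This already blocks any interpretation that RRT-GARD achieves asymptotic recovery throughout the entire feasibility region $d_{lim}<1$.

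Next I would argue that the $(n,p,k_g)$ triples admitting the Lemma \ref{lemma_gard} hypothesis necessarily sit inside the regime $d_{lim}\leq 1$. The key observation is that the augmented matrix ${\bf A}^k=[{\bf X},\ {\bf I}^n_{\mathcal{S}^k_{GARD}}]$ appearing at iteration $k$ of GARD has $p+k$ columns in $\mathbb{R}^n$, so it loses full column rank once $p+k>n$. Hence GARD cannot even execute $k_g$ iterations unless $k_g\leq n-p$. For the angle quantity $\delta_{k_g}$ of Definition~1 to satisfy $\delta_{k_g}<1$ (necessary for Lemma \ref{lemma_gard}), the regressor subspace must be transversal to \emph{every} $k_g$-dimensional coordinate subspace, which again forces $p+k_g\leq n$ with strict inequality in any non-degenerate setting. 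Passing to the limit yields $d_{lim}\leq 1$, and in the strict-transversality regime one in fact has $d_{lim}<1$, establishing the containment asserted in the corollary.

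With the containment in hand, the final sentence of the corollary follows by a direct reading of Theorem \ref{thm:consistency}: on any sequence $(n,p,k_g)\to\infty$ along which the Lemma \ref{lemma_gard} conditions are preserved, statement (1) gives $\mathbb{P}(\mathcal{S}_{GARD}=\mathcal{S}_g)\to 1$ and statement (2) gives $\mathbb{P}(\mathcal{S}_{RRT}=\mathcal{S}_g)\to 1$ for any $\alpha$ with $\alpha\to 0$ and $\log(\alpha)/n\to 0$. Thus, on precisely the (possibly implicit) sampling regime on which GARD is asymptotically exact, RRT-GARD inherits the same asymptotic exactness, which is what the corollary asserts.

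The main obstacle is not a calculation but a scoping issue: the literature does not explicitly characterize the $(n,p,k_g)$ asymptotics for which $\delta_{k_g}<\sqrt{{\bf g}_{min}/(2\|{\bf g}_{out}\|_2)}$ can be maintained, so I cannot exhibit a concrete admissible regime beyond the qualitative bound $d_{lim}<1$. Consequently the proof must be phrased conditionally, in the form ``whenever Lemma \ref{lemma_gard} applies along the sequence,'' and care must be taken not to overstate the result as an unconditional guarantee throughout $d_{lim}<1$. This conditional framing is exactly what the corollary's own wording requires, so no further technical machinery is needed.
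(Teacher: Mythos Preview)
Your proposal is correct and mirrors the paper's own treatment: the corollary is stated as a self-contained logical clarification with no separate proof, and its reasoning---that Theorem \ref{thm:consistency} operates only under the Lemma \ref{lemma_gard} hypotheses, that this (uncharacterized) regime is contained in $d_{lim}<1$ by the elementary bound $k_g\le n-p$, and hence that RRT-GARD inherits asymptotic exactness precisely where GARD has it---is exactly what you have written. Your rank argument for ${\bf A}^k$ and the transversality remark on $\delta_{k_g}$ slightly sharpen the justification for $k_g\le n-p$, but the approach is the same.
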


 \begin{remark} Once the true outlier support $\mathcal{S}_g$ is known, then the $l_2$ error   in the joint estimate  $[{\hat{\boldsymbol{\beta} }}^T  \hat{\bf g}_{\mathcal{S}_g}^T ]^T= {{\bf A}_{g}}^{\dagger}{\bf y}$ satisfies  $\|\boldsymbol{\beta}-\hat{\boldsymbol{\beta}}\|_2\leq \frac{\|{\bf w}\|_2}{\sigma_{min}({\bf X})\sqrt{1-\delta_{k_g}}}$.  Here ${\bf A}_g=[{\bf X},\ {\bf I}^n_{\mathcal{S}_{\bf g}}]$.  Note that the LS estimate in the absence of outlier satisfies $\|\boldsymbol{\beta}-\hat{\boldsymbol{\beta}}\|_2\leq \frac{\|{\bf w}\|_2}{\sigma_{min}({\bf X})}$ which is lower than the joint estimate only by a factor of $\sqrt{1-\delta_{k_g}}$. Hence, the outlier support recovery guarantees given in Theorems \ref{thm:rrt-gard} and \ref{thm:consistency}  automatically translate into a near outlier free LS performance\cite{gard}.  
\end{remark}
\subsection{Choice of $\alpha$ in finite sample sizes}
{ In the previous section, we discussed the choice of $\alpha$ when the sample size $n$  is increasing to $\infty$. In this section, we discuss the choice of $\alpha$ when the sample size $n$ is fixed at a finite value (on the order of ten or hundred).  This regime is arguably the most important in practical applications and the asymptotic results developed earlier might not be directly applicable here. In this regime, we propose to fix the value of $\alpha$ to a fixed value $\alpha=0.1$ motivated by extensive numerical simulations (please see Section \rom{6}). In particular, our numerical simulations indicate that the RRT-GARD with $\alpha=0.1$ provides nearly the same MSE performance as an oracle supplied with the value of $\alpha$ which minimizes $\|\hat{\boldsymbol{\beta}}-\boldsymbol{\beta}\|_2$. Theorem \ref{thm:high_SNR} justifies this choice of $\alpha$ mathematically. 
\begin{theorem}\label{thm:high_SNR}
Suppose that the design matrix ${\bf X}$ and outlier ${\bf g}_{out}$ satisfy $\delta_{k_g}<\sqrt{\dfrac{{\bf g}_{min}}{2\|{\bf g}_{out}\|_2}}$. Let $\mathcal{M}$, $\mathcal{F}$ and $\mathcal{E}$ denote the events missed discovery $\mathcal{M}=\{card(\mathcal{S}_g/\mathcal{S}_{RRT})>0\}$,  false discovery $\mathcal{F}=\{card(\mathcal{S}_{RRT}/\mathcal{S}_g)>0\}$  and support recovery error $\mathcal{E}=\{\mathcal{S}_{RRT}\neq \mathcal{S}_g\}$ associated with outlier support estimate $\mathcal{S}_{RRT}$ returned by RRT-GARD respectively. Then $\mathcal{M}$, $\mathcal{F}$ and $\mathcal{E}$ satisfy the  following  as $\sigma^2\rightarrow 0$.\\
1).  $\underset{\sigma^2\rightarrow 0}{\lim}\mathbb{P}(\mathcal{E})=\underset{\sigma^2\rightarrow 0}{\lim}\mathbb{P}(\mathcal{F})\leq \alpha$. \\
2). $\underset{\sigma^2\rightarrow 0}{\lim}\mathbb{P}(\mathcal{M})=0$. 
\end{theorem}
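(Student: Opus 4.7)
The plan is to decompose the support recovery error through the random index $k_{min}$ and the RRT-GARD stopping rule $k_{RRT}=\max\{k:RR(k)\leq \Gamma_{RRT}^{\alpha}(k)\}$, and then bound the failure probabilities using Theorems \ref{lemma:tf} and \ref{thm:Beta}.

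First, I would extract two consequences of Theorem \ref{lemma:tf} available under the hypothesis $\delta_{k_g}<\sqrt{{\bf g}_{min}/(2\|{\bf g}_{out}\|_2)}$: as $\sigma^2\rightarrow 0$, (i) $\mathbb{P}(k_{min}=k_g)\rightarrow 1$, and on this event the monotonicity property A1 of Lemma \ref{lemma:gard} together with $\mathrm{card}(\mathcal{S}_{GARD}^k)=k$ (one new index is appended per GARD iteration by construction) forces $\mathcal{S}_{GARD}^{k_{min}}=\mathcal{S}_g$; and (ii) $RR(k_{min})\overset{P}{\rightarrow}0$, which together with the strictly positive deterministic quantity $\Gamma_{RRT}^{\alpha}(k_g)>0$ (Lemma \ref{lemma:properties}) yields $\mathbb{P}(RR(k_{min})\leq \Gamma_{RRT}^{\alpha}(k_{min}))\rightarrow 1$. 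Consequently, $k_{min}$ lies in the set $\{k:RR(k)\leq \Gamma_{RRT}^{\alpha}(k)\}$ with probability tending to one, which forces $\mathbb{P}(k_{RRT}\geq k_{min})\rightarrow 1$.

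For part 2), I would observe that whenever $k_{min}$ is finite and $k_{RRT}\geq k_{min}$, monotonicity of the GARD support sequence gives $\mathcal{S}_g\subseteq \mathcal{S}_{GARD}^{k_{min}}\subseteq \mathcal{S}_{GARD}^{k_{RRT}}=\mathcal{S}_{RRT}$, so $\mathcal{M}$ cannot occur. Hence $\mathbb{P}(\mathcal{M})\leq \mathbb{P}(k_{RRT}<k_{min})+\mathbb{P}(k_{min}=\infty)$, and both terms vanish by the preceding step, giving $\mathbb{P}(\mathcal{M})\rightarrow 0$. For part 1), Theorem \ref{thm:Beta} supplies the uniform-in-$\sigma^2$ bound $\mathbb{P}(\exists k\in\{k_{min}+1,\ldots,k_{max}\}:RR(k)\leq \Gamma_{RRT}^{\alpha}(k))\leq \alpha$, whose complement forces $k_{RRT}\leq k_{min}$. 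Intersecting this complement with the two high-probability events $\{k_{min}=k_g\}$ and $\{RR(k_{min})\leq \Gamma_{RRT}^{\alpha}(k_{min})\}$ yields $k_{RRT}=k_{min}=k_g$ and thus $\mathcal{S}_{RRT}=\mathcal{S}_g$, so neither $\mathcal{F}$ nor $\mathcal{E}$ occurs. A union bound therefore gives $\mathbb{P}(\mathcal{F})\leq \alpha+\mathbb{P}(k_{min}\neq k_g)+\mathbb{P}(RR(k_{min})>\Gamma_{RRT}^{\alpha}(k_{min}))$, and since the last two terms vanish, $\limsup_{\sigma^2\rightarrow 0}\mathbb{P}(\mathcal{F})\leq \alpha$; the same estimate applies to $\mathbb{P}(\mathcal{E})$. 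Finally, the decomposition $\mathcal{E}=\mathcal{M}\cup\mathcal{F}$ furnishes the sandwich $\mathbb{P}(\mathcal{F})\leq \mathbb{P}(\mathcal{E})\leq \mathbb{P}(\mathcal{F})+\mathbb{P}(\mathcal{M})$, which combined with $\mathbb{P}(\mathcal{M})\rightarrow 0$ delivers the claimed equality $\lim\mathbb{P}(\mathcal{E})=\lim\mathbb{P}(\mathcal{F})$.

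The main obstacle is the careful book-keeping around the random index $k_{min}$: the event $\mathcal{F}$ is defined through $\mathcal{S}_{RRT}$ rather than through $k_{min}$, and Theorem \ref{thm:Beta} is stated with the random starting index $k_{min}$. Isolating the asymptotically certain event $\{k_{min}=k_g\}$, on which $\mathcal{S}_{GARD}^{k_{min}}$ coincides exactly with $\mathcal{S}_g$ so that every GARD iteration beyond $k_{min}$ genuinely appends a spurious index, is precisely what allows the $\alpha$-bound of Theorem \ref{thm:Beta} to translate cleanly into the required $\alpha$-bound on $\mathbb{P}(\mathcal{F})$.
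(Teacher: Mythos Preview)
Your proposal is correct and follows essentially the same route as the paper: both arguments decompose the error through $k_{min}$ and $k_{RRT}$, show $\mathbb{P}(\mathcal{M})\rightarrow 0$ via $\{k_{min}=k_g\}\cap\{k_{RRT}\geq k_{min}\}$, and then control $\mathbb{P}(\mathcal{E})$ by intersecting the $(1-\alpha)$ event of Theorem~\ref{thm:Beta} with the two asymptotically certain events $\{k_{min}=k_g\}$ and $\{RR(k_{min})\leq \Gamma_{RRT}^{\alpha}(k_{min})\}$. The only cosmetic difference is that the paper routes these last two events through the explicit noise bound $\|{\bf w}\|_2\leq \min(\epsilon_{GARD},\epsilon_{RRT})$ (reusing the machinery of Appendix~C), whereas you cite the convergence statements of Theorem~\ref{lemma:tf} directly; your sandwich $\mathbb{P}(\mathcal{F})\leq \mathbb{P}(\mathcal{E})\leq \mathbb{P}(\mathcal{F})+\mathbb{P}(\mathcal{M})$ is in fact cleaner than the paper's use of $\mathbb{P}(\mathcal{E})=\mathbb{P}(\mathcal{M})+\mathbb{P}(\mathcal{F})$, which tacitly treats $\mathcal{M}$ and $\mathcal{F}$ as disjoint.
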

\begin{proof}
Please see Appendix E. 
\end{proof} 
Theorem \ref{thm:high_SNR} states that when the  matrix and outlier sparisty regimes are favourable for GARD  to effectively identify the outlier support, then the $\alpha$ parameter in RRT-GARD has  an operational intrepretation of being the  upper bound on the probability of outlier support recovery error and  probability of  false discovery when outlier magnitudes are significantly higher than the inlier variance $\sigma^2$.  Further, it is also clear that when $\sigma^2\rightarrow 0$, the support recovery error in  RRT-GARD is entirely due to the loss of efficiency in the joint LS estimate due to the  identification of outlier free observations as outliers. Consequently, RRT-GARD with the choice of $\alpha=0.1$ motivated by numerical simulations also guarantee accurate outlier support identification  with atleast $90\%$ probability when the outliers values are high compared to the inlier values. 
\begin{remark}
Please note that in the implementation of RRT-GARD recommended by this article, $\alpha$ is fixed to a predefined value $\alpha=0.1$ for finite sample sizes. In other words, $\alpha$ is neither estimated from data nor is chosen using crossvalidation. Consequently, RRT-GARD is not replacing the problem of estimating one unknown parameter (read $\sigma^2$) with another estimation problem (read best value of $\alpha$). This ability to operate GARD with a fixed and data independent hyper parameter and still able to achieve a performance similar to $GARD(\sigma^2)$ is the main advantage   of the residual ratio approach utilized in RRT-GARD.
\end{remark} }
\begin{figure*}
\begin{multicols}{3}

    \includegraphics[width=1\linewidth]{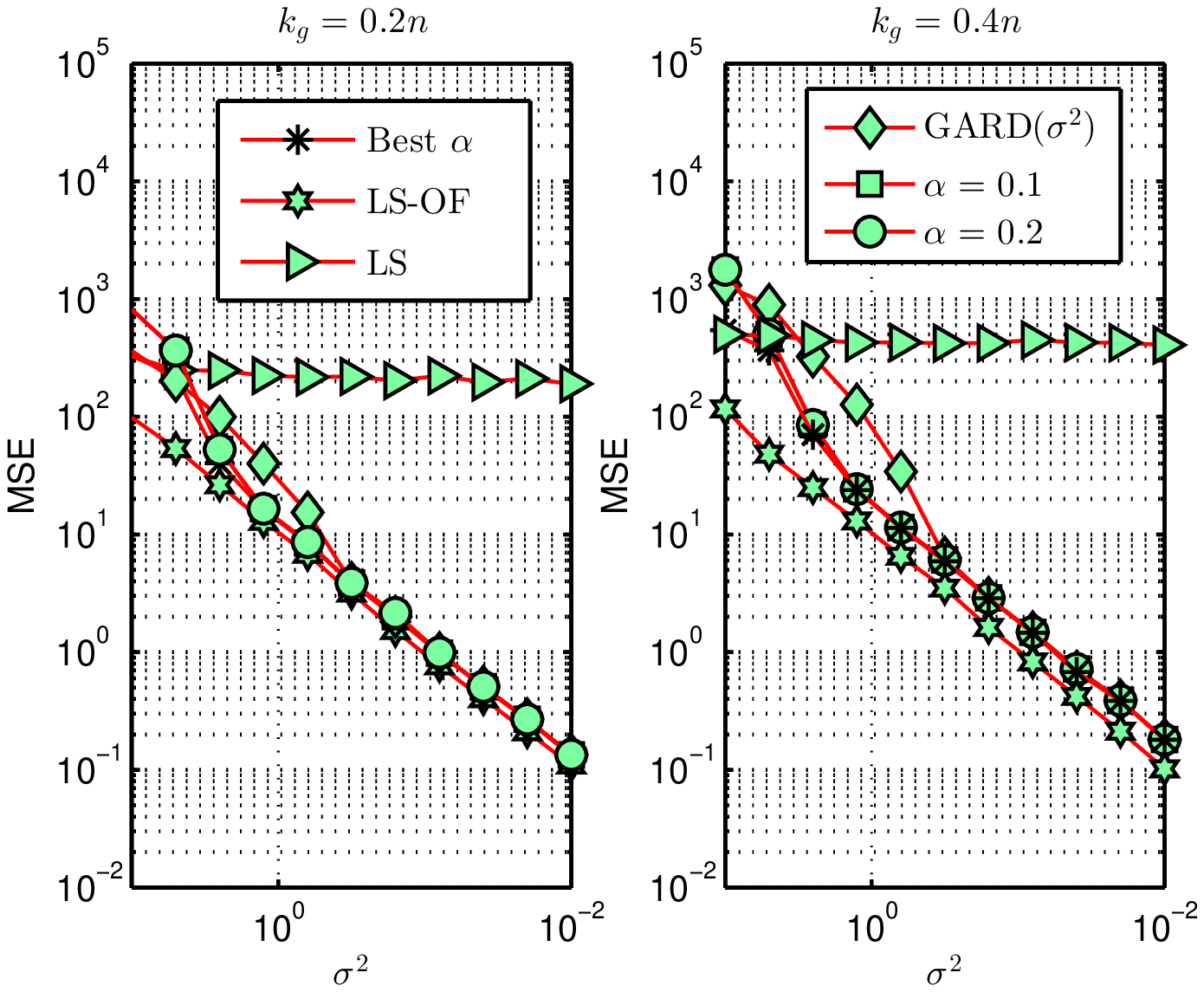} 
    \caption*{a).  Model 1. Varying $\sigma^2$.  }
    
    \includegraphics[width=1\linewidth]{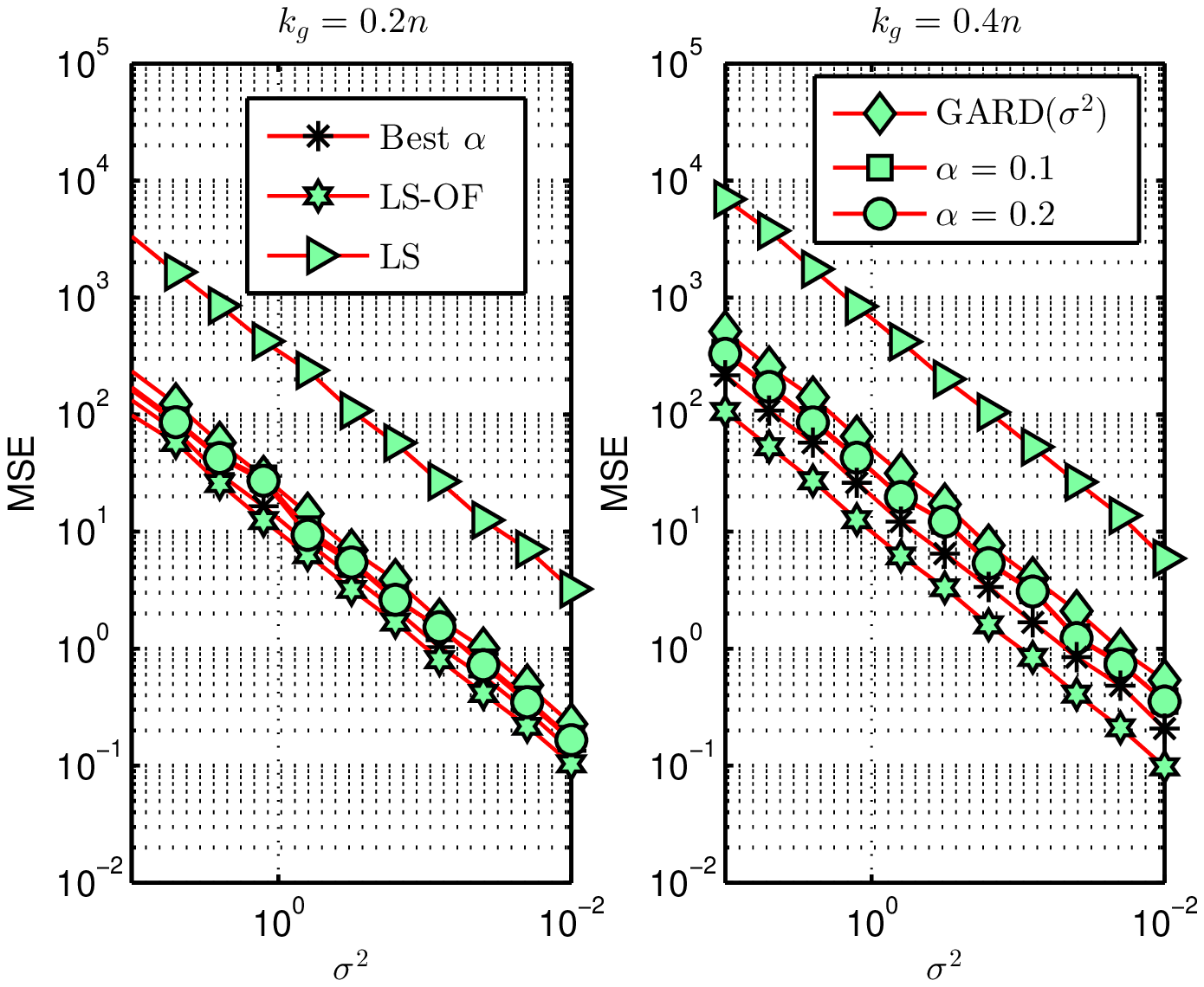} 
    \caption*{b). Model 2. Varying $\sigma^2$. }
    \includegraphics[width=1\linewidth]{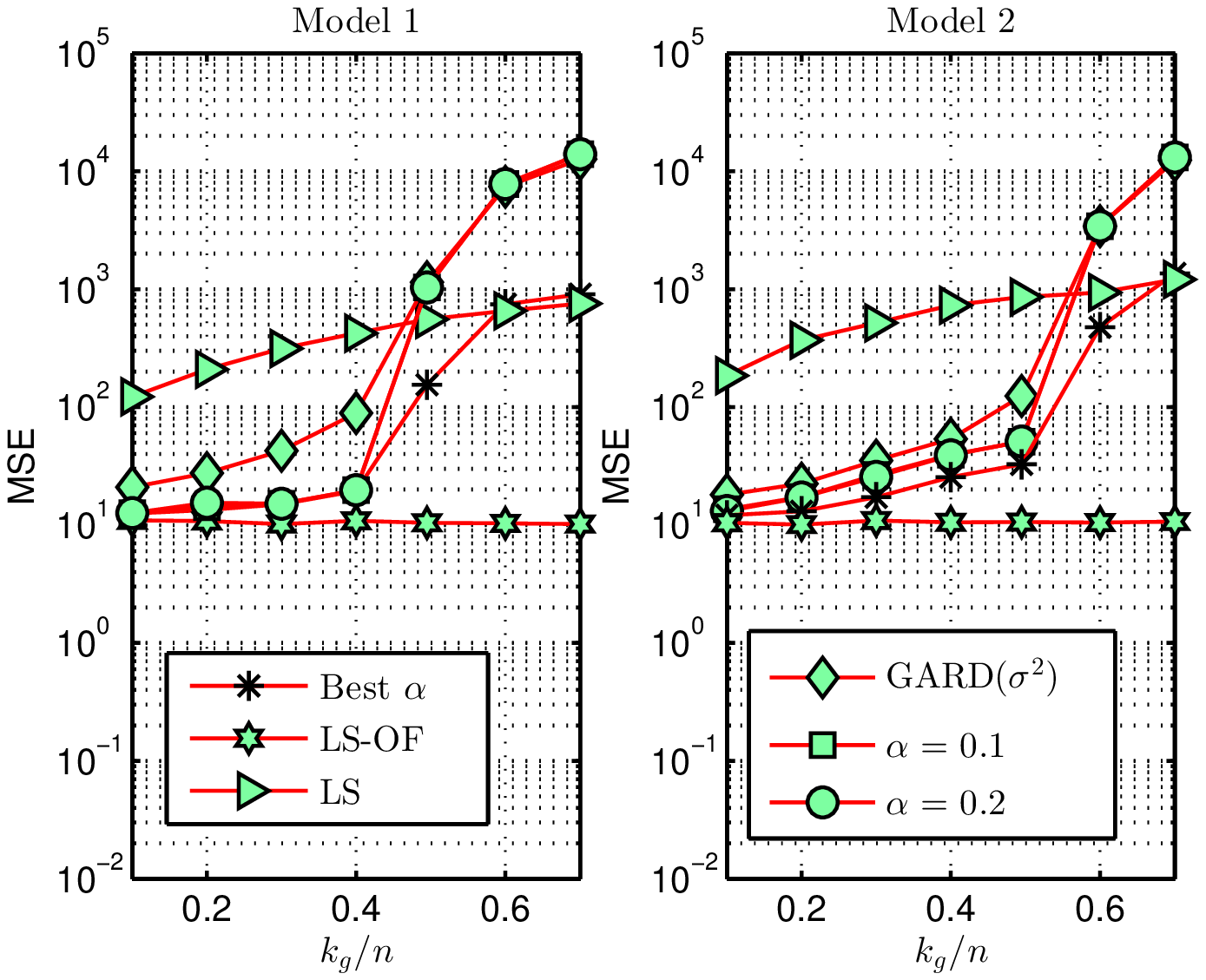}
    \caption*{c). Models 1 and  2. $\sigma^2=1$.  Varying $k_g$.  }
   \end{multicols}
   \caption{Near optimality of $\alpha=0.1$ in RRT-GARD. Number of predictors $p=10$. Legends are distributed among the sub-figures.}
   \label{fig:near_optimality}
\end{figure*}
  
\section{Numerical Simulations}
In this section, we numerically evaluate and compare the  performance  of RRT-GARD  and  popular robust regression techniques in both synthetic and  real life data sets.  
\subsection{Simulation settings for experiments using synthetic data}
 The design matrix ${\bf X}$ is randomly generated according to ${\bf X}_{i,j}\sim \mathcal{N}(0,1)$ and the columns of the resulting matrix are normalised to have unit $l_2$ norm. The number of samples $n$ is fixed at $n=200$.   All entries of $\boldsymbol{\beta}$ are randomly set to $\pm 1$. Inlier noise ${\bf w}$ is distributed ${\bf w} \sim \mathcal{N}({\bf 0}_n,\sigma^2{\bf I}^n)$. Two outlier models are considered.\\
{\bf Model 1:-} ${\bf g}_{out}(j)$ for $j \in \mathcal{S}_g$ are sampled from  $\{10,-10\}$.\\
{\bf Model 2:-} ${\bf g}_{out}(j)$ for $j \in \mathcal{S}_g$ are sampled according to ${\bf g}_{out}(j){\sim}0.5\mathcal{N}(12\sigma, 16\sigma^2)+0.5\mathcal{N}(-12\sigma, 16\sigma^2)$\cite{arosi}.\\
Model 1 have outlier power independent of $\sigma^2$, whereas, Model 2 have outlier power increasing with increasing $\sigma^2$.  Figures \ref{fig:near_optimality}- \ref{fig:noise_estimation} are presented after performing $10^2$  Monte Carlo iterations. In each iteration ${\bf X}$, $\boldsymbol{\beta}$, ${\bf g}_{out}$ and ${\bf w}$ are independently generated. MSE in figures \ref{fig:near_optimality}- \ref{fig:noise_estimation}  represent the averaged value of $\|\boldsymbol{\beta}-\hat{\boldsymbol{\beta}}\|_2^2$. ``LS-OF", ``LS" and ``$\alpha$" in figures \ref{fig:near_optimality}- \ref{fig:noise_estimation} represent the LS performance in outlier free data,  LS performance with outliers and RRT-GARD with parameter  $\alpha$.
\subsection{Choice of $\alpha$ in finite sample sizes}
 Theorem \ref{thm:asymptotic} implies that RRT-GARD is asymptotically tuning free. However, in finite sample sizes, the choice of $\alpha$ will have a significant impact on the performance of RRT-GARD. In this section, we compare the performance of RRT-GARD with $\alpha=0.1$ and $\alpha=0.2$  with that of an oracle aided estimator which compute  RRT-GARD estimate over 100  different values of $\alpha$ between $10$ and $10^{-6}$  and choose the RRT-GARD estimate with lowest $l_2$-error $\|\boldsymbol{\beta}-\hat{\boldsymbol{\beta}}\|_2^2$ (Best $\alpha$). This estimator requires \textit{a priori} knowledge of $\boldsymbol{\beta}$ and  is not practically implementable. However, this estimator gives the best possible performance achievable by RRT-GARD. From the six experiments  presented in Fig. \ref{fig:near_optimality}, it is clear that the performance of RRT-GARD with  $\alpha=0.1$ and $\alpha=0.2$ are only slightly inferior compared to the performance of ``$\text{Best}\ \alpha$" in all situations where ``$\text{Best}\ \alpha$" reports near LS-OF performance.  Also RRT-GARD with  $\alpha=0.1$ and $\alpha=0.2$ perform atleast as good as GARD($\sigma^2$).  This trend was visible in many other experiments not reported here. Also please note that in view of Theorem \ref{thm:high_SNR}, $\alpha=0.1$ gives better outlier support recovery guarantees   than $\alpha=0.2$.  Hence, we recommend setting $\alpha$ in RRT-GARD to  $\alpha=0.1$  when  $n$ is finite.

\begin{figure*}
\begin{multicols}{3}

    \includegraphics[width=1\linewidth]{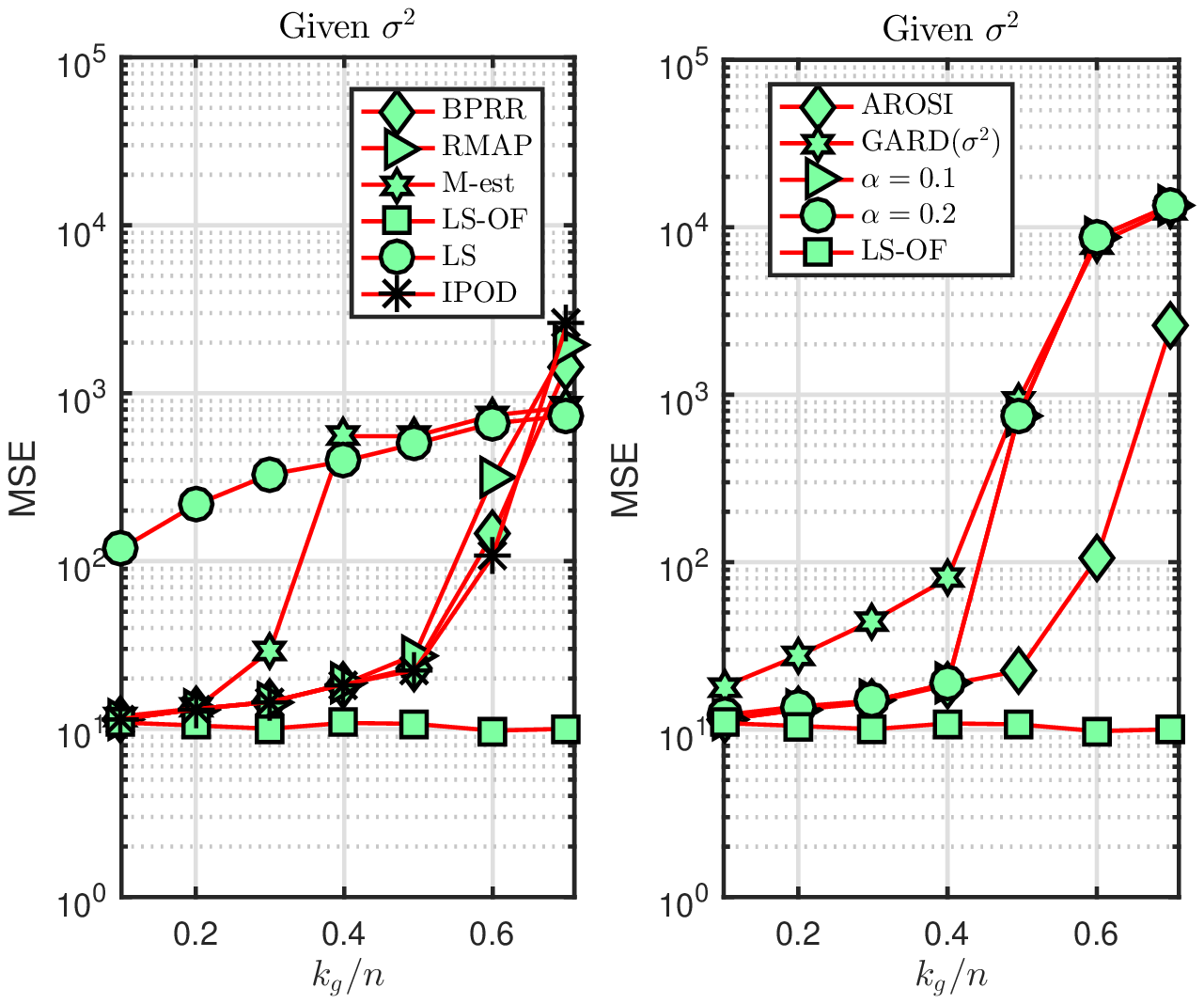} 
    \caption*{a).   Given $\sigma^2$.  }
    
    \includegraphics[width=1\linewidth]{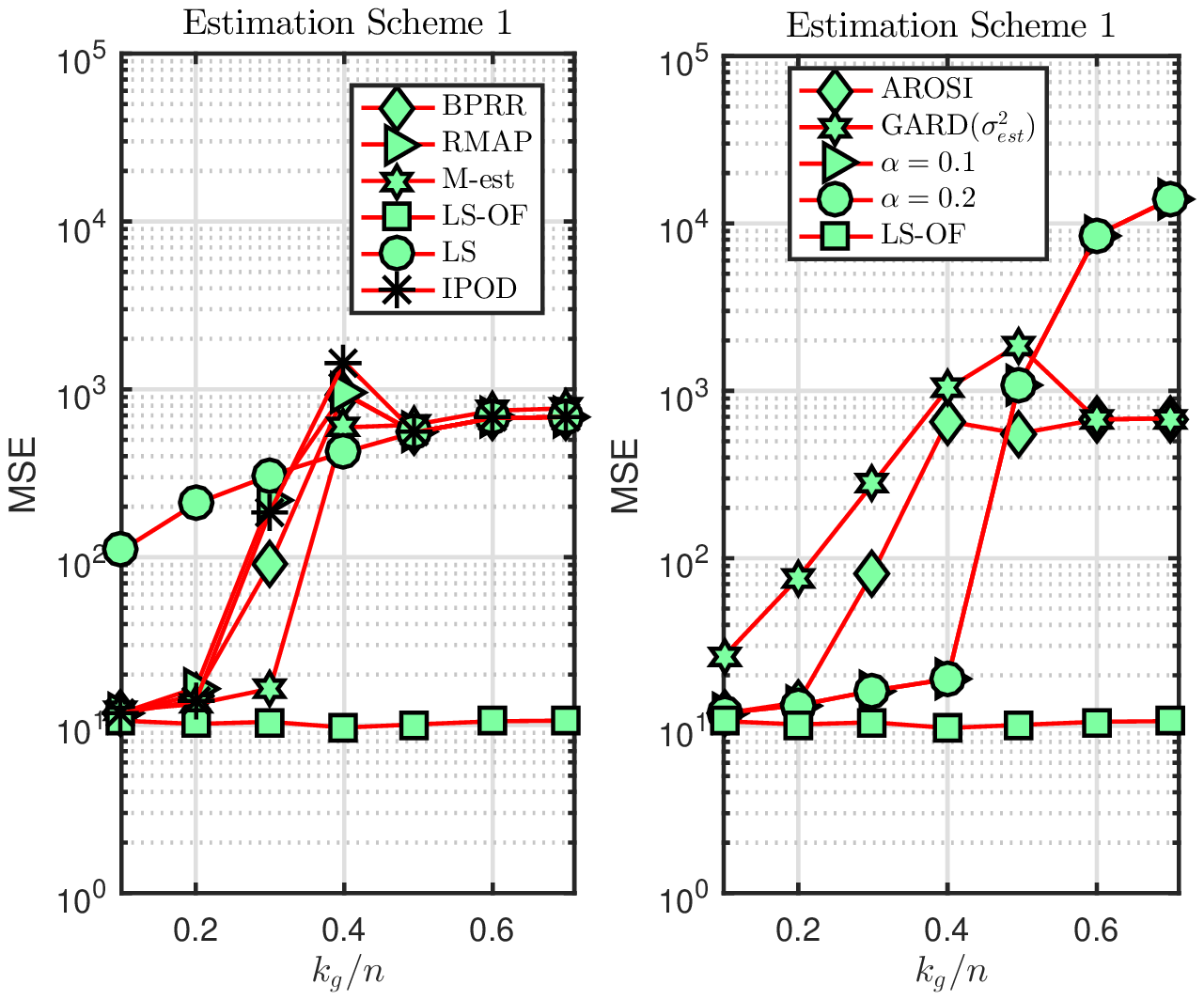} 
    \caption*{b).  $\sigma^2$ estimation scheme 1 (\ref{l1noise}). }
    \includegraphics[width=1\linewidth]{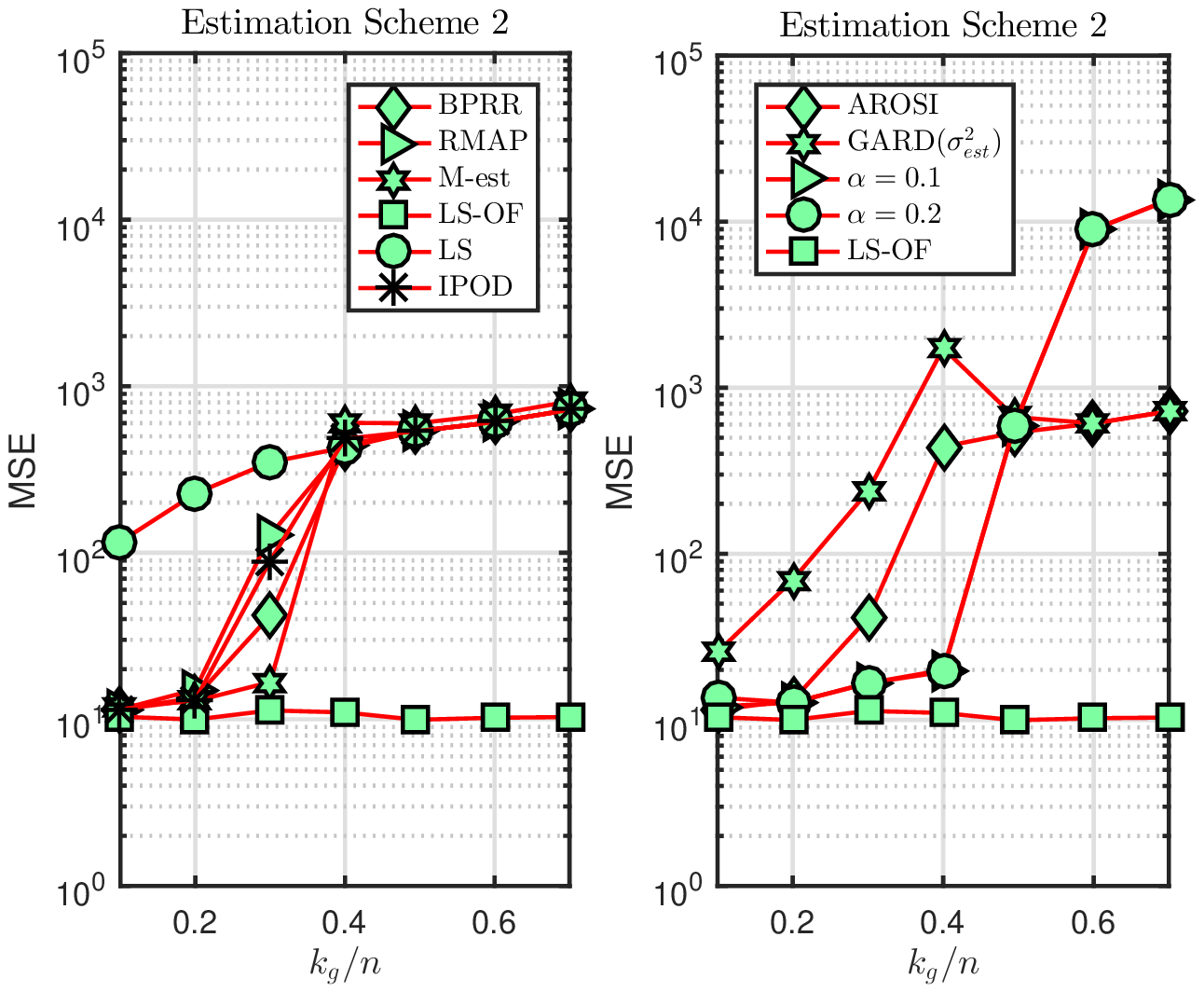}
    \caption*{c).  $\sigma^2$ estimation scheme 2 (\ref{mad}).  }
   \end{multicols}
   \caption{Model 1. Number of predictors $p=10\ll n$  and $\sigma^2=1$.}
   \label{fig:model1}
\end{figure*}

\subsection{Comparison of RRT-GARD with popular algorithms}
The following algorithms are compared with RRT-GARD. ``M-est" represents Hubers' M-estimate  with Bisquare loss function computed using the Matlab function ``robustfit". Other parameters are set according to the default setting in Matlab.  ``BPRR" represents (\ref{BPRR}) with parameter $\lambda_{bprr}=\sqrt{\frac{n-p}{n}}\epsilon^{\sigma}$  \cite{koushik_conf}.  ``RMAP" represents (\ref{RMAP}) with parameter $\lambda_{rmap}=\sigma\sqrt{2\log(n)}/3$ \cite{bdrao_robust}.  ``AROSI" represents (\ref{arosi}) with parameter $\lambda_{arosi}=5\sigma$. IPOD represents the estimation scheme in Algorithm 1 of  \cite{she2011outlier} with hard thresholding penalty and $\lambda$ parameter set to $5\sigma$ as in \cite{arosi}.  As noted in \cite{arosi}, the performances of BPRR, RMAP, AROSI etc. improve tremendously after performing the re-projection step detailed in \cite{arosi}.  For algorithms like RMAP, IPOD, AROSI etc. which directly give a robust estimate $\hat{\boldsymbol{\beta}}$ of $\boldsymbol{\beta}$, the re-projection  step identifies the  outlier support by  thresholding the robust residual ${\bf r}={\bf y}-{\bf X}\hat{\boldsymbol{\beta}}$, i.e., $\hat{\mathcal{S}}_g=\{k:|{\bf r}(k)|>\gamma \sigma\}$. For algorithms like BPRR, BSRR etc. which estimate the outliers  directly, the outlier support is identified by thresholding the outlier estimate $\hat{\bf g}_{out}$, i.e., $\hat{\mathcal{S}}_g=\{k:|\hat{\bf g}_{out}(k)|>\gamma \sigma\}$. Then the nonzero  outliers and regression vector ${\boldsymbol{\beta}}$ are jointly estimated  using  $[{\hat{\boldsymbol{\beta} }}^T  \hat{\bf g}_{out}({\hat{\mathcal{S}_g}})^T ]^T= {[{\bf X},  {\bf I}^n_{\hat{\mathcal{S}_g}}]}^{\dagger}{\bf y}$. The re-projection thresholds are set at $\gamma=3$, $\gamma=3$, $\gamma=3$ and $\gamma=5$ respectively for BPRR, RMAP, IPOD and AROSI. Two schemes to estimate $\sigma^2$ are considered in this article. Scheme 1 implements (\ref{l1noise}) and Scheme 2 implements  (\ref{mad}) using ``M-est" residual respectively. { Since there do not exist any analytical guidelines on  how to set the re-projection thresholds,  we set these parameters such that they maximise the performance  of BPRR, RMAP, IPOD and AROSI when $\sigma^2$ is known. Setting the re-projection thresholds to achieve best performance with estimated $\sigma^2$ would result in different re-projection parameters for different $\sigma^2$ estimation schemes and a highly inflated performance.
 
\begin{figure*}
\begin{multicols}{3}

    \includegraphics[width=1\linewidth]{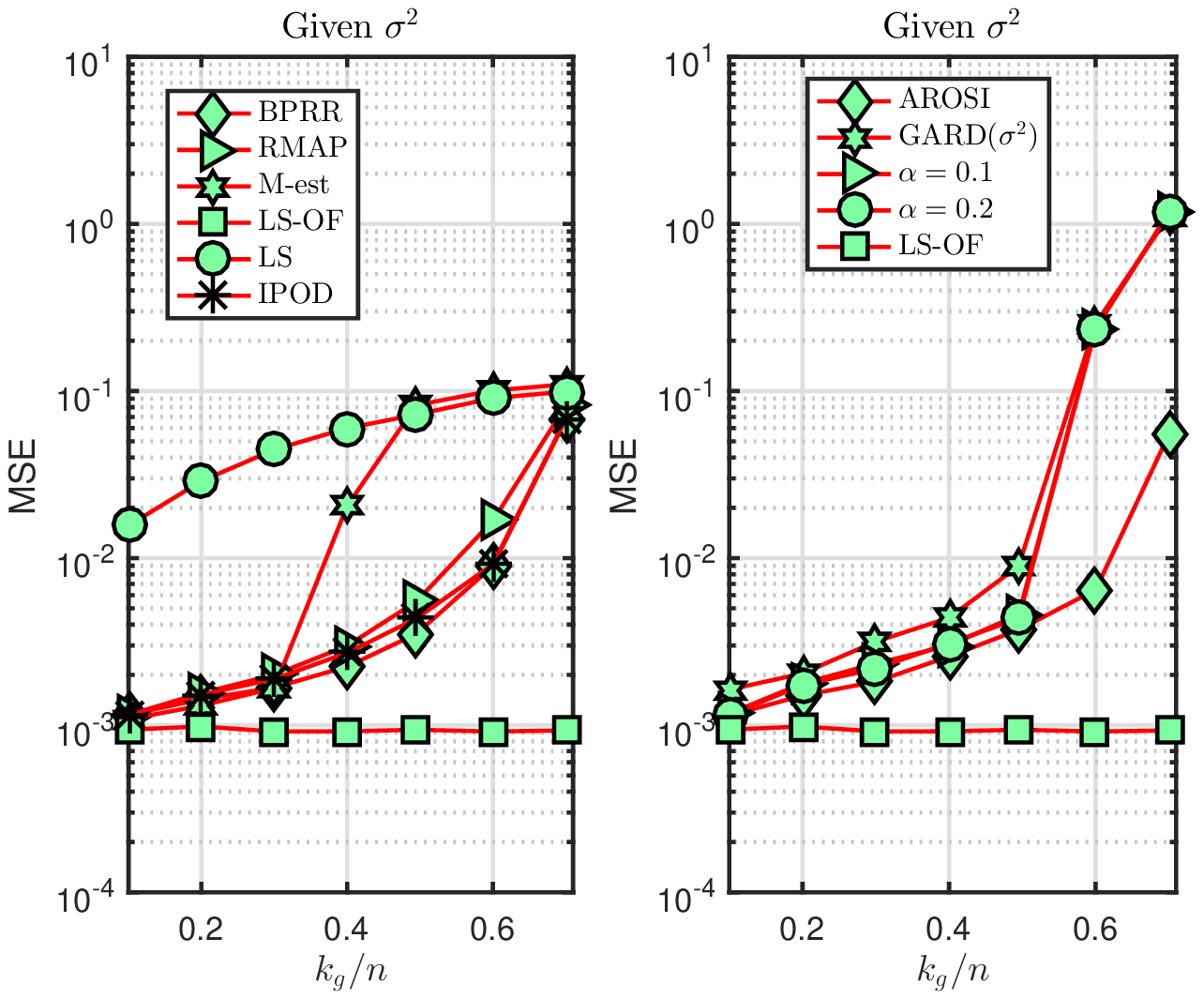} 
    \caption*{a).  Given $\sigma^2$.  }
    
    \includegraphics[width=1\linewidth]{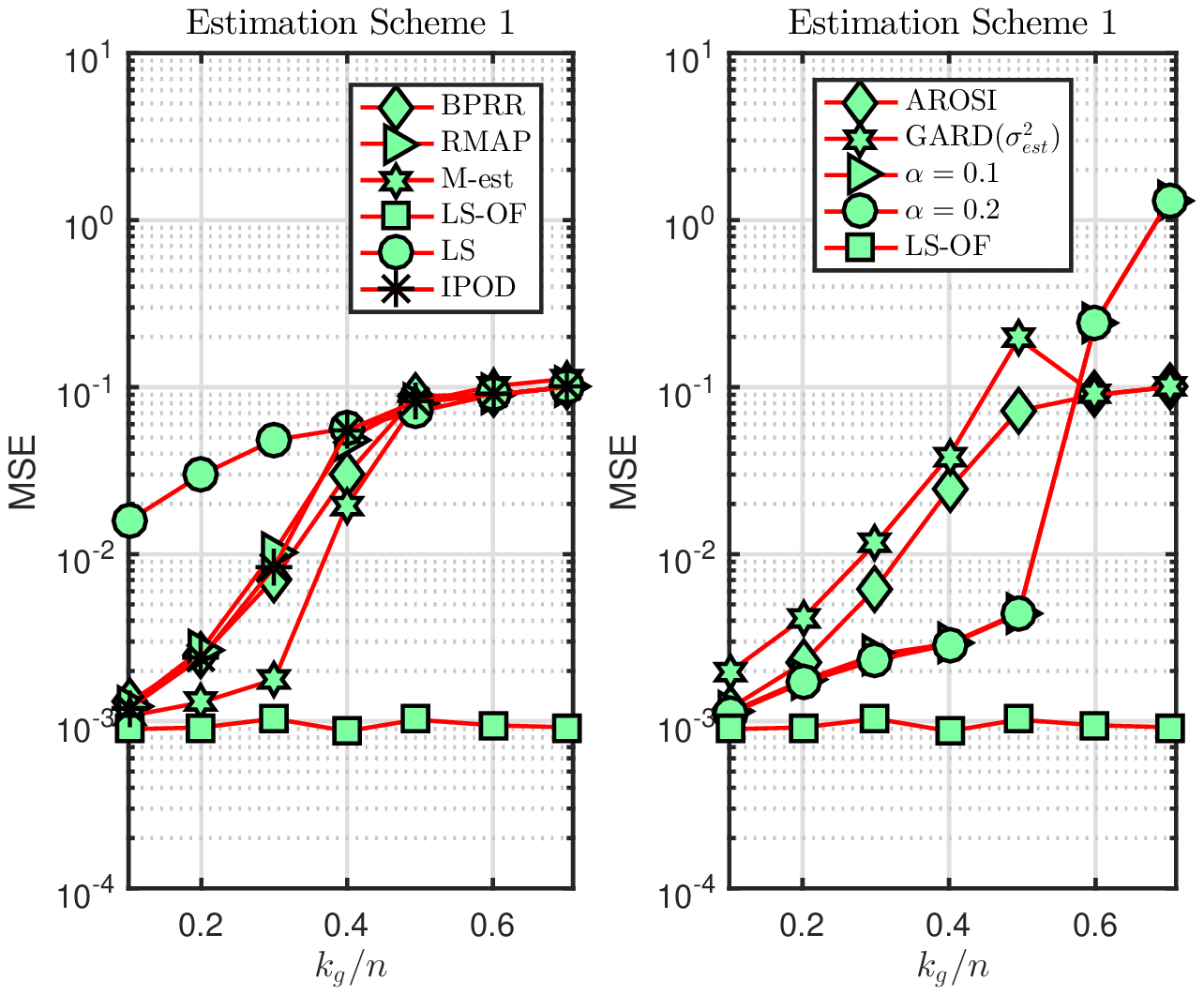} 
    \caption*{b). $\sigma^2$ estimation scheme 1 (\ref{l1noise}). }
    \includegraphics[width=1\linewidth]{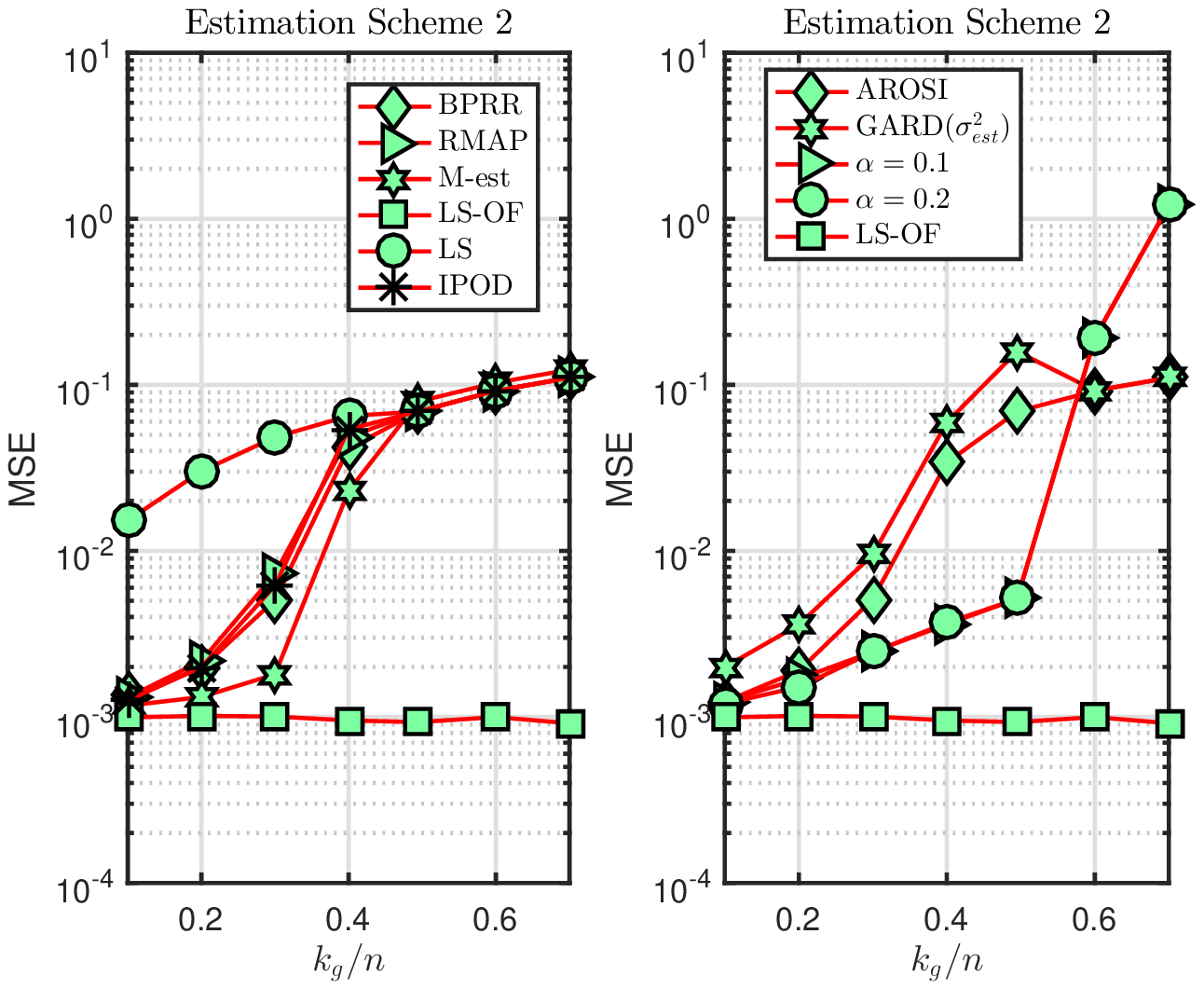}
    \caption*{c). $\sigma^2$ estimation scheme 2 (\ref{mad}).  }
   \end{multicols}
   \caption{Model 2. Number of predictors $p=10\ll n$ and $\sigma=\ median |({\bf X}\boldsymbol{\beta})_j|/16$ \cite{arosi}. }
   \label{fig:model2}
\end{figure*}
 
 \begin{figure*}
\begin{multicols}{3}

    \includegraphics[width=1\linewidth]{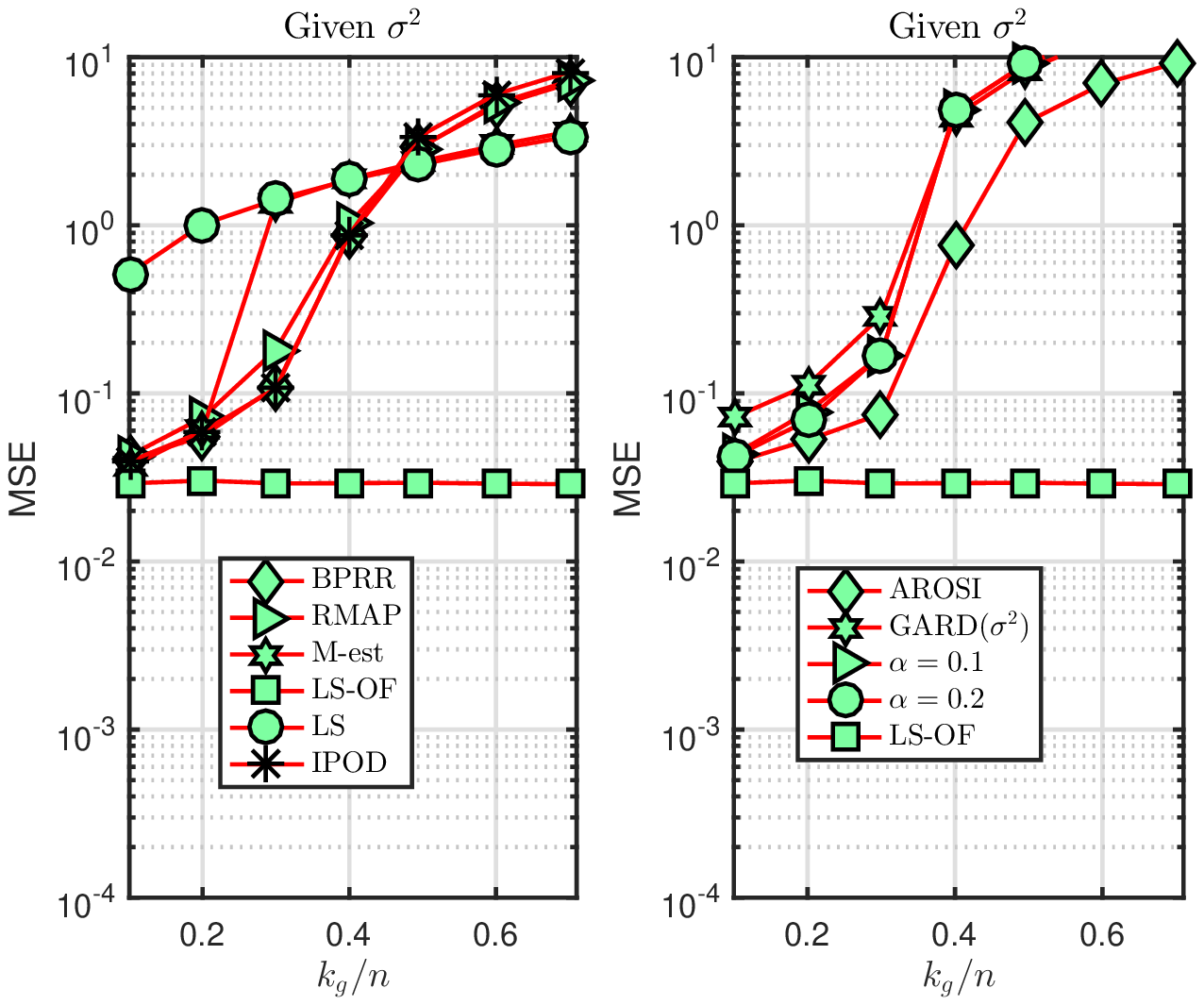} 
    \caption*{a).  Given $\sigma^2$.  }
    
    \includegraphics[width=1\linewidth]{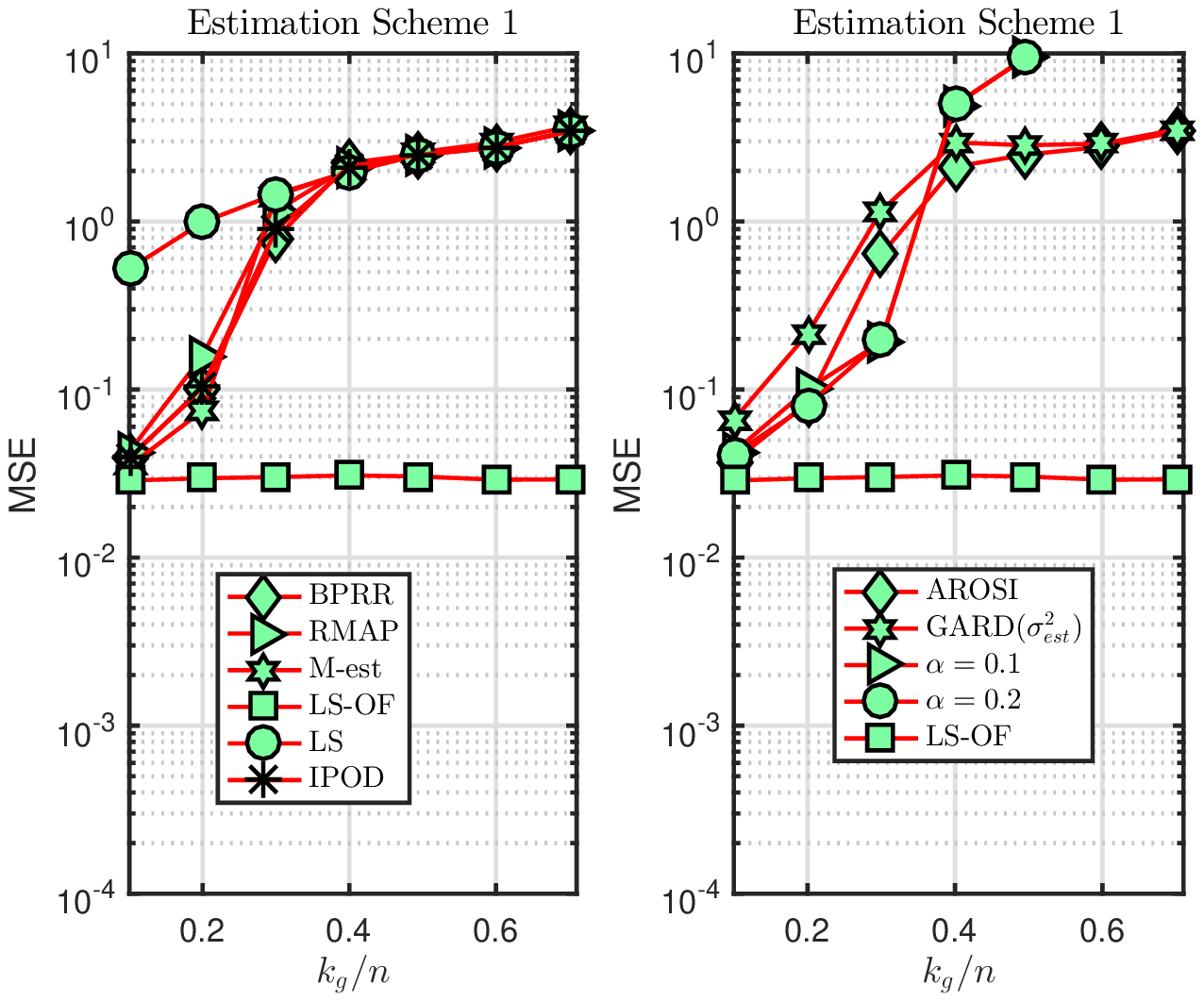} 
    \caption*{b). $\sigma^2$ estimation scheme 1 (\ref{l1noise}). }
    \includegraphics[width=1\linewidth]{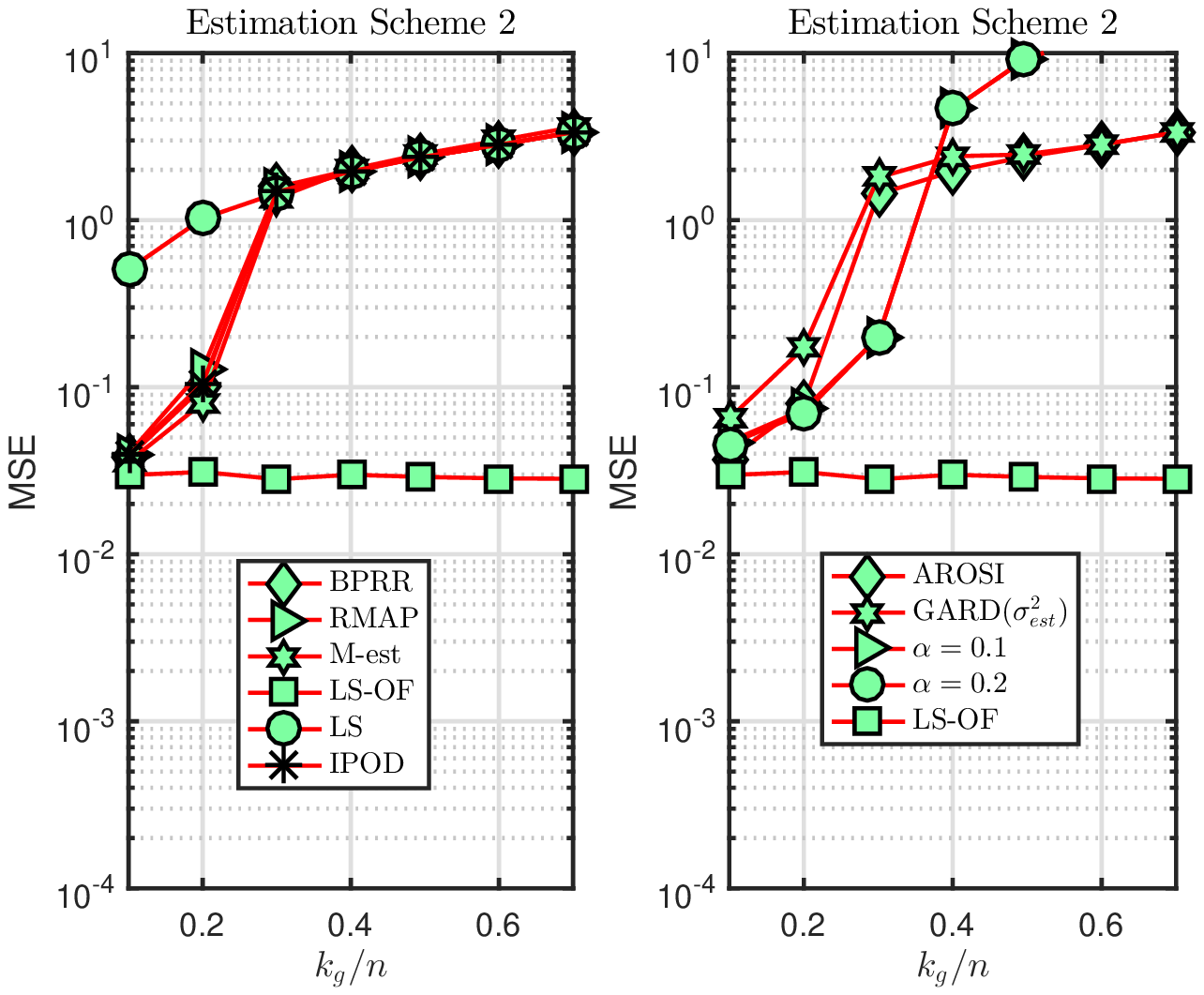}
    \caption*{c). $\sigma^2$ estimation scheme 2 (\ref{mad}).  }
   \end{multicols}
   \caption{Model 2.  Number of predictors increased to $p=50$ and  $\sigma=\ median |({\bf X}\boldsymbol{\beta})_j|/16$ \cite{arosi}. }
   \label{fig:model3}
\end{figure*}
 
{ We first consider the situation where the number of predictors $p=10$ is very small compared to the number of measurements $p$.  As one can see from Fig. \ref{fig:model1} and Fig. \ref{fig:model2}, BPRR, RMAP, IPOD and AROSI perform much better compared to GARD($\sigma^2$) and RRT-GARD when $\sigma^2$ is known. In fact, AROSI outperforms all other algorithms. Similar trends were visible in \cite{arosi}. Further, this good performance of AROSI, BPRR, IPOD and RMAP also validates the choice of tuning parameters used in these algorithms.  However, when the estimated $\sigma^2$ is used to set the parameters, one can see from  Fig. \ref{fig:model1} and Fig. \ref{fig:model2} that the performance of GARD($\sigma^2$), BPRR, RMAP, IPOD and AROSI degrade tremendously. In fact, in all the four experiments conducted with estimated $\sigma^2$, RRT-GARD outperforms M-est,  GARD($\sigma^2$), BPRR, RMAP and AROSI except when $k_g/n$ is very high. However, when $k_g/n$ is very high,  all these algorithms  perform similar to or worse than the LS estimate.
Next we consider the performance of algorithms when the number of predictors $p$ is increased from $p=10$ to $p=50$. Note that the number of outliers that can be identified using any SRIRR algorithm is an increasing function of the "number of free dimensions" $n-p$. Consequently, the BDP of all algorithms in  Fig. 6  are much smaller than the corresponding BDPs in Fig.  \ref{fig:model2}. Here also the performance of AROSI is superior to other algorithms when $\sigma^2$ is known \textit{a priori}. However, when $\sigma^2$ is unknown \textit{a priori}, the performance of RRT-GARD is still superior compared to the other algorithms under consideration.  }
\begin{figure*}
\begin{multicols}{3}

\includegraphics[width=\linewidth]{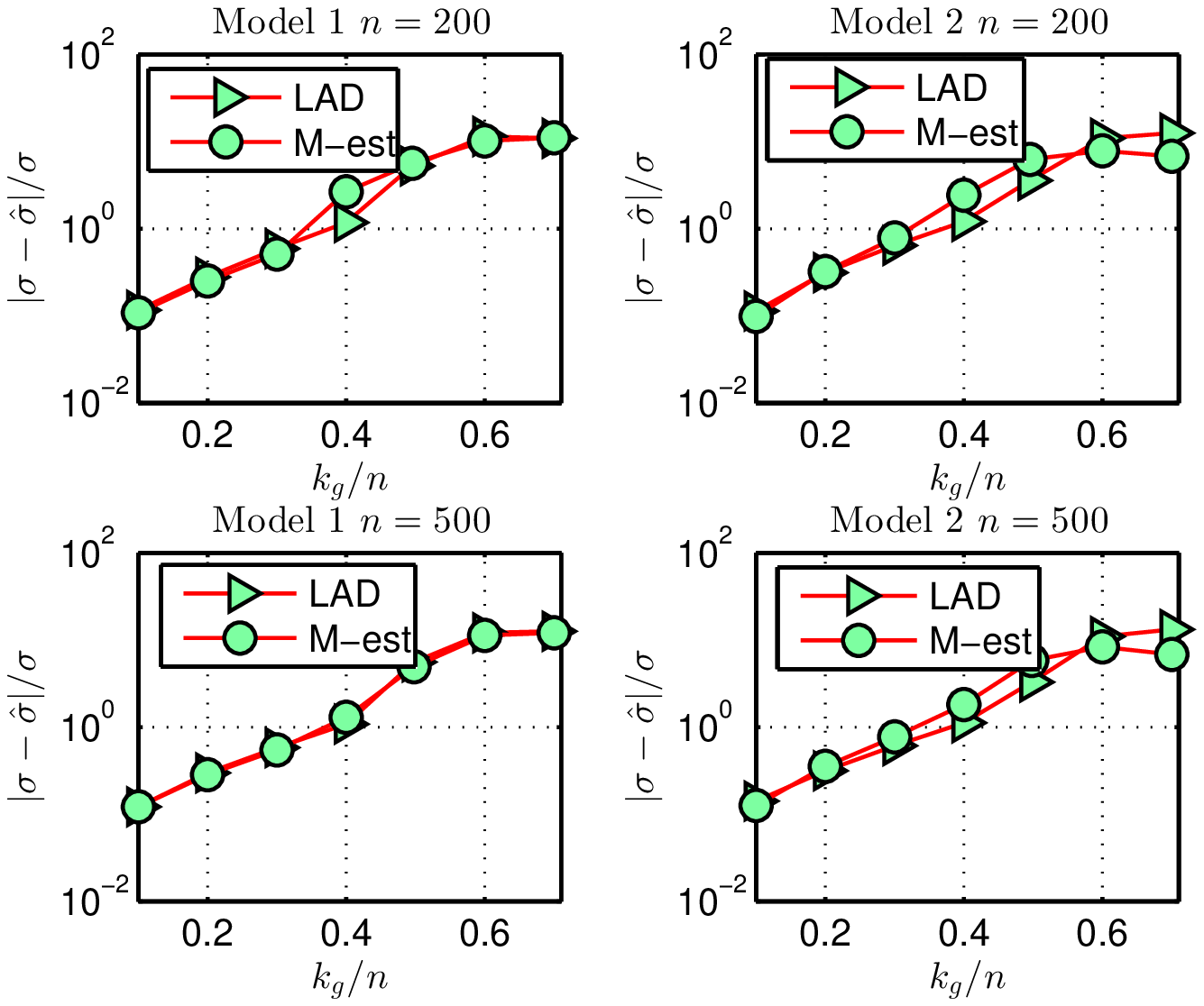}
\caption*{a). Error in the $\sigma^2$ estimate with increasing $k_g/n$.}
\includegraphics[width=\linewidth]{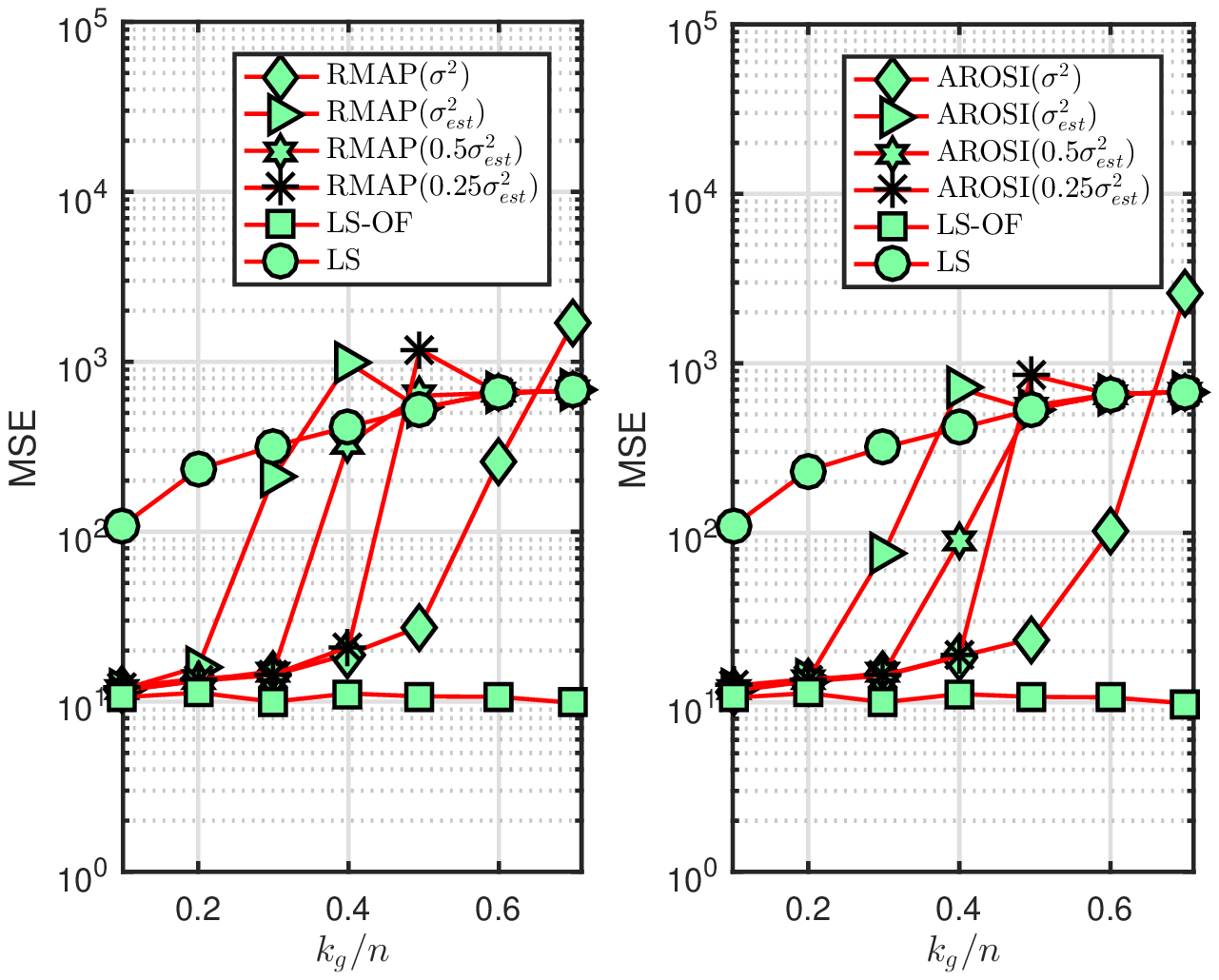}
\caption*{b). Performance of RMAP and AROSI with scaled down $\sigma^2$ estimates}
\includegraphics[width=\linewidth]{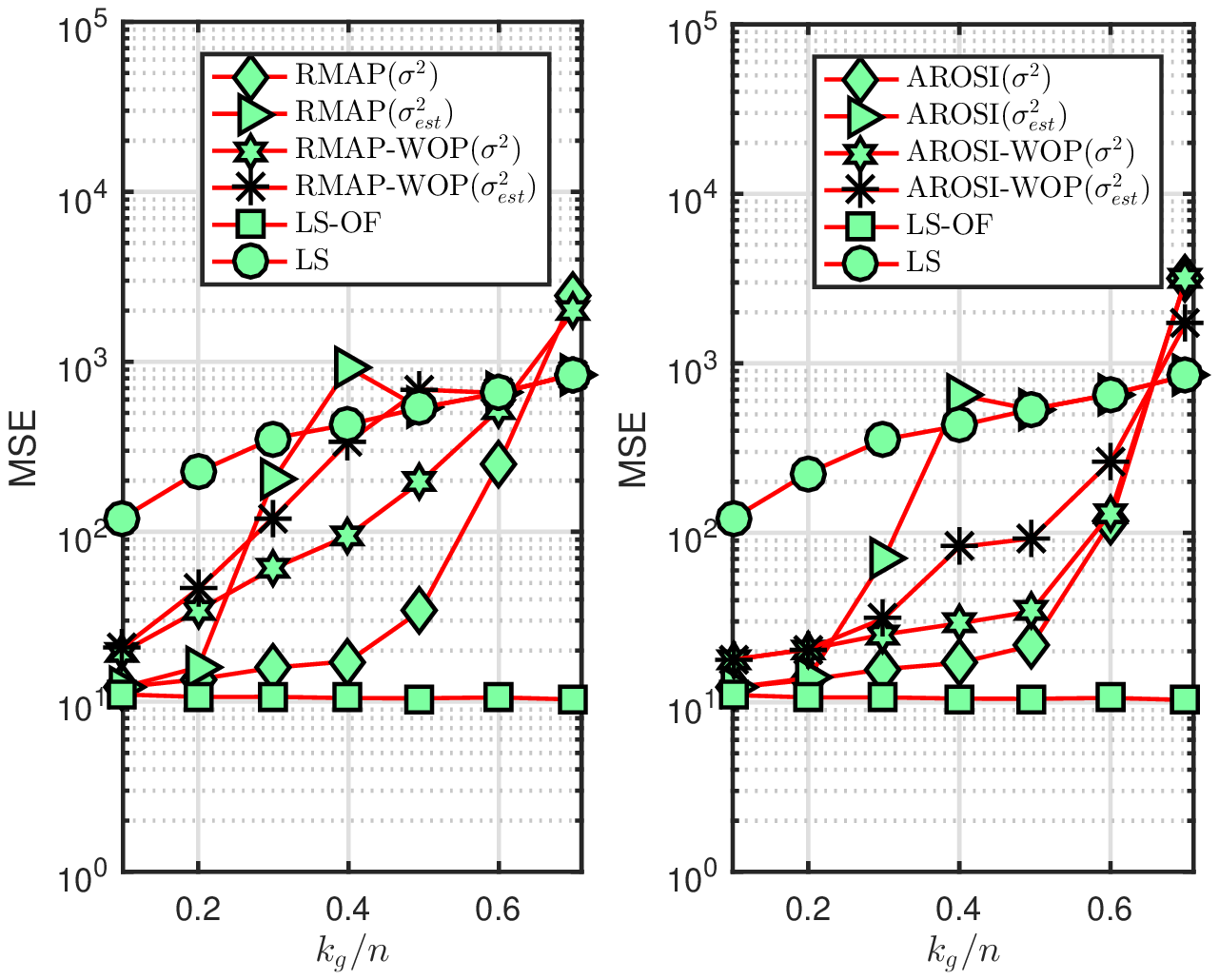}

\caption*{c). Sensitivity of re-projection step with estimated $\sigma$. }

\end{multicols}
\caption{Performance degradation in AROSI, RMAP etc. with estimated $\sigma^2$. For $Alg\in \{RMAP,AROSI \}$,  $Alg(\sigma^2)$ represents the performance when $\sigma^2$ is known \textit{a priori} and $Alg(a\ \sigma^2_{est})$ represents the performance when $\sigma^2$ is estimated using  scheme 1 (\ref{l1noise}) and scaled by a factor $a$. Similarly, Alg-WOP is the performance of Alg without the reprojection step. }
\label{fig:noise_estimation}

\end{figure*}
\subsection{Analysing the performance of RMAP, AROSI etc. with estimated $\sigma^2$}
{ In this section, we consider the individual factors that cumulatively results in the degraded performance of algorithms like AROSI, RMAP etc.  As one can see from Fig. \ref{fig:model1}-Fig. \ref{fig:model3}, the performance of RMAP, AROSI etc. degrade significantly with increasing $k_g$. This is directly in agreement with Fig. \ref{fig:noise_estimation}.a) where it is shown that the error in the noise variance estimate also increases with increasing $k_g/n$ ratio. We have also observed that both the LAD and M-estimation based noise estimates typically overestimate the true  $\sigma$.  Consequently, one can mitigate the effect of error in $\sigma^2$ estimates by scaling these estimates  downwards before using them in RMAP, AROSI etc. The usage of scaled $\sigma^2$ estimates, as  demonstrated in Fig. \ref{fig:noise_estimation}.b) can significantly improve  the performance of RMAP and AROSI. However, the choice of a good  scaling value would be dependent upon the  unknown outlier sparsity regime and the particular noise variance estimation algorithm used.

The noise variance estimate  in AROSI, RMAP etc. are used in two different occassions, \textit{viz}. 1). to set the hyperparameters $\lambda_{arosi}$ and $\lambda_{rmap}$ and 2). to set the reprojection thresholds $\gamma$.   It is important to know which of these  two $\sigma^2$ dependent steps is most sensitive to the error in $\sigma^2$ estimate. From Fig. \ref{fig:noise_estimation}.c), it is clear that the performance of RMAP and AROSI significantly improves after the reprojection step when $\sigma^2$ is known \textit{a priori}. However, the performance of AROSI and RMAP is much better without reprojection when $\sigma^2$ is unknown and $k_g/n$ is higher. It is also important to note that when $k_g/n$ is small, the performance of RMAP and AROSI without reprojection is poorer than the performance with reprojection even when $\sigma^2$  is unknown.  Hence, the choice of whether to have a reprojection step with estimated $\sigma^2$ is itself dependent on the outlier sparsity regime. 
Both these analyses point out to the fact that it is difficult to improve  the performance of AROSI, RMAP etc. with estimated $\sigma^2$ uniformly over all outlier sparsity regimes by tweaking the various hyper parameters involved.  Please note that the performance of AROSI, RMAP etc. without reprojection or scaled down $\sigma^2$ estimate is still poorer than that  of RRT-GARD. }    

\subsection{Outlier detection in real data sets}
\begin{figure}
\begin{multicols}{2}
\caption*{a). Stack loss.}
    \includegraphics[width=\linewidth]{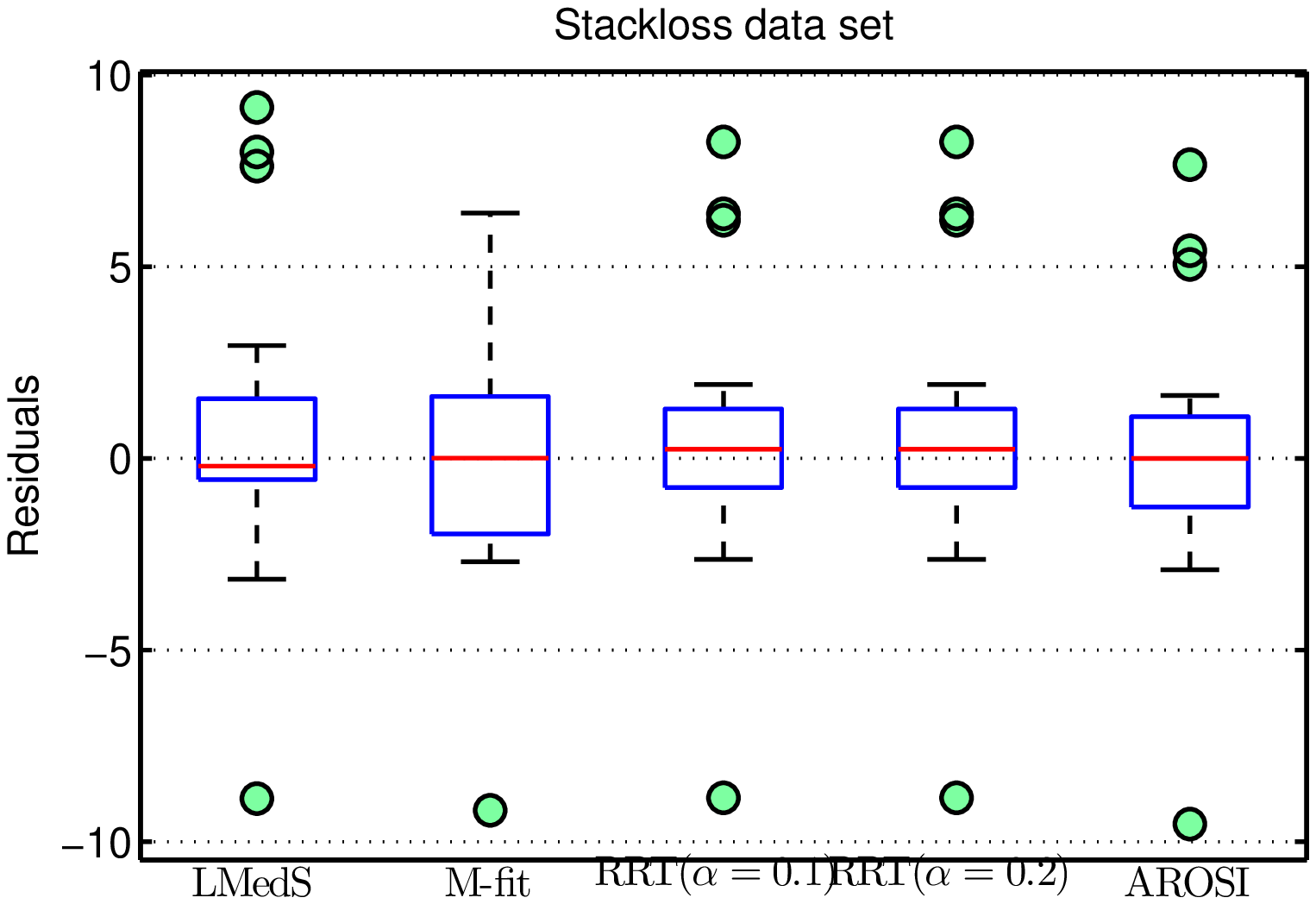}\par 
    \caption*{b). Star.}
    \includegraphics[width=\linewidth]{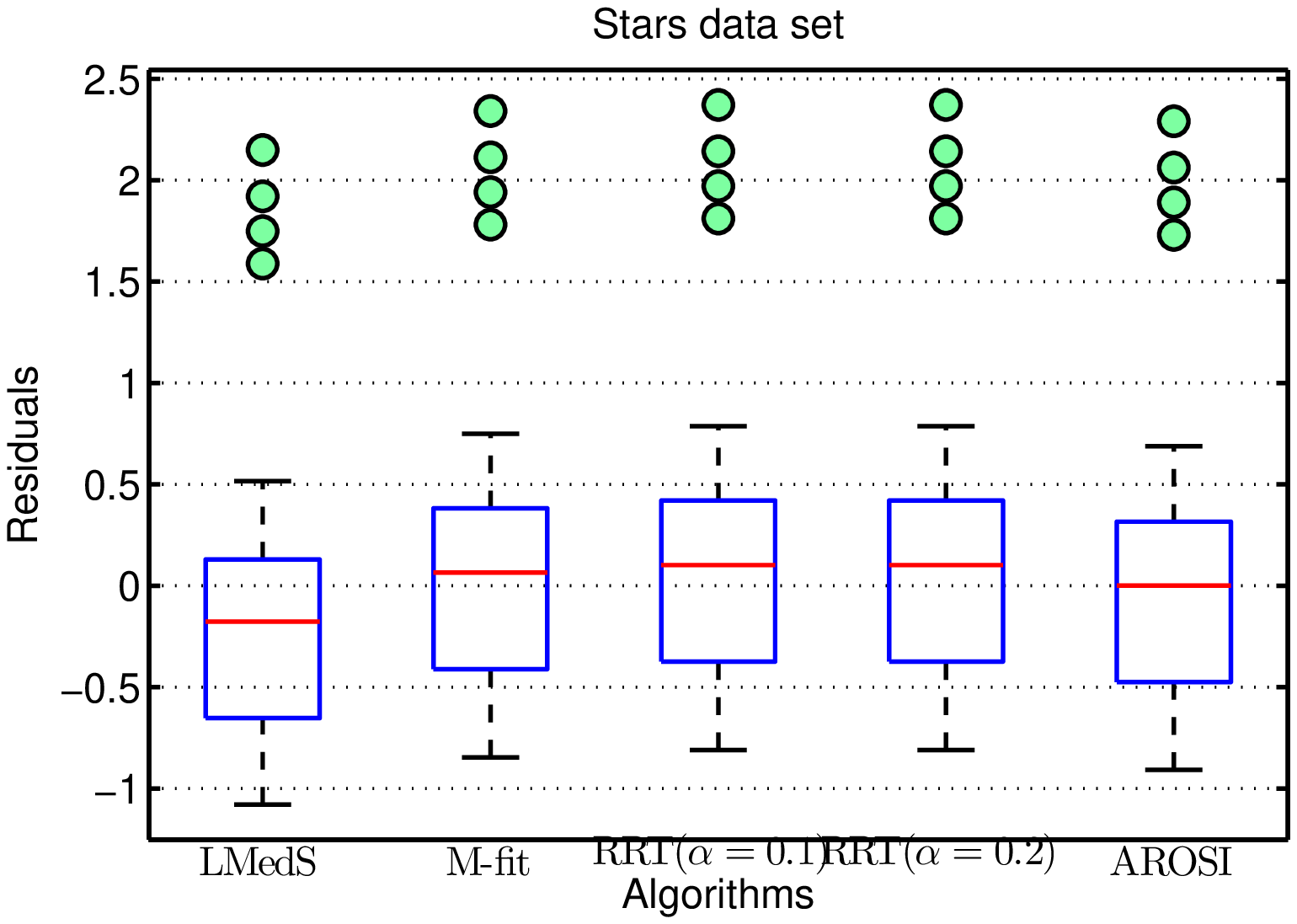}\par 
    \end{multicols}
\begin{multicols}{2}
\caption*{c). Brain-body weight.}
    \includegraphics[width=\linewidth]{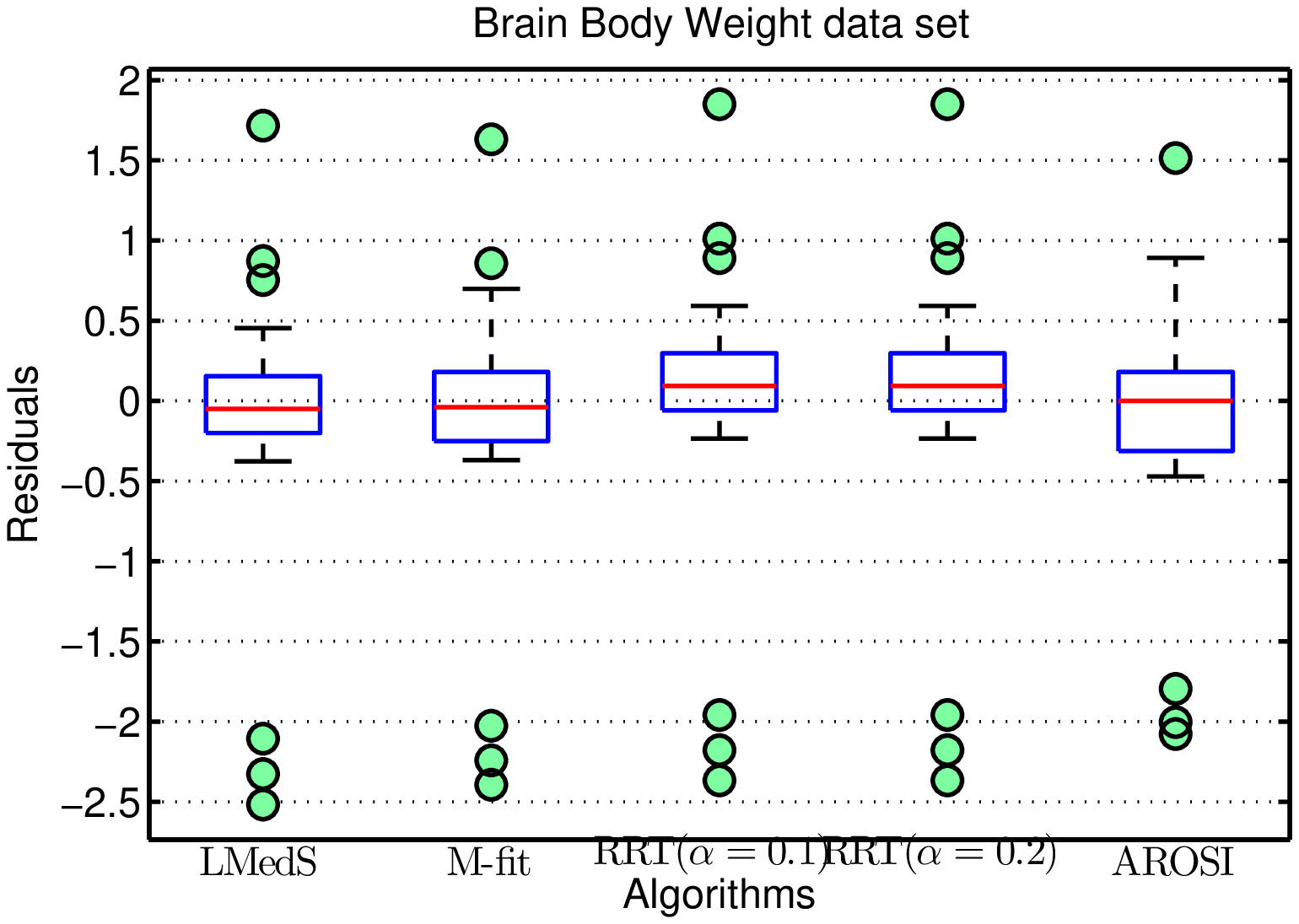}\par
    \caption*{d).  AR2000.}
    \includegraphics[width=\linewidth]{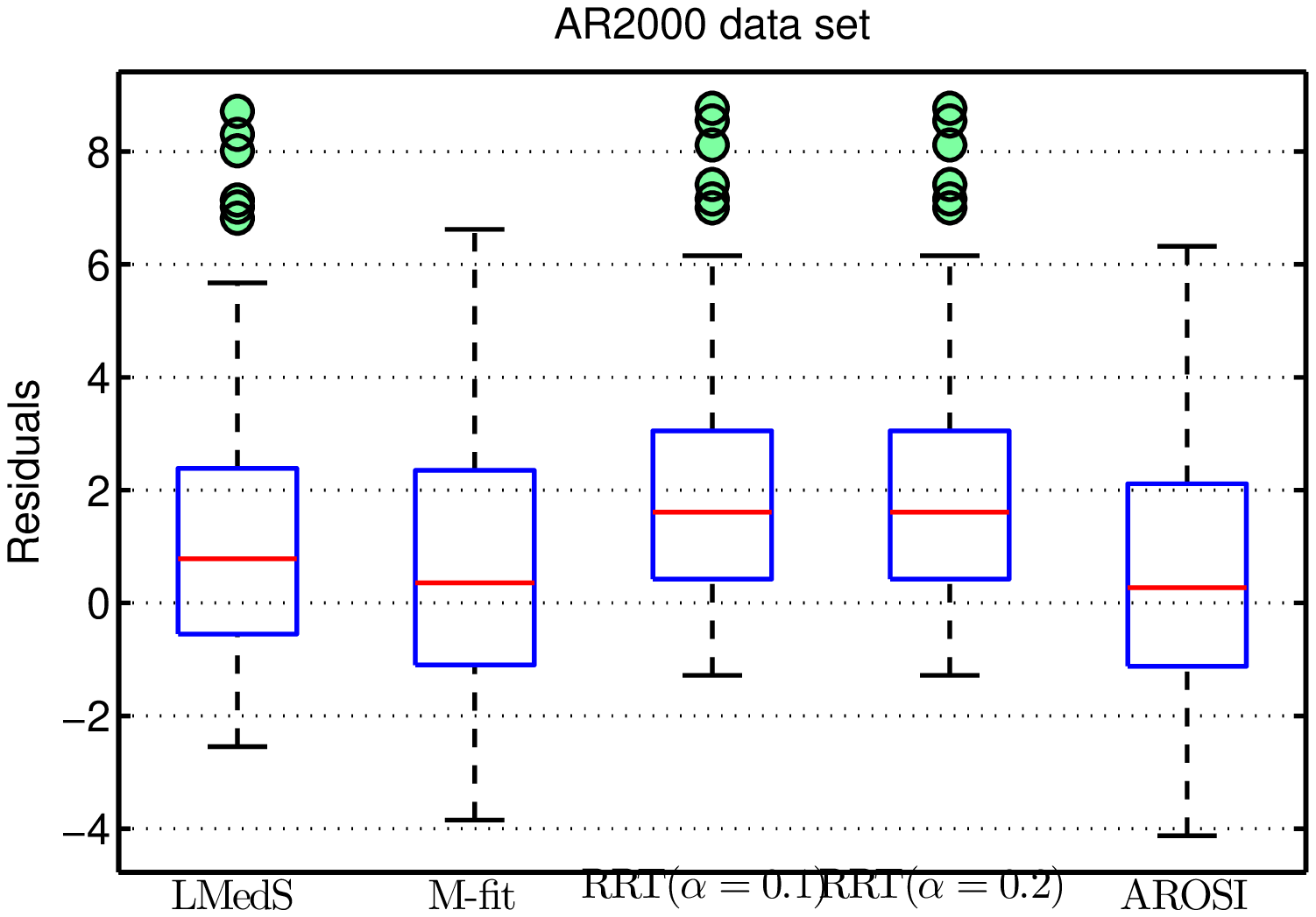}\par
\end{multicols}
\caption{Outlier detection in real data sets using Box plots.}
\label{fig:real_data_sets}
\end{figure}
In this section, we evaluate the performance of the RRT-GARD for outlier detection in four widely studied real life data sets, \textit{viz}., Brownlee's Stack loss  data set, Star data set, Brain and body weight data set (all three discussed in \cite{rousseeuw2005robust}) and the AR2000 dataset studied in \cite{atkinson2012robust}. Algorithms like RRT-GARD, AROSI, M-est etc. are not designed directly to perform outlier detection, rather they are designed to produce good estimates of $\boldsymbol{\beta}$. Hence,  we accomplish outlier detection using RRT-GARD, M-est, AROSI etc.   by analysing the corresponding residual ${\bf r}={\bf y}-{\bf X}\hat{\boldsymbol{\beta}}$  using the popular  Tukeys' box plot\cite{martinez2010exploratory}. Since, there is no ground truth in real data sets, we compare RRT-GARD with the computationally complex LMedS algorithm and  the existing studies on these data sets.  The $\sigma^2$ used in  AROSI is estimated using scheme 1. 

Stack loss data set contains $n=21$ observations and three predictors plus an intercept term. This data set deals with the operation of a plant that convert ammonia to nitric acid. Extensive previous studies\cite{rousseeuw2005robust,bdrao_robust} reported that observations $\{1,3,4,21\}$ are potential outliers.  Box plot in Fig. \ref{fig:real_data_sets} on the residuals computed by RRT-GARD, AROSI and LMedS also agree with the existing results. However, box plot of M-est   can identify only one outlier.    Star data set explore the relationship between the intensity of a star (response) and its surface temperature (predictor) for 47 stars in the star cluster CYG OB1 after taking a log-log transformation\cite{rousseeuw2005robust}. It is well known that 43   of these 47 stars belong to one group, whereas, four stars \textit{viz.} 11, 20, 30 and 34 belong to another group. This can be easily seen from scatter plot\cite{martinez2010exploratory} itself. Box plots for all algorithms identify these four stars as outliers. 

Brain body weight data set explores the interesting hypothesis that body weight (predictor)   is  positively correlated with brain weight (response) using the data available for 27 land animals\cite{rousseeuw2005robust}. Scatter plot after log-log transformation itself reveals three extreme outliers, \textit{viz.} observations 6, 16 and 25 corresponding to three Dinosaurs (big body and small brains). Box plot using LMedS and RRT-GARD residuals identify 1 (Mountain Beaver), 14 (Human) and 17 (Rhesus monkey) also as outliers. These animals have smaller body sizes and disproportionately large brains. However, Box plot using residuals computed by M-est   shows 17 as an inlier, whereas, AROSI shows 14 and 17 as inliers.    AR2000 is an artificial  data set discussed in TABLE A.2 of \cite{atkinson2012robust}. It has $n=60$ observations and $p=3$ predictors. Using extensive graphical analysis, it was shown in \cite{atkinson2012robust} that observations $\{9, 21, 30, 31, 38,47\}$ are outliers. Box plot with LMedS and RRT-GARD  also identify these as outliers, whereas, M-est and AROSI does not identify any outliers at all. To summarize, RRT-GARD matches LMedS and existing results in literature on  all the four datasets considered. This points to the superior performance and practical utility of RRT-GARD over M-est, AROSI etc.  Also please note that RRT-GARD with both $\alpha=0.1$ and $\alpha=0.2$ delivered exactly similar results in real data sets also.     

\section{Conclusions and future directions}
{This article developed a novel  noise statistics oblivious robust regression technique   and derived finite sample and asymptotic guarantees for the same. Numerical simulations indicate that RRT-GARD can deliver a very high quality performance compared to many state of the art algorithms. Note that GARD($\sigma^2$) itself is inferior in performance to BPRR, RMAP, AROSI etc. when $\sigma^2$ is known \textit{a priori} and RRT-GARD is designed to perform similar to GARD($\sigma^2$). Hence, developing similar inlier statistics oblivious frameworks with finite sample guarantees for  BPRR, RMAP, AROSI etc. may produce robust regression algorithms with much better performances than RRT-GARD itself.  This would be a topic of future research.  Another interesting topic of future research is to charecterize the optimum regularization and reprojection parameters for algorithms like AROSI, RMAP etc. when estimated noise statistics are used. }
\section*{Appendix A: Proof of Theorem \ref{lemma_gard}. }
Define ${\bf y}^*={\bf X}\boldsymbol{\beta}+{\bf g}_{out}$, i.e., ${\bf y}^*$ is  ${\bf y}$ without inlier  noise ${\bf w}$. Since, $\mathcal{S}_{g}=supp({\bf g}_{out})$, ${\bf y}^*=[{\bf X} \ {\bf I}_{\mathcal{S}_{g}}^n] [\boldsymbol{\beta}^T,  {\bf g}_{out}({\mathcal{S}_{ g}})^T]^T$. In other words, ${\bf y}^*\in span({\bf A}_g)$, where ${\bf A}_g=[{\bf X} \ {\bf I}_{\mathcal{S}_{g}}^n]$.    Lemma \ref{Projection} follows directly from this observation and the properties of projection matrices. 
\begin{lemma}\label{Projection} ${\bf y}^*\in span({\bf A}_g)$ implies that  $({\bf I}^n-{\bf P}_{{\bf A}^{k}}){\bf y}^*\neq {\bf 0}_n$ if $\mathcal{S}_{g} \not \subseteq {\mathcal{S}}^k_{GARD}  $ and $({\bf I}^n-{\bf P}_{{\bf A}^{k}}){\bf y}^*= {\bf 0}_n$ if $\mathcal{S}_{g}  \subseteq {\mathcal{S}}^k_{GARD}$. Likewise,  ${\bf X}\boldsymbol{\beta}\in span({\bf X})\subseteq span({\bf A}^k)$ implies that $({\bf I}^n-{\bf P}_{{\bf A}^{k}}){\bf X}\boldsymbol{\beta}={\bf 0}_n$, $\forall k\geq 0$.  
\end{lemma}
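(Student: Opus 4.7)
The claim naturally splits into three sub-statements, which I would handle in order of increasing difficulty, relying throughout on the standard fact that ${\bf u}\in V$ is equivalent to $({\bf I}-{\bf P}_V){\bf u}={\bf 0}$ for an orthogonal projector ${\bf P}_V$.

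First, the ``Likewise'' clause is immediate: since ${\bf A}^k=[{\bf X},{\bf I}^n_{\mathcal{S}^k_{GARD}}]$ contains every column of ${\bf X}$, we have $span({\bf X})\subseteq span({\bf A}^k)$ for every $k\geq 0$, so ${\bf X}\boldsymbol{\beta}$ is fixed by ${\bf P}_{{\bf A}^k}$ and hence $({\bf I}^n-{\bf P}_{{\bf A}^k}){\bf X}\boldsymbol{\beta}={\bf 0}_n$. Second, when $\mathcal{S}_g\subseteq \mathcal{S}^k_{GARD}$, the columns of ${\bf I}^n_{\mathcal{S}_g}$ are a subcollection of the columns of ${\bf I}^n_{\mathcal{S}^k_{GARD}}$, so $span({\bf A}_g)\subseteq span({\bf A}^k)$ and every ${\bf y}^*\in span({\bf A}_g)$ satisfies $({\bf I}^n-{\bf P}_{{\bf A}^k}){\bf y}^*={\bf 0}_n$.

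The only substantive case is $\mathcal{S}_g\not\subseteq \mathcal{S}^k_{GARD}$. Using the concrete decomposition ${\bf y}^*={\bf X}\boldsymbol{\beta}+{\bf g}_{out}$ together with the ``Likewise'' clause just established, $({\bf I}^n-{\bf P}_{{\bf A}^k}){\bf y}^*=({\bf I}^n-{\bf P}_{{\bf A}^k}){\bf g}_{out}$. Split ${\bf g}_{out}$ along the partition $\mathcal{S}_g=(\mathcal{S}_g\cap \mathcal{S}^k_{GARD})\cup \mathcal{T}$ with $\mathcal{T}:=\mathcal{S}_g\setminus \mathcal{S}^k_{GARD}\neq \phi$; the piece supported on $\mathcal{S}_g\cap\mathcal{S}^k_{GARD}$ lies in $span({\bf I}^n_{\mathcal{S}^k_{GARD}})\subseteq span({\bf A}^k)$ and is killed, reducing the problem to showing ${\bf u}:={\bf I}^n_{\mathcal{T}}{\bf g}_{out}(\mathcal{T})\notin span({\bf A}^k)$. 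Here ${\bf u}\neq {\bf 0}_n$ because $\mathcal{T}\subseteq \mathcal{S}_g=supp({\bf g}_{out})$, and its support is disjoint from $\mathcal{S}^k_{GARD}$.

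The main obstacle is this last assertion, which is not a pure projection fact but a subspace-independence input: if ${\bf u}\in span({\bf A}^k)=span({\bf X})+span({\bf I}^n_{\mathcal{S}^k_{GARD}})$, then $[{\bf X},{\bf I}^n_{\mathcal{S}^k_{GARD}\cup \mathcal{T}}]$ would have a non-trivial null space. I would rule this out by invoking the regularity hypothesis $\delta_{k_g}<\sqrt{{\bf g}_{min}/(2\|{\bf g}_{out}\|_2)}$ of the calling theorem, which in particular gives $\delta_{k_g}<1$: the defining bound $|{\bf v}^T{\bf u}|\leq \delta_{k_g}\|{\bf u}\|_2\|{\bf v}\|_2$ with $\delta_{k_g}<1$ precludes any non-zero common vector in $span({\bf X})$ and an outlier subspace $span({\bf I}^n_{\tilde{S}})$ of the relevant dimension, since otherwise Cauchy--Schwarz would be saturated. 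Translating this trivial-intersection property into full column rank of $[{\bf X},{\bf I}^n_{\mathcal{S}^k_{GARD}\cup \mathcal{T}}]$ shows that a non-zero ${\bf u}$ supported on $\mathcal{T}$ cannot lie in $span({\bf X})+span({\bf I}^n_{\mathcal{S}^k_{GARD}})$, closing the argument. This step is precisely what the excerpt glosses with ``properties of projection matrices''.
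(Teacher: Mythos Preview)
Your handling of the two ``easy'' cases is correct and is essentially all the paper offers: it asserts that the lemma ``follows directly from this observation and the properties of projection matrices'' without further detail. You are also right that the implication $\mathcal{S}_g\not\subseteq\mathcal{S}^k_{GARD}\Rightarrow({\bf I}^n-{\bf P}_{{\bf A}^k}){\bf y}^*\neq{\bf 0}_n$ is \emph{not} a pure projection fact and needs a subspace--independence input; the paper glosses over this completely, so in that sense you have gone further than the paper.

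That said, your $\delta_{k_g}$ argument has a dimension gap. You need $[{\bf X},{\bf I}^n_{\mathcal{S}^k_{GARD}\cup\mathcal{T}}]$ to be full column rank, but $\delta_{k_g}<1$ only controls index sets of cardinality at most $k_g$, while $|\mathcal{S}^k_{GARD}\cup\mathcal{T}|$ can exceed $k_g$ whenever $\mathcal{S}^k_{GARD}$ contains indices outside $\mathcal{S}_g$. A concrete failure: take $n=4$, ${\bf X}=[1,1,1,0]^T$, $\boldsymbol{\beta}=0$, ${\bf g}_{out}={\bf e}_1$, so $\mathcal{S}_g=\{1\}$ and $\delta_{k_g}=\delta_1=1/\sqrt{3}<1$; with $\mathcal{S}^k_{GARD}=\{2,3\}$ one has ${\bf y}^*={\bf e}_1={\bf X}-{\bf e}_2-{\bf e}_3\in span({\bf A}^k)$, so the implication is false even though $\delta_{k_g}<1$. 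The clean repair is to note that in the calling theorem the lemma is only combined with the event $\|{\bf w}\|_2\leq\epsilon_{GARD}$, under which Lemma~\ref{lemma_gard} forces $\mathcal{S}^k_{GARD}\subseteq\mathcal{S}_g$ for all $k<k_g$; then $\mathcal{S}^k_{GARD}\cup\mathcal{T}=\mathcal{S}_g$ has exactly $k_g$ elements and your Cauchy--Schwarz saturation argument applies verbatim. Alternatively, observe that the downstream proofs in Appendices A--C only use the residual identity $({\bf I}^n-{\bf P}_{{\bf A}^k}){\bf y}=({\bf I}^n-{\bf P}_{{\bf A}^k})({\bf g}_{out}+{\bf w})$, which follows from your ``Likewise'' clause alone and never actually requires the ``$\neq{\bf 0}_n$'' assertion.
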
 
 The definition of $k_{min}$ along with the  monotonicity of support ${\mathcal{S}}^k_{GARD}$ in Lemma \ref{lemma:gard} implies that ${\bf y}^*\notin span({\bf A}^{k})$ for $k<k_{min}$ and ${\bf y}^*\in span({\bf A}^{k})$ for $k\geq k_{min}$.  It then follows from Lemma \ref{Projection} that  ${\bf r}^{k}_{GARD}=({\bf I}^n-{\bf P}_{{\bf A}^{k}}){\bf y}=({\bf I}^n-{\bf P}_{{\bf A}^{k}}){\bf g}_{out}+({\bf I}^n-{\bf P}_{{\bf A}^{k}}){\bf w}$ for $k<k_{min}$, whereas, ${\bf r}^{k}_{GARD}=({\bf I}^n-{\bf P}_{{\bf A}^{k}}){\bf w}$ for $k\geq k_{min}$. Also by Lemma \ref{lemma_gard}, we know that $\|{\bf w}\|_2\leq \epsilon_{GARD}$ implies that $k_{min}=k_g$ and ${\bf A}^{k_{min}}={\bf A}_g$.  Then following the previous analysis, $RR(k_{min})=\dfrac{\|({\bf I}^n-{\bf P}_{{\bf A}_g}){\bf w} \|_2}{\|({\bf I}^n-{\bf P}_{{\bf A}^{k_{g}-1}})({\bf g}_{out}+{\bf w})\|_2} $ \text{for} $\|{\bf w}\|_2 \leq \epsilon_{GARD}$. From the proof of Theorem 4 in \cite{gard}, we have $\|({\bf I}^n-{\bf P}_{{\bf A}^{k_{g}-1}})({\bf g}_{out}+{\bf w})\|_2 \geq {\bf g}_{min}-\delta_{k_g}^2\|{\bf g}_{out}\|_2-(\sqrt{\dfrac{{3}}{2}}+1)\|{\bf w}\|_2$  once $\|{\bf w}\|_2\leq \epsilon_{GARD}$. When $\|{\bf w}\|_2\geq \epsilon_{GARD}$, $k_{min}$ may not be equal to $k_g$. However, it will satisfy  $RR(k_{min})\leq 1$.   Hence,
 \begin{equation}
 \begin{array}{ll}
RR(k_{min})\leq \left(\dfrac{\|({\bf I}^n-{\bf P}_{{\bf A}_g}){\bf w} \|_2}{{\bf g}_{min}-\delta_{k_g}^2\|{\bf g}_{out}\|_2-(\sqrt{\dfrac{{3}}{2}}+1)\|{\bf w}\|_2}\right)\\
 \times \mathcal{I}_{\{\|{\bf w}\|_2 \leq \epsilon_{GARD}\}}
 +\mathcal{I}_{\{\|{\bf w}\|_2 > \epsilon_{GARD}\}},
\end{array}
\end{equation}
 where $\mathcal{I}_{\{x\}}$ is the indicator function satisfying $\mathcal{I}_{\{x\}}=1$ for $x>0$ and $\mathcal{I}_{\{x\}}=0$ for $x\leq 0$. Note that   $\|{\bf w}\|_2\overset{P}{\rightarrow } 0$ as $\sigma^2\rightarrow 0$ implies that  $\dfrac{\|({\bf I}^n-{\bf P}_{{\bf A}_g}){\bf w} \|_2}{{\bf g}_{min}-\delta_{k_g}^2\|{\bf g}_{out}\|_2-(\sqrt{\dfrac{{3}}{2}}+1)\|{\bf w}\|_2}\overset{P}{\rightarrow} 0$, $\mathcal{I}_{\{\|{\bf w}\|_2 > \epsilon_{GARD}\}}\overset{P}{\rightarrow} 0$ and $\mathcal{I}_{\{\|{\bf w}\|_2 \leq \epsilon_{GARD}\}}\overset{P}{\rightarrow} 1$. This together with $RR(k)\geq 0$ for all $k$ implies that $RR(k_{min})\overset{P}{\rightarrow } 0$ as $\sigma^2\rightarrow 0$. Similarly, $\|{\bf w}\|_2\overset{P}{\rightarrow } 0$ as $\sigma^2\rightarrow 0$ also implies that  $\underset{\sigma^2\rightarrow 0}{\lim}\mathbb{P}(k_{min}=k_g)\geq \underset{\sigma^2\rightarrow 0}{\lim}\mathbb{P}(\|{\bf w}\|_2\leq \epsilon_{GARD})= 1$. 

\section*{Appendix B: Proof of Theorem \ref{thm:Beta}.}
The proof of Theorem \ref{thm:Beta} is based on the distributions associated with projection matrices.  We first discuss some preliminary distributional results and the proof of Theorem \ref{thm:Beta} is given in the next subsection.  
\subsection{Projection matrices and distributions.} 
Assume temporarily that the support of ${\bf g}_{out}$ is given by $\mathcal{S}_g^{temp}=\{1,2,\dotsc,k_g\}$. Further, consider an algorithm $Alg$ that produces support estimates $\mathcal{S}^k_{Alg}=\{1,2\dotsc,k\}$, i.e., the support estimate sequence is deterministic. For this support sequence, $k_{min}=k_g$ deterministically.  Define ${\bf A}_{Alg}^k=[{\bf X}, \ {\bf I}^n_{\mathcal{S}_{Alg}^k}]$. Then using Lemma \ref{Projection} , ${\bf r}^k_{Alg}=({\bf I}^n-{\bf P}_{{\bf A}_{Alg}^k}){\bf y}=({\bf I}^n-{\bf P}_{{\bf A}_{Alg}^k}){\bf g}_{out}+({\bf I}^n-{\bf P}_{{\bf A}_{Alg}^k}){\bf w}$ for $k<k_g$ and ${\bf r}^k_{Alg}=({\bf I}^n-{\bf P}_{{\bf A}_{Alg}^k}){\bf w}$ for $k\geq k_g$. Using  standard distributional results discussed in\cite{tsp} for deterministic projection matrices give the following for $k>k_g$ and $\sigma^2>0$. 
\begin{equation}\label{beta3}
RR(k)^2=\dfrac{\|{\bf r}^k_{Alg}\|_2^2}{\|{\bf r}^{k-1}_{Alg}\|_2^2}=\frac{\|({\bf I}^n-{\bf P}_{{\bf A}^{k}_{Alg}}){\bf w}\|_2^2}{\|({\bf I}^n-{\bf P}_{{\bf A}^{k-1}_{Alg}}){\bf w}\|_2^2} \sim \mathbb{B}(\frac{n-p-k}{2},\frac{1}{2}).
\end{equation}
Define $\Gamma_{Alg}^{\alpha}(k)=\sqrt{F^{-1}_{\frac{n-p-k}{2},\frac{1}{2}}\left(\dfrac{\alpha}{k_{max}}\right)}$. Then it follows from the union bound and the definition of $\Gamma_{Alg}^{\alpha}(k)$ that
\begin{equation}\label{naive_bound}
\begin{array}{ll}
\mathbb{P}(RR(k)>\Gamma_{Alg}^{\alpha}(k),\forall k\geq k_{min}=k_g)\\=1-\mathbb{P}\left(\exists k\geq k_{g}, RR(k)^2<\left(\Gamma_{Alg}^{\alpha}(k)\right)^2\right)\\
\geq 1-\sum\limits_{k>k_g}F_{\frac{n-p-k}{2},\frac{1}{2}}\left(F^{-1}_{\frac{n-p-k}{2},\frac{1}{2}}\left(\frac{\alpha}{k_{max}}\right)\right)\geq 1-\alpha,
\end{array}
\end{equation}
$\forall\ \sigma^2>0$. The support sequence produced by GARD is different from the hypothetical  algorithm Alg in at least two ways. a) The support sequence $\mathcal{S}_{GARD}^k$ and projection matrix sequence ${\bf P}_{{\bf A}^k}$ in GARD are not deterministic and is data dependent. b) $k_{min}$ is not a deterministic quantity, but a R.V taking value in $\{k_g,\dotsc,k_{max},\infty\}$. a) and b) imply that the distributional results  (\ref{beta3}) and  (\ref{naive_bound}) derived for deterministic support and projection matrix  sequences  are not applicable to GARD support sequence estimate $\{\mathcal{S}_{GARD}^k\}_{k=1}^{k_{max}}$.   
\subsection{Analysis of GARD residual ratios}
 The proof of Theorem \ref{thm:Beta} proceeds by conditioning on the R.V $k_{min}$ and  by lower bounding  $RR(k)$ for $k>k_{min}$  using  R.Vs with known distribution. 

{\bf Case 1:-}  {\bf Conditioning on $k_g\leq k_{min}=j<k_{max}$}.
 Since $\mathcal{S}_g\subseteq \mathcal{S}_{GARD}^k$ for $k\geq k_{min}$,  it follows from  the proof of Theorem \ref{lemma_gard} and Lemma \ref{Projection}  that ${\bf r}^k_{GARD}=({\bf I}^n-{\bf P}_{{\bf A}^k}){\bf w}$ for $k\geq k_{min}=j$ which in turn implies that
\begin{equation}
RR(k)=\dfrac{\|({\bf I}^n-{\bf P}_{{\bf A}^k}){\bf w}\|_2}{\|({\bf I}^n-{\bf P}_{{\bf A}^{k-1}}){\bf w}\|_2}
\end{equation}
for $k>k_{min}=j$.  
 Consider the step $k-1$ of the GARD where $k>j$. Current support estimate ${\mathcal{S}}^{k-1}_{GARD}$ is itself a R.V.   Let $\mathcal{L}_{k-1}\subseteq \{[n]/{\mathcal{S}}^{k-1}_{GARD}\}$ represents the set of  all possible indices $l$ at stage $k-1$ such that ${\bf A}^{k-1,l}=[{\bf X}\ {\bf I}_{{\mathcal{S}}^{k-1}_{GARD}\cup l}^n]=[{\bf A}^{k-1}\ {\bf I}^n_l]$ is full rank. Clearly,  $card(\mathcal{L}_{k-1})\leq n-card(\mathcal{S}^{k-1}_{GARD})=n-k+1$. Likewise, let $\mathcal{K}^{k-1}$ represents the set of all possibilities for the set $\mathcal{S}^{k-1}_{GARD}$ that would also satisfy the constraint $k> k_{min}=j$, i.e., $\mathcal{K}^{k-1}$ is the set of all ordered sets of size $k-1$ such that the $j^{th}$ entry should belongs to $\mathcal{S}_g$ and the  $k_g-1$ entries out of the first $j-1$   entries  should belong to $\mathcal{S}_g$. 
 
 Conditional on both the R.Vs $k_{min}=j$ and $\mathcal{S}^{k-1}_{GARD}=s^{k-1}_{gard}\in \mathcal{K}^{k-1}$, the projection matrix ${\bf P}_{{\bf A}^{k-1}}$ is a deterministic matrix  and so are ${\bf P}_{{\bf A}^{k-1,l}}$  for each $l\in \mathcal{L}_{k-1}$. Consequently, conditional on  $k_{min}=j$ and $\mathcal{S}^{k-1}_{GARD}=s^{k-1}_{gard}$, it follow from the discussions in Part A of Appendix B for deterministic projection matrices  that  the conditional R.V
\begin{align*}
Z_k^{l}|\{{\mathcal{S}}^{k-1}_{GARD}=s^{k-1}_{gard},k_{min}=j\}=\frac{\|({\bf I}^n-{\bf P}_{{\bf A}^{k-1,l}}){\bf w}\|_2^2}{\|({\bf I}^n-{\bf P}_{{\bf A}^{k-1}}){\bf w}\|_2^2} 
\end{align*} 
$ \text{for} \ l \ \in \mathcal{L}_{k-1}$ has distribution 
\begin{align*}
Z_k^{l}|\{{\mathcal{S}}^{k-1}_{GARD}=s^{k-1}_{gard},k_{min}=j\} \sim \mathbb{B}\left(\frac{n-p-k}{2},\frac{1}{2}\right), 
\end{align*}
$\forall l \in \mathcal{L}_{k-1}$. Since the index selected in the $k-1^{th}$ iteration belongs to  $\mathcal{L}_{k-1}$, it follows that conditioned on $\{{\mathcal{S}}^{k-1}_{GARD}=s^{k-1}_{gard},k_{min}=j\}$,
\begin{equation}\label{cond_Beta}
\underset{l\in \mathcal{L}_{k-1}}{\min}\sqrt{Z_k^l|\{{\mathcal{S}}^{k-1}_{GARD}=s^{k-1}_{gard},k_{min}=j\} }\leq RR(k). 
\end{equation}
By  the distributional result (\ref{cond_Beta}), $\Gamma_{RRT}^{\alpha}(k)=\sqrt{F_{\frac{n-p-k}{2},0.5}^{-1}\left(\frac{\alpha}{k_{max}(n-k+1)}\right)}$ satisfies \squeezeup
\begin{equation}\label{abbbbcbc}
\begin{array}{ll}
\mathbb{P}({Z_k^l}<\left(\Gamma_{RRT}^{\alpha}(k)\right)^2|\{{\mathcal{S}}^{k-1}_{GARD}=s^{k-1}_{gard},k_{min}=j\})\\=F_{\frac{n-p-k}{2},0.5}\left(F_{\frac{n-p-k}{2},0.5}^{-1}\left(\frac{\alpha}{k_{max}(n-k+1)}\right)\right)=\dfrac{\alpha}{k_{max}(n-k+1)}
\end{array}
\end{equation}
Using union bound and $card(\mathcal{L}_{k-1})\leq n-k+1$ in (\ref{abbbbcbc}) gives
\begin{equation}\label{firstbound}
\begin{array}{ll}
\mathbb{P}(RR(k)<\Gamma_{RRT}^{\alpha}(k)|\{{\mathcal{S}}^{k-1}_{GARD}=s_{gard}^{k-1},k_{min}=j\})\\
\leq \mathbb{P}(\underset{l\in \mathcal{L}_{k-1}}{\min}\sqrt{Z_k^l|}<\Gamma_{RRT}^{\alpha}(k)|\{{\mathcal{S}}^{k-1}_{GARD}=s_{gard}^{k-1},k_{min}=j\}) \\
 {\leq} \sum\limits_{l \in \mathcal{L}_{k-1}}\mathbb{P}({Z_k^l}<\Gamma_{RRT}^{\alpha}(k)^2|\{{\mathcal{S}}^{k-1}_{GARD}=s_{gard}^{k-1},k_{min}=j\})\\
{\leq} \dfrac{\alpha}{k_{max}}.
\end{array}
\end{equation}
 Eliminating the random set $\mathcal{S}^{k-1}_{GARD}=s_{gard}^{k-1}$ from (\ref{firstbound}) using the law of total probability gives the following $\forall k>k_{min}=j$
\begin{equation}\label{secondbound}
\begin{array}{ll}
\mathbb{P}(RR(k)<\Gamma_{RRT}^{\alpha}(k)|k_{min}=j)\\
=\sum\limits_{s_{gard}^{k-1} \in \mathcal{K}^{k-1}} \mathbb{P}(RR(k)<\Gamma_{RRT}^{\alpha}(k)|\{\mathcal{S}^{k-1}_{GARD}=s^{k-1}_{gard},k_{min}=j\}) \\
\ \ \ \ \ \ \ \ \  \times \mathbb{P}(\mathcal{S}^{k-1}_{GARD}=s^{k-1}_{gard}|k_{min}=j) \\
\leq \sum\limits_{s_{gard}^{k-1} \in \mathcal{K}^{k-1}}\dfrac{\alpha}{k_{max}} \mathbb{P}(\mathcal{S}^{k-1}_{Alg}=s^{k-1}_{gard}|k_{min}=j)
=\dfrac{\alpha}{k_{max}}.
\end{array}
\end{equation}
Now applying union bound  and (\ref{secondbound}) gives
\begin{equation}\label{thirdbound}
\begin{array}{ll}
\mathbb{P}(RR(k)>\Gamma_{RRT}^{\alpha}(k),\forall k>k_{min}|k_{min}=j)\\
\geq 1-\sum\limits_{k=j+1}^{k_{max}}\mathbb{P}(RR(k)<\Gamma_{RRT}^{\alpha}(k)|k_{min}=j)\\
\geq 1-\alpha \dfrac{k_{max}-j}{k_{max}} \geq 1-\alpha.
\end{array}
\end{equation}
{\bf Case 2:-}  {\bf Conditioning on $ k_{min}=\infty$ and $k_{min}=k_{max}$}. In both these cases, the set $\{k_g< k\leq k_{max}:k>k_{min}\}$ is empty. Applying the usual convention of assigning the minimum value of empty sets to $\infty$, one has for $j \in \{k_{max},\infty\}$
\begin{equation}\label{fourthbound}
\begin{array}{ll}
\mathbb{P}(RR(k)>\Gamma_{RRT}^{\alpha}(k),\forall k>k_{min}|k_{min}=j)\\
\geq \mathbb{P}(\underset{k>j}{\min}RR(k)>\underset{k>j}{\max}\Gamma_{RRT}^{\alpha}(k)|k_{min}=j)\\
=1 \geq 1-\alpha.
\end{array}
\end{equation}
Again applying law of total probability to remove the conditioning on $k_{min}$ along with bounds (\ref{thirdbound}) and (\ref{fourthbound}) gives
\begin{equation}\label{finalbound}
\begin{array}{ll}
\mathbb{P}(RR(k)>\Gamma_{RRT}^{\alpha}(k),\forall k>k_{min})\\=\sum\limits_{j }\mathbb{P}(RR(k)>\Gamma_{RRT}^{\alpha}(k),\forall k>k_{min}|k_{min}=j)\mathbb{P}(k_{min}=j)\\
\geq \sum\limits_{j }(1-\alpha)\mathbb{P}(k_{min}=j)=1-\alpha,\ \forall \sigma^2>0.
\end{array}
\end{equation}
This proves the statement in Theorem \ref{thm:Beta}.
\section*{Appendix C: Proof of Theorem \ref{thm:rrt-gard}}
 RRT-GARD support estimate $\mathcal{S}_{RRT}={\mathcal{S}}^{{k}_{RRT}}_{GARD}$, where ${k}_{RRT}=\max\{k:RR(k)<\Gamma^{\alpha}_{RRT}(k)\}$ equals outlier support $\mathcal{S}_{ g}$ iff the following three events  occurs simultaneously. \\
$\mathcal{A}_1:$ First $k_g$ iterations in GARD are correct, i.e., $k_{min}=k_g$.\\
$\mathcal{A}_2:$ $RR(k)>\Gamma^{\alpha}_{RRT}(k)$ for all $k>k_{min}$. \\ $\mathcal{A}_3:$ $RR(k_g)<\Gamma^{\alpha}_{RRT}(k_g)$\\
 Hence, the probability  of correct outlier support recovery, i.e., $\mathbb{P}({\mathcal{S}}^{{k}_{RRT}}_{GARD}=\mathcal{S}_g)=\mathbb{P}(\mathcal{A}_1\cap \mathcal{A}_2\cap \mathcal{A}_3)$. 
 
By Lemma \ref{lemma_gard}, event $\mathcal{A}_1$ is true once $\|{\bf w}\|_2\leq \epsilon_{GARD}$. By Theorem \ref{thm:Beta}, $\mathcal{A}_2$ is true with  probability $\mathbb{P}(\mathcal{A}_2)\geq 1-\alpha,\forall \sigma^2>0$. Next, consider the event $\mathcal{A}_3$ assuming that $\mathcal{A}_1$ is true, i.e., $\|{\bf w}\|_2\leq \epsilon_{GARD}$.  From the proof of Theorem 4 in \cite{gard}, ${\bf r}^{k}_{GARD}$  for $k<k_g$ and $\|{\bf w}\|_2\leq \epsilon_{GARD}$ satisfies
 \begin{equation}\label{lb}
 \|{\bf r}^{k}_{GARD}\|_2\geq {\bf g}_{min}-\delta_{k_g}^2\|{\bf g}_{out}\|_2-(\sqrt{\dfrac{{3}}{2}}+1)\|{\bf w}\|_2.
 \end{equation}
   By Lemma \ref{lemma_gard}, $\mathcal{S}_{GARD}^{k_g}=\mathcal{S}_g$  if $\|{\bf w}\|_2<\epsilon_{GARD}$. This  implies that $\|{\bf r}^{k_g}_{GARD}\|_2=\|({\bf I}^n-{\bf P}_{{\bf A}^{k_g}}){\bf y}\|_2=\|({\bf I}^n-{\bf P}_{{\bf A}^{k_g}}){\bf w}\|_2\leq  \|{\bf w}\|_2$. Hence, if $\|{\bf w}\|_2\leq \epsilon_{GARD}$, then  $RR(k_g)$ satisfies
  \begin{equation}\label{ub}
  RR(k_g)\leq \dfrac{\|{\bf w}\|_2}{{\bf g}_{min}-\delta_{k_g}^2\|{\bf g}_{out}\|_2-(\sqrt{\dfrac{{3}}{2}}+1)\|{\bf w}\|_2}.
  \end{equation}
$\mathcal{A}_3$ is true once the upper bound on $RR(k_g)$ in (\ref{ub}) is lower than $\Gamma^{\alpha}_{RRT}(k_g)$ which in turn is true whenever $\|{\bf w}\|_2< \min(\epsilon_{GARD},\epsilon_{RRT})$. Hence, $\epsilon^{\sigma}\leq \min(\epsilon_{GARD},\epsilon_{RRT})$ implies that $\mathbb{P}(\mathcal{A}_1\cap \mathcal{A}_3)\geq 1-1/n$. This along with $\mathbb{P}(\mathcal{A}_2)\geq 1-\alpha,\forall \sigma^2>0$ implies that $\mathbb{P}(\mathcal{A}_1\cap \mathcal{A}_2\cap \mathcal{A}_3)\geq 1-1/n-\alpha$,  whenever $\epsilon^{\sigma}<\min(\epsilon_{GARD},\epsilon_{RRT})$. Hence proved.

 \section*{Appendix D: Proof of Theorem \ref{thm:asymptotic}}
 Recall that $\Gamma_{RRT}^{\alpha}(k_g)=\sqrt{\Delta_n}$, where $\Delta_n=F_{\frac{n-p-k_g}{2},0.5}^{-1}\left(x_n\right)$ and $x_n=\dfrac{\alpha}{(n-p-1)(n-k_g+1)}$. Irrespective of whether $\alpha$ is a constant or $\alpha\rightarrow 0$ with increasing $n$, the condition $\underset{n \rightarrow \infty}{\lim}\dfrac{p+k_g}{n}<1$ implies that  $\underset{n \rightarrow \infty}{\lim}x_n=0$.  Expanding  $F^{-1}_{a,b}(z)$  at $z=0$ gives \cite{kallummil18a}
\begin{equation}\label{beta_exp}
\begin{array}{ll}
F^{-1}_{a,b}(z)=\rho(n,1)+\dfrac{b-1}{a+1}\rho(n,2) \\
+\dfrac{(b-1)(a^2+3ab-a+5b-4)}{2(a+1)^2(a+2)}\rho(n,3)
+O(z^{(4/a)})
\end{array}
\end{equation}
for all $a>0$.   We associate $a=\frac{n-p-k_g}{2}$, $b=1/2$ , $z=x_n$ and $\rho(n,l)=(az\mathbb{B}(a,b))^{(l/a)}=\left(\frac{\left(\frac{n-p-k_g}{2}\right)\alpha\mathbb{B}(\frac{n-p-k_g}{2},0.5)}{{(n-p+1)}(n-k_g+1)}\right)^{\frac{2l}{n-p-k_g}}$ for $l\geq 1$. 
Then $\log(\rho(n,l))$ gives
\begin{equation}\label{log_rho}
\begin{array}{ll}
\log(\rho(n,l))=\frac{2l}{n-p-k_g}\log\left(\frac{(\frac{n-p-k_g}{2})}{{n-p+1}}\right)-\frac{2l}{n-p-k_g}\log(n-k_g+1) \\
+\frac{2l}{n-p-k_g}\log\left(\mathbb{B}(\frac{n-p-k_g}{2},0.5)\right)+\frac{2l}{n-p-k_g}\log(\alpha)
\end{array}
\end{equation}
In the limits $n\rightarrow \infty$ and $0\leq \underset{n\rightarrow \infty}{\lim} \dfrac{p+k_g}{n}<1$, the first and second terms in the R.H.S of (\ref{log_rho}) converge to zero. Using the asymptotic expansion\cite{kallummil18a}
 $\mathbb{B}(a,b)=G(b)a^{-b}\left(1-\frac{b(b-1)}{2a}(1+O(\frac{1}{a}))\right)$ as\footnote{$G(b)=\int_{t=0}^{t=\infty}e^{-t}t^{b-1}$ is the Gamma function.} $a \rightarrow \infty$ in the second term of (\ref{log_rho}) gives
\begin{equation}
\underset{n \rightarrow \infty}{\lim}\frac{2l}{n-p-k_g}\log\left(\mathbb{B}(\frac{n-p-k_g}{2},0.5)\right)=0.
\end{equation}
Hence, only the behaviour of $\frac{2l}{n-p-k_g}\log(\alpha)$ need to be considered. Now we consider the three cases depending on the behaviour of $\alpha$.

{\bf Case 1:-} When $\underset{n \rightarrow \infty}{\lim}\log(\alpha)/n=0$ 
one has $\underset{n \rightarrow \infty}{\lim}\log(\rho(n,l))=0$ which in turn implies that $\underset{n \rightarrow \infty}{\lim}\rho(n,l)=1$ for every $l$. 

{\bf Case 2:-} When $-\infty<\alpha_{lim}=\underset{n \rightarrow \infty}{\lim}\log(\alpha)/n<0$ and $\underset{n \rightarrow \infty}{\lim}\dfrac{p+k_g}{n}=d_{lim}<1$,   
one has $-\infty<\underset{n \rightarrow \infty}{\lim}\log(\rho(n,l))=(2l\alpha_{lim})/(1-d_{lim})<0$. This in turn implies that $0<\underset{n \rightarrow \infty}{\lim}\rho(n,l)=e^{\frac{2l\alpha_{lim}}{1-d_{lim}}}<1$ for every $l$. 

{\bf Case 3:-} When $\underset{n \rightarrow \infty}{\lim}\log(\alpha)/n=-\infty$,  
one has $\underset{n \rightarrow \infty}{\lim}\log(\rho(n,l))=-\infty$ which in turn implies that $\underset{n \rightarrow \infty}{\lim}\rho(n,l)=0$ for every $l$.  

Note that the coefficient of $\rho(n,l)$ in (\ref{beta_exp}) for $l>1$ is asymptotically $1/a$. Hence, these coefficients decay to zero in the limits $n\rightarrow \infty$ and $0\leq \underset{n\rightarrow \infty}{\lim} \dfrac{p+k_g}{n}<1$. Consequently, only the $\rho(n,1)$ is non zero as $n \rightarrow \infty$. This implies that $\underset{n \rightarrow \infty}{\lim}\Delta_n=1$ for Case 1,  $0<\underset{n \rightarrow \infty}{\lim}\Delta_n=e^{\frac{2\alpha_{lim}}{1-d_{lim}}}<1$ for Case 2 and $\underset{n \rightarrow \infty}{\lim}\Delta_n=0$ for Case 3. This proves Theorem \ref{thm:asymptotic}.

\section*{Appendix E: Proof of Theorem \ref{thm:high_SNR}}
 { Following the description of RRT in TABLE \ref{tab:rrt-gard}, the missed discovery event
$\mathcal{M}=\{card(\mathcal{S}_g/\mathcal{S}_{RRT})>0\}$ occurs if any of these events occurs.\\
a)$\mathcal{M}_1= \{k_{min}=\infty\}$: then any support in the support sequence produced by GARD suffers from missed discovery. \\
b)$\mathcal{M}_2= \{k_{min}\leq k_{max}$ but $k_{RRT}<k_{min}$\}: then the RRT support estimate misses atleast one entry in $\mathcal{S}_g$. \\
Since these two events are disjoint, it follows that $\P(\mathcal{M})=\P(\mathcal{M}_1)+\P(\mathcal{M}_2)$.  By Lemma \ref{lemma_gard}, it is true that $k_{min}=k_g\leq k_{max}$ whenever $\|{\bf w}\|_2\leq \epsilon_{GARD}$. Note that 
\begin{equation}
\mathbb{P}(\mathcal{M}_1^C)\geq \mathbb{P}(k_{min}=k_g)\geq \mathbb{P}(\|{\bf w}\|_2\leq \epsilon_{GARD}).
\end{equation}
Since ${\bf w}\sim \mathcal{N}({\bf 0}_n,\sigma^2{\bf I}^n)$, we have $\|{\bf w}\|_2\overset{P}{\rightarrow}0$ as $\sigma^2\rightarrow 0$. This implies that $\underset{\sigma^2\rightarrow 0}{\lim}\mathbb{P}(\|{\bf w}\|_2<\epsilon_{GARD})=1$ and $\underset{\sigma^2\rightarrow 0}{\lim}\mathbb{P}(\mathcal{M}_1^C)=1$. This implies  that $\underset{\sigma^2\rightarrow 0}{\lim}\mathbb{P} (\mathcal{M}_1)=0$.

 Next we consider the   event $\mathcal{M}_2$. Using the law of total probability, we have 
 \begin{equation}\label{supp:a1}
 \begin{array}{ll}
 \mathbb{P}(\{k_{min}\leq k_{max} \& k_{RRT}< k_{min}\})=\mathbb{P}(k_{min}\leq k_{max})\\
\ \ \ \ \ \ \ \ \ \  -\mathbb{P}(\{k_{min}\leq k_{max} \& k_{RRT}\geq k_{min}\})
 \end{array}
 \end{equation}
 Following Lemma \ref{lemma:gard},  we have $\mathbb{P}(k_{min}\leq k_{max})\geq \mathbb{P}(k_{min}= k_{g})\geq \mathbb{P}(\|{\bf w}\|_2\leq \epsilon_{GARD})$. This implies that $\underset{\sigma^2\rightarrow 0}{\lim}\mathbb{P}(k_{min}\leq k_{max})=1$. Following the proof of Theorem \ref{thm:rrt-gard}, we know that both $k_{min}=k_g$ and $RR(k_g)<\Gamma_{RRT}^{\alpha}(k_g)$ hold true once $\|{\bf w}\|_2\leq\min(\epsilon_{GARD},\epsilon_{RRT})$.  Hence, 
 \begin{equation}
 \begin{array}{ll}
 \mathbb{P}(\{k_{min}\leq k_{max} \& k_{RRT}\geq k_{min}\})\\
 \ \ \ \ \ \ \ \ \ \ \  \geq \mathbb{P}(\|{\bf w}\|_2\leq\min(\epsilon_{GARD},\epsilon_{RRT})).
 \end{array}
 \end{equation}
 This  in turn implies that $\underset{\sigma^2\rightarrow 0}{\lim}\mathbb{P}(\{k_{min}\leq k_{max} \& k_{RRT}\geq k_{min}\})=1$. Applying these two limits  in (\ref{supp:a1}) give $\underset{\sigma^2\rightarrow 0}{\lim}\mathbb{P}(\mathcal{M}_2)=0$.  Since $\underset{\sigma^2\rightarrow 0}{\lim}\P(\mathcal{M}_1)=0$  and $\underset{\sigma^2\rightarrow 0}{\lim}\P(\mathcal{M}_2)=0$, it follows that $\underset{\sigma^2\rightarrow 0}{\lim}\P(\mathcal{M})=0$.

 Following the proof of Theorem \ref{thm:rrt-gard}, one can see that the event $\mathcal{E}^C=\{{\mathcal{S}_{RRT}}=\mathcal{S}\}$ occurs once three events  $\mathcal{A}_1$,  $\mathcal{A}_2$ and  $\mathcal{A}_3$ occurs simultaneously, i.e.,  $\mathbb{P}(\mathcal{E}^C)= \mathbb{P}(\mathcal{A}_1\cap \mathcal{A}_2\cap \mathcal{A}_3)$.  Of these three events, $\mathcal{A}_1\cap \mathcal{A}_2$ occur once $\|{\bf w}\|_2\leq \min(\epsilon_{GARD},\epsilon_{RRT})$.  This implies that
 \begin{equation}
 \underset{\sigma^2\rightarrow 0}{\lim}\mathbb{P}(\mathcal{A}_1\cap \mathcal{A}_2)\geq \underset{\sigma^2\rightarrow 0}{\lim}\mathbb{P}(\|{\bf w}\|_2\leq \min(\epsilon_{GARD},\epsilon_{RRT}))=1.
 \end{equation}
At the same time, by Theorem \ref{thm:Beta},  $\mathbb{P}(\mathcal{A}_3)\geq 1-\alpha, \forall\sigma^2>0$. Hence, it follows that 
\begin{equation}
\underset{\sigma^2\rightarrow 0}{\lim}\mathbb{P}(\mathcal{E}^C)=\underset{\sigma^2\rightarrow 0}{\lim}\mathbb{P}(\mathcal{A}_1\cap \mathcal{A}_2\cap \mathcal{A}_3)\geq 1-\alpha.
\end{equation}
This in turn implies that $\underset{\sigma^2\rightarrow 0}{\lim}\mathbb{P}(\mathcal{E})\leq \alpha$. Since $\P(\mathcal{E})=\P(\mathcal{M})+\P(\mathcal{F})$ and $\underset{\sigma^2\rightarrow 0}{\lim}\P(\mathcal{M})=0$, it follows that $\underset{\sigma^2\rightarrow 0}{\lim}\P(\mathcal{F})\leq \alpha$. Hence proved.}

\bibliography{compressive.bib}
\bibliographystyle{IEEEtran}

\end{document}